\documentclass{article}

\usepackage{arxiv}

\usepackage{amssymb}


\usepackage{hyperref}       
\usepackage{url}            
\usepackage{booktabs}       
\usepackage{amsfonts}       
\usepackage{nicefrac}       
\usepackage{microtype}      
\usepackage{lipsum}
\usepackage{amsmath}
\usepackage{bm}
\usepackage{epsf} 
\usepackage{graphicx}
\usepackage{subfig}
\usepackage{booktabs,dcolumn,caption}
\usepackage{multirow}
\usepackage{float}
\usepackage{soul}
\usepackage{tikz}
\usepackage{physics}
\usepackage{mathdots}
\usepackage{yhmath}
\usepackage{cancel}
\usepackage{color}
\usepackage{array}
\usepackage{multirow}
\usepackage{amssymb}
\usepackage{gensymb}
\usepackage{tabularx}
\usepackage{amsthm}
\usepackage{xcolor}

\definecolor{darkgreen}{RGB}{0,100,0}

\usepackage{authblk}

\title{Breaking the Conventional Forward-Backward Tie in Neural Networks: Activation Functions}

\author[1,2]{Luigi Troiano}
\author[2,3]{Francesco Gissi}
\author[2,3]{Vincenzo Benedetto}
\author[1]{Genny Tortora}

\affil[1]{University of Salerno, Campus UniSA, 84084 Fisciano (SA), Italy}

\affil[2]{Kebula srl, Viale Filanda 3, 84080 Pellezzano (SA), Italy}

\affil[3]{University of Sannio, Viale Traiano, 82100 Benevento (BN), Italy}

\hypersetup{
  pdftitle={Breaking the Conventional Forward-Backward Tie in Neural Networks: Activation Functions},
  pdfauthor={Luigi Troiano, Francesco Gissi, Vincenzo Benedetto, Genny Tortora},
  pdfsubject={cs.LG, cs.NE},
  pdfkeywords={Back-propagation, Activation functions, Neural networks, Deep learning},
}
\begin{document}
\maketitle

\begingroup
\renewcommand\thefootnote{\fnsymbol{footnote}}
\footnotetext[1]{Corresponding author: \texttt{ltroiano@unisa.it}}
\endgroup

\begin{abstract}

{
Gradient-based neural network training traditionally enforces symmetry between forward and backward propagation, requiring activation functions to be differentiable (or sub-differentiable) and strictly monotonic in certain regions to prevent flat gradient areas. This symmetry, linking forward activations closely to backward gradients, significantly restricts the selection of activation functions, particularly excluding those with substantial flat or non-differentiable regions. In this paper, we challenge this assumption through mathematical analysis, demonstrating that precise gradient magnitudes derived from activation functions are largely redundant, provided the gradient direction is preserved. Empirical experiments conducted on foundational architectures—such as Multi-Layer Perceptrons (MLPs), Convolutional Neural Networks (CNNs), and Binary Neural Networks (BNNs)—confirm that relaxing forward-backward symmetry and substituting traditional gradients with simpler or stochastic alternatives does not impair learning and may even enhance training stability and efficiency. We explicitly demonstrate that neural networks with flat or non-differentiable activation functions, such as the Heaviside step function, can be effectively trained, thereby expanding design flexibility and computational efficiency. Further empirical validation with more complex architectures remains a valuable direction for future research.
}

\end{abstract}


\section{Introduction}

Neural network research has consistently prioritized error minimization, evaluating network predictions against reference outputs. Widrow and Lehr's comprehensive review \cite{Widrow90} systematically explores these learning methods, highlighting several fundamental weight updating rules summarized in Table \ref{tab:learning_rules}.

\begin{table}[t!]
\caption{A summary of the most relevant neural models with the associated weight updating rules, both in their scalar and vectorial form.}\label{tab:learning_rules}
\resizebox{\textwidth}{!}{
\begin{tabular}{@{}llll@{}}
\toprule
\textbf{Name} & $F(\bm{W};\bm{X})$ & \textbf{Update Rule (Scalar)} & \textbf{Update Rule (Vectorial)} \\ \midrule
  $\alpha$-LMS\\(Widrow-Hoff delta rule) 
        &  $\mathring{\bm{Y}} = \bm{W}\bm{X}$  
        &  $w + \alpha \frac{(\bm{y}-\bm{W}^T\bm{x})}{|\bm{x}|^2}\bm{x}$
        &  $\bm{W} + \frac{\alpha}{|\bm{X}|^2} (\bm{Y}-\bm{W}\bm{X})\bm{X}^T$ 
  \\
  \rule{0pt}{7ex}
  Rosenblatt $\alpha$-Perceptron 
        & $\mathring{\bm{Y}} = \bm{1}(\bm{W}\bm{X} - \bm{b})$  
        & $w + \alpha \frac{(\bm{y}-\bm{1}(\bm{W}^T\bm{x}-\bm{b}))}{2}\bm{x}$
        & $\bm{W} + \frac{\alpha}{2} \left(\bm{Y}-\bm{1}(\bm{W}\bm{X}-\bm{b})\right)\bm{X}^T$
  \\ 
  \rule{0pt}{7ex}
  Mays's Rule 1
        & $\mathring{\bm{Y}} = \bm{1}(\bm{W}\bm{X} - \bm{b})$  
        & $w +
                \begin{cases}
                    \alpha
                    \frac{ (\bm{y}-\bm{1}(\bm{W}^T\bm{x}-\bm{b}))}{2|\bm{x}|^2}\bm{x} & |\bm{W}^T\bm{x}| \geq \gamma  \\
                    \alpha
                    \frac{\bm{y}}{|\bm{x}|^2}\bm{x} & |\bm{W}^T\bm{x}| < \gamma 
                \end{cases}$
        & $\bm{W} +
                \begin{cases}
                    \frac{\alpha}{2|\bm{X}|^2} (\bm{Y}-\bm{1}(\bm{W}\bm{X}-\bm{b}))\bm{X}^T & |\bm{W}\bm{X}| \geq \gamma  \\
                    \frac{\alpha}{|\bm{X}|^2} \bm{Y}\bm{X}^T & |\bm{W}\bm{X}| < \gamma 
                \end{cases}$
  \\
  \rule{0pt}{7ex}
  Mays's Rule 2
        & $\mathring{\bm{Y}} = \bm{1}(\bm{W}\bm{X} - \bm{b})$  
        & $w +
                \begin{cases}
                    0 
                    & \mathring{\bm{y}} = \bm{y} \wedge |\bm{W}^T\bm{x}| \geq \gamma  \\
                    \alpha                \frac{(\bm{y}-\bm{W}^T\bm{x})}{|\bm{x}|^2}\bm{x} 
                    & otherwise
                \end{cases}$
        & $\bm{W} +
                \begin{cases}
                    0 
                    & \mathring{\bm{Y}} = \bm{Y} \wedge |\bm{W}\bm{X}| \geq \gamma  \\
                    \frac{\alpha}{|\bm{X}|^2} ( \bm{Y} - \bm{W}\bm{X})\bm{X}^T 
                    & otherwise
                \end{cases}$

  \\
  \rule{0pt}{7ex}
  $\mu$-LMS\\(Widrow-Hoff) 
        &  $\mathring{\bm{Y}} = \bm{W}\bm{X}$  
        &  $w - 2\mu(\bm{y}-\bm{W}^T\bm{x})\bm{x}$
        &  $\bm{W} - 2\mu (\bm{Y}-\bm{W}\bm{X})\bm{X}^T$ 
  \\
  \bottomrule
\end{tabular}}
\end{table}

It is essential to highlight the transformative impact that gradient descent has had on the development of neural network training methodologies. Gradient descent has not only streamlined the optimization process but also elegantly generalized the problem. Instead of viewing it as a mere iterative refinement, it reframed the challenge as a systematic quest to pinpoint the minimal loss within the vast and intricate parameter space \( \mathbf{W} \). This conceptual shift, interpreting optimization as a systematic journey through a high-dimensional landscape, has fundamentally shaped modern optimization techniques. Furthermore, it has enhanced our understanding of the intricate dynamics within neural network training landscapes, laying the groundwork for increasingly advanced and efficient training strategies, and setting the stage for increasingly sophisticated approaches.

Given a loss function \(\mathcal{L}\) as a function of the model's parameters \(\textbf{w}\) (i.e., the neural network's weights), the update rule at iteration \( t \) is
\begin{equation}\label{eq:deriv-update}
    \textbf{w}[t+1] = \textbf{w}[t] - \eta \frac{\partial \mathcal{L}(\textbf{w}[t])}{\partial \textbf{w}[t]}
\end{equation}
by moving against the loss gradient, guiding the parameters toward optimal solutions.

Here, \( t \) is the iteration index and \(\eta\) is the learning rate, dictating the step size in the gradient's opposing direction. Each iteration refines the weights by moving against the loss gradient, guiding the parameters toward optimal solutions, given the error hypersurface's convexity and continuity (i.e., differentiability or subdifferentiability). The iterative nature ensures progressive weight adjustments, reducing the loss and approaching the optimal set.

Historically, gradient descent has been a cornerstone optimization technique in computational research, with Cauchy's 1847 work \cite{cauchy1847} laying foundational groundwork. His iterative method in addressing numerical solutions became a foundational reference for many algorithms that followed. Fast-forwarding a century, Robbins and Monro \cite{robbins1951} in 1951 presented a stochastic variation of the gradient descent technique, effectively managing non-deterministic functions through stochastic approximation. This stochastic approach was particularly adept at handling non-deterministic functions, introducing an element of randomness to enable better convergence in diverse scenarios.

Entering the domain of artificial neural networks, Widrow and Hoff's \cite{widrow1960} Delta Rule stood out as one of the first instances where gradient-based methods were employed for learning. This rule essentially adjusted the weights of the network in the direction that would minimize the error, bridging the gap between traditional optimization techniques and neural network training.

The true paradigm shift in training multi-layer neural networks, however, came with the backpropagation algorithm, developed by Rumelhart, Hinton, and Williams \cite{rumelhart1986}. This algorithm, by efficiently computing gradients for multi-layer architectures, heralded the modern era of deep learning.

As architectures grew in depth and complexity, new optimization hurdles emerged. One such challenge was the pathological curvature, where specific regions in the optimization landscape made learning either painfully slow or unstable. This issue, deeply intertwined with the curvature of the loss landscape, brought to light another significant challenge: the gradient vanishing or explosion problem \cite{gradientexplosion}. The depth of networks, coupled with certain activation functions, could lead to gradients that either diminish to near-zero values or explode, obstructing stable and meaningful weight updates.

In the realm of optimizers, Stochastic Gradient Descent (SGD) \cite{sgd} stands out as a foundational algorithm. When enhanced with momentum calculations \cite{gradmomentum}, SGD provides a sophisticated approach to updating parameters based on their gradients during each training iteration. To address the inherent challenges of gradient-based optimization, the field has witnessed innovations on multiple fronts. This includes the advent of novel weight initialization strategies, the introduction of activation functions like ReLU that mitigate issues associated with small derivatives, and groundbreaking work on residual networks \cite{residualnets} that enable the training of even deeper architectures.

{
Although significant research attention has recently focused on diverse applications and architectures of neural networks (e.g., \cite{10534683,10628468,10870866}), foundational aspects of their training processes remain critically important and merit further investigation.

Several optimization techniques have been proposed to navigate these challenges. Notably, second-order approximations using Hessian-Free methods and the application of the inverse Fisher information matrix through Kronecker-factored approximations \cite{HFree, kroneckerfact} were introduced, and projection-type steepest descent neural networks specifically designed for nonsmooth optimization \cite{10.1016/j.neucom.2017.01.010} have been introduced.} The Adam optimizer by Kingma and Ba \cite{kingma2014adam} combined the strengths of AdaGrad \cite{duchi2011adaptive} and RMSProp \cite{tieleman2012lecture}. Other significant contributions include Nesterov's accelerated gradient approach \cite{nesterov1983method} and Zeiler's Adadelta \cite{zeiler2012adadelta}.

However, conventional gradient-based neural network training enforces a strict symmetry between forward propagation and backward propagation, inherently tying gradient computation directly to activation functions. This enforced symmetry significantly contributes to common training challenges such as vanishing or exploding gradients, neuron saturation \cite{neuron_saturation}, convergence to local minima \cite{local_minima}, and problematic curvature of the loss landscape \cite{HFree}. Although numerous strategies have been proposed to mitigate these issues \cite{gradcentralization, kingma2014adam, gradmomentum, weightnormalization}, these solutions generally remain constrained by conventional gradient symmetry.

In this work, we introduce a novel perspective by explicitly questioning the necessity of forward-backward symmetry in neural network training. Specifically, we address two fundamental questions:
\begin{itemize}
\item Is the standard construction of the back-propagation algorithm fundamentally required for effective neural network training?
\item Can we decouple the constraints linking forward activations and backward gradient computations?
\end{itemize}

{
The central aim of this manuscript is to critically evaluate, through mathematical analysis, the conventional symmetry assumption between forward and backward propagation in neural network training. Empirical experiments serve to validate these theoretical insights, rather than constituting standalone conclusions. In this paper, we focus on analyzing the theoretical feasibility of decoupling forward activation functions from backward gradient calculations. For this explicit purpose, simpler architectures (MLP, LeNet5) are intentionally selected to ensure clarity, mathematical rigor, and reproducibility. Although the theoretical results presented here hold in principle for a variety of neural network architectures, extending these insights practically and theoretically to more complex architectures, such as Transformers or graph neural networks, would require further theoretical development and empirical validation, explicitly identified as future work.
}

{
Our theoretical and experimental investigations reveal that the exact gradient magnitude derived from the activation function is not crucial for effective learning. Instead, our findings indicate that replacing the gradient magnitude obtained from the activation function with a magnitude derived from a different function, or even stochastic values, can yield comparable performance compared to traditional backpropagation. The objective of this study is not to critique or replace the gradient descent method but rather to explore and understand its fundamental properties within the context of neural network training:
\begin{enumerate}
    \item The strict coupling between the forward activation and backward gradient paths can be relaxed without impairing the network training.
    \item Although activation functions traditionally influence the magnitude of the gradient, the gradient's direction is solely determined by the linear connections between neurons, and plays a more critical role in the learning process.
    \item Decoupling forward and backward computations allows for innovative applications, such as effectively training binary neural networks with non-differentiable activation functions like the Heaviside step function.
\end{enumerate}

The symmetry-breaking approach presented here has several noteworthy consequences that provide novel insights into neural network training dynamics and may offer new possibilities for addressing longstanding training challenges:
\begin{itemize}
\item \textbf{Reduced Sensitivity to Gradient Magnitude}: Traditional backpropagation relies directly on activation function derivatives, often causing issues such as vanishing or exploding gradients. Altering the source of gradient magnitude can help mitigate these issues, potentially facilitating more stable convergence.

\item \textbf{Enhanced Stability and Flexibility}: By decoupling gradient magnitude computation, it becomes feasible to employ activation functions with substantial flat or non-differentiable regions, such as the Heaviside step function. This consequence increases flexibility in selecting activation functions and could improve robustness against neuron saturation.

\item \textbf{Prioritization of Gradient Direction over Magnitude}: Neural network training effectiveness significantly depends on gradient direction, largely determined by linear neuron connections. By defining gradient magnitude independently from activation derivatives, training processes may prioritize directional information, possibly streamlining and improving training efficiency.

\item \textbf{Improved Computational Efficiency}: Decoupling gradient magnitudes typically reduces computational complexity, potentially enabling faster convergence and decreased memory requirements.

\item \textbf{Expanded Scope for Novel Architectures and Applications}: Breaking forward-backward symmetry facilitates the training of architectures previously considered impractical, such as binary neural networks (BNNs) utilizing step activation functions, thereby extending the scope of feasible neural network designs. \end{itemize}

These outcomes highlight the potential implications of symmetry-breaking, encouraging further research into specialized gradient magnitude techniques tailored to specific architectures, investigation of neural network structures previously restricted by conventional training paradigms, and exploration of strategies aimed at enhancing the scalability and efficiency of neural networks for diverse applications, far beyond the scope of this work.
}

The remainder of this paper is structured as follows: Section 2 presents the theoretical foundations underpinning our study. In Section 3, we provide a rigorous mathematical analysis of forward-backward decoupling, explicitly demonstrating the feasibility of relaxing the traditional symmetry assumptions. Section 4 describes experimental evidence that validates these theoretical findings. Section 5 presents an illustrative application example, specifically focusing on Binary Neural Networks (BNNs). Finally, Section 6 offers concluding remarks and highlights promising avenues for further exploration.

{
\section{Theoretical Foundations}
}

Training neural networks, irrespective of their depth, presents challenges in its theoretical framework. The primary objective is to identify a parameter set that ensures reliable network inference, predicated on preliminary training using benchmark data. At the heart of the gradient descent method is the premise that networks are initially set up with arbitrary parameters. These parameters are successively adjusted over iterations \(t\), in the direction opposing the gradient of the target function. This iterative adjustment ensures that the loss function methodically approaches its optimal value. The learning rate, denoted by \(\eta\), dictates the magnitude of the adjustment during each iteration and, consequently, influences the iteration count required to achieve the optimal value. Building on the foundational principles of neural network training, a comprehensive review of the learning process grounded in back propagation is essential to understand the experimental results detailed in this paper.

For this study's context, we postulate the presence of an undisclosed binary relationship between two variables, \(\textbf{x}\) and \(\bm{y}\), expressed as \( \bm{y} \mathcal{R} \bm{x} \). These variables are encapsulated within appropriate representational domains. To simplify, we designate \( \bm{y} \in \mathbb{R}^p \) and \( \bm{x} \in \mathbb{R}^q \), with \(p, q \in \mathbb{N}\). The primary role of a neural network is to emulate the relation \(\mathcal{R}\) via a model \( \mathring{\bm{y}} = F(\bm{W}, \bm{x}) \), where the parameters are housed in \(\bm{W}\), representing the weights of the network's linear combinators. Given a dataset \( \mathcal{D} = [\bm{Y}|\bm{X}] \) - an empirical representation of the relation \( \bm{y} \mathcal{R} \bm{x} \) - our anticipation is for the network's output, \( \mathring{\bm{Y}} = F(\bm{W}; \bm{X}) \), to closely mirror the target \( \bm{y} \mathcal{R} \bm{x} \) for every \( \bm{x} \in \bm{X} \). Fundamentally, this encapsulates the aim of any training protocol, especially in the context of supervised learning.\footnote{Given this study's purview, our focus is confined to supervised learning.} Thus, the loss function \( \mathcal{L} \) we seek to refine should encapsulate the disparity between the network's output \( \mathring{\bm{Y}} \) and the target \( \bm{Y} \) across dataset \( \mathcal{D} \). Given that \( \mathring{\bm{Y}} \) is derived as a function \( F \) of the parameters \( \bm{W} \), it is logical to adopt \( \mathcal{L}(\bm{W}) \) without venturing into inconsistencies.

In light of the above, Eq.\eqref{eq:deriv-update} can be reformulated as:
\begin{equation}\label{eq:grad-update}
    \bm{W}[t+1] = \bm{W}[t] - \eta \nabla \mathcal{L}(\bm{W}[t])
\end{equation}

Given that the network is trained on a data subset — essentially a finite segment of the actual data — the iterative method denoted by Eq.\eqref{eq:grad-update} is more appropriately referred to as \emph{stochastic gradient descent} (SGD). Under this scenario, the loss function is proximated by an estimate achieved by evaluating the gradient in relation to the \(i\)-th observation and subsequently averaging it across numerous observations:
\begin{equation}
   \hat{\mathcal{L}}(\bm{W}) = E[\mathcal{L}(\bm{W};\bm{X})] = \frac{1}{|\bm{X}|} \sum_{\bm{x}\in \bm{X}} \mathcal{L}(\bm{W};\bm{X})
\end{equation}
where \( |\bm{X}| \) refers to the cardinality, i.e., the number of items, of \(\bm{X}\). Consequently, we can express:
\begin{equation}
    \bm{W}[t+1] = \bm{W}[t] - \eta \nabla \hat{\mathcal{L}}(\bm{W}[t])\ = \bm{W}[t] - \eta\nabla \mathcal{L}(\bm{W}[t])|_{\bm{X}},
\end{equation}
Here, the term \(\nabla \mathcal{L}(\bm{W}[t])|_{\bm{X}}\) is delineated as:
\begin{equation}
    \left. \nabla \mathcal{L}(\bm{W}[t]) \right|_{\bm{X}} = \frac{1}{|\bm{X}|} \sum_{\bm{x}\in \bm{X}} \nabla \mathcal{L}(\bm{W}[t];\bm{x}).    
\end{equation}
The variable \(t\) quantifies the function evaluations across the entire dataset \(\mathcal{D}\), referred to as \emph{epochs}. 

Partitioning the dataset into segments for separate processing each epoch is commonly referred to as \emph{batching}. When the batch size is \(|\mathcal{D}|\), iteration and epoch counts align. Typically, the dataset is divided into \(d\) segments, \([\bm{Y}_1|\bm{X}_1], \ldots, [\bm{Y}_d|\bm{X}_d]\), often of size \(m\) such that \(m \times d = |\mathcal{D}|\). This facilitates systematic SGD application on each segment, termed \emph{mini-batching}. Consequently, Eq.\eqref{eq:grad-update} becomes:
\begin{equation}\label{eq:minibatch-updata}
    \bm{W}[t,k+1] = \bm{W}[t,k] - \eta \nabla \hat{\mathcal{L}}(\bm{W}[t,k]) = \bm{W}[t,k] - \eta \left. \nabla \mathcal{L}(\bm{W}[t]) \right|_{\bm{X}} 
\end{equation}
With \(k\) spanning 1 to \(d\), note that at \(k=d\), we increment to the next epoch, i.e., \(t \rightarrow t+1\) and reset \(k=1\). In the extreme mini-batching case (\(m=1\)), parameter updates are sample-wise, often referred to as the \emph{stochastic mode}.

With the parameter update mechanism now delineated—be it for epoch or mini-batch, contingent on the present step's weights—we can streamline the notation for clarity, sidestepping the overt reliance on variables \(t\) and \(k\) in Eq.\eqref{eq:grad-update}. Hence, we represent:
\begin{equation}\label{eq:update-notime}
    \bm{W} = \bm{W} - \eta \left. \nabla_{\bm{W}} \mathcal{L} \right|_{\bm{X}}.    
\end{equation}

In today's era of deep learning, envisioning neural networks as amalgamations of multiple building blocks has become the norm. While various methodologies exist to arrange these blocks, an overwhelmingly prevalent and simplistic design incorporates a series of identical, repeating blocks, as portrayed in Fig.\ref{fig:neural-blocks}. For clarity and simplicity, our discussion will concentrate on this multi-layer configuration. Nevertheless, the insights and conclusions derived here can seamlessly be extrapolated to alternative network designs.

\begin{figure}[!ht]
\centering
\resizebox{\textwidth}{!}{
\includegraphics[page=1,width=\textwidth]{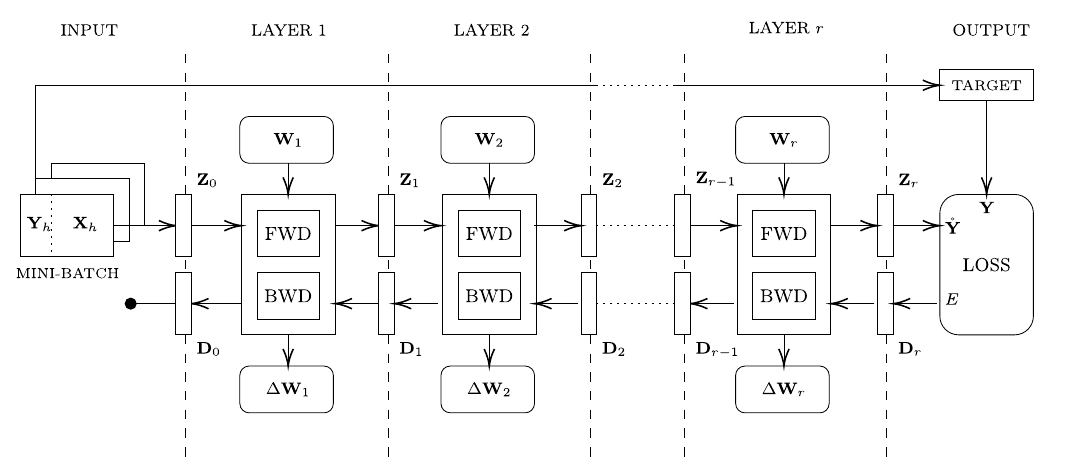}
}
\caption{Block diagram illustrating the training process of a multi-layer neural network made up of blocks. The arrows, progressing from left to right, signify the network's forward pass during training. Conversely, the backward pass retraces in the opposite direction.}\label{fig:neural-blocks} 
\end{figure}

The figure underscores the bi-directional (forward and backward) process during a single training iteration. The mini-batch \( [\bm{Y}_h|\bm{X}_h] \) serves as the network's input, where \( \bm{Y}_h \) signifies the anticipated network output, and \( \bm{X}_h \) is channeled into the first layer via an encoding variable, \( \bm{Z}_0 \). Specifically, for a network with \( r \) layers (excluding the input layer), the output for each layer is determined, T
throughout the forward propagation, by:
\begin{equation}
    \bm{Z}_l = G_l(\bm{W}_l,\bm{Z}_{l-1}) \quad \text{for} \quad l=1\dots r  
\end{equation}
In this equation, \( l \) refers to the current layer, ranging from the first (1) up to the \( r^{th} \) layer. Hence, the value of \( r \) explicitly signifies the total number of layers in the neural network, excluding the initial input layer. Hence, the correction of the weights at layer \( l \) is given by
\begin{equation}
    \Delta \bm{W}_l = -\eta \left. \nabla_{\bm{W}_l} \mathcal{L} \right|_{\bm{X}}.
\end{equation}

Here, \( G_l \) symbolizes the input/output relationship facilitated by the \( l \)-th layer, parametrically contingent on \( \bm{W}_l \). The output from every layer is retained in \( \bm{Z}_l \) due to its utility during backward propagation. The network's final layer's output is perceived as \( \mathring{\bm{Y}} \), serving as the loss function's input. This function then computes a deviation value, \( E=\nabla_{\mathring{\bm{Y}}} \mathcal{L} \), contrasting the forward computation's outcome with the target \( \bm{Y} \). This deviation then initiates the backward pass, denoted as \( \bm{D}_r = E \). For every block, potential parameter alterations are deduced based on the preceding block's output:
\begin{equation}\label{eq:W_update}
\Delta \bm{W}_l = - \eta \bm{D}_{l} \odot \nabla_{\bm{W}_l} G_l(\bm{W}_l,\bm{Z}_{l-1}),
\end{equation}
where \( \nabla_{\bm{W}_l} G_l(\bm{W}_l,\bm{Z}_{l-1}) \) is the Jacobian matrix of the function \( G_l \) with respect to the weights \( \bm{W}_l \) at layer \( l \) and has dimensions \( n_l \times n_{l-1} \). The Hadamard product \( \odot \) means that each element of \( \bm{D}_{l} \), having dimensions \( n_l \times 1 \), multiplies a corresponding row in the Jacobian matrix. Thus, \( \Delta \bm{W}_l \) has dimensions \( n_l \times n_{l-1} \), as expected.

Subsequently, the error is propagated back by
\begin{equation}\label{eq:D_update}
\bm{D}_{l-1} = \bm{D}_{l} \nabla_{\bm{Z}_{l-1}} G_l(\bm{W}_l,\bm{Z}_{l-1})
\end{equation}
where:
\begin{itemize}
\item[--] \(\bm{D}_{l-1}\) represents the error term for layer \(l-1\). Its dimension corresponds to the number of neurons in that layer, \(n_{l-1}\), making it a column vector with dimensions \([n_{l-1} \times 1]\).

\item[--] The term \(\nabla_{\bm{Z}_{l-1}} G_l(\bm{W}_l,\bm{Z}_{l-1})\) is the Jacobian matrix representing the gradient of function \(G_l\) with respect to the activations of layer \(l-1\). Given that layer \(l-1\) has \(n_{l-1}\) neurons and layer \(l\) has \(n_l\) neurons, this Jacobian matrix takes the dimension \([n_l \times n_{l-1}]\).

\item[--] \(\bm{D}_{l}\) is the error term for layer \(l\). With \(n_l\) neurons in layer \(l\), its dimensions are \([n_l \times 1]\).
\end{itemize}
Combining these elements, the multiplication of the Jacobian matrix (of size \([n_l \times n_{l-1}]\)) with the error term \(\bm{D}_{l}\) (of size \([n_l \times 1]\)) results in a column vector of dimensions \([n_{l-1} \times 1]\). This matches the expected size for the error term \(\bm{D}_{l-1}\). The backward propagation cycle concludes upon reaching the terminal block. 

The gradient descent method encounters challenges when the error hypersurface lacks continuity or convexity, resulting in potential issues such as becoming trapped in local minima, oscillating across valleys, stagnating on plateaus, and deviating from optimal regions.Several strategies have been introduced to mitigate these issues, including adaptive learning rates, incorporation of momentum, weight normalization, and modifications to activation functions.

Examining Eqq. \eqref{eq:W_update} and \eqref{eq:D_update}, we can discern that the performance and convergence of SGD are influenced by three fundamental elements: \emph{activations}, denoted as \( \bm{Z}_{l-1} \), \emph{weights}, represented by \( \bm{W}_l \), and \emph{gradients} of both activations and weights, symbolized as \( \nabla_{\bm{Z}_{l-1}} \) and \( \nabla_{\bm{W}_l} \) respectively. Thus, in delving into the complexities of SGD within the context of deep neural networks, the emphasis has largely been on tailoring these elements to enhance the efficacy of SGD. These interventions are briefly presented below.

\subsection{Activations}

Normalization methods play a critical role in regulating activations \( \bm{Z}_{l-1} \) as they propagate through neural network layers, significantly enhancing the efficacy of training. Various normalization strategies have been developed to manage challenges arising from the internal variability of activations. One influential approach involves controlling the Lipschitz constant of the loss function, thus promoting a smoother optimization landscape. The Lipschitz constant quantifies how sensitively a function responds to input variations; reducing it increases regularity, simplifies optimization, accelerates convergence, and diminishes the risk of becoming trapped in poor local minima.

Normalization methods generally follow the mathematical form:
\begin{equation}
\label{eq:activation_normalization}
    \tilde{\bm{Z}} = \gamma \cdot \frac{\bm{Z}-\mu (\bm{Z})}{\sqrt{\sigma^2(\bm{Z}) + \epsilon} } +\beta,
\end{equation}
where \( \mu(\bm{Z}) \) and \( \sigma^2(\bm{Z}) \) represent the mean and variance of activations computed along specified axes. The small constant \(\epsilon\) ensures numerical stability by preventing division by zero. Parameters \(\gamma\) (scale) and \(\beta\) (shift) are typically learnable, providing adaptive flexibility during training.

Below, we summarize several prominent normalization techniques.

\vspace{3mm}
\textbf{Batch Normalization (BN)}~\cite{ioffe2015batch} standardizes activations using statistics computed from each mini-batch, reducing internal covariate shift and accelerating convergence. Specifically, BN computes the mean and variance (\(\mu(\bm{Z})\), \(\sigma^2(\bm{Z})\)) across the mini-batch dimension. Parameters \(\gamma\) and \(\beta\) are learnable, typically initialized to \(\gamma=1\), \(\beta=0\). BN stabilizes activation distributions, smoothing the optimization landscape~\cite{bnSmooth}. However, its reliance on batch statistics makes it less effective for small batches common in certain computer vision tasks.

\vspace{3mm}
\textbf{Instance Normalization (IN)}~\cite{instancenormalization}, similar to BN, differs primarily by computing normalization statistics individually per sample across its feature dimensions. Formally, IN calculates \(\mu(\bm{Z})\) and \(\sigma^2(\bm{Z})\) independently per instance. An important variant, \textit{Adaptive Instance Normalization (AdaIN)}~\cite{adaIN}, adapts parameters to reflect style-specific statistics. Given a style tensor \(\overline{\bm{Z}}\), AdaIN defines \(\gamma = \sigma(\overline{\bm{Z}})\), \(\beta = \mu(\overline{\bm{Z}})\), aligning activations with stylistic attributes, and proving particularly effective in style-transfer applications.

\vspace{3mm}
\textbf{Layer Normalization (LN)}~\cite{layernormalization}, an alternative to BN, addresses scenarios sensitive to varying or small batch sizes. Unlike BN, LN normalizes independently across feature dimensions within each individual sample. Specifically, LN computes \(\mu(\bm{Z})\), \(\sigma^2(\bm{Z})\) across the feature dimensions of each sample. This characteristic makes LN advantageous in sequential data processing such as recurrent neural networks (RNNs) and tasks involving variable-length sequences common in natural language processing (NLP).

\vspace{3mm}
\textbf{Group Normalization (GN)}~\cite{groupnormalization} strikes a balance between instance-level and layer-level normalization, ensuring robust performance irrespective of batch size. GN partitions channels into \(G\) groups, normalizing each independently, computing statistics \(\mu_g(\bm{Z})\) and \(\sigma^2_g(\bm{Z})\) over spatial and grouped channel dimensions. This intermediate grouping mitigates limitations associated with small batch sizes and provides stability, making GN suitable for a wide range of applications.

\subsection{Weights}

Several re-parameterization techniques have emerged to optimize neural network weights effectively---fundamental for stable and efficient training---including Weight Normalization (WN), Weight Standardization (WS), and Weight Centralization (WC). Each method targets the weight distribution, facilitating smoother optimization landscapes, faster convergence, and enhanced stability.

\vspace{3mm}

\textbf{Weight Normalization (WN)}~\cite{weightnormalization} re-parameterizes weights by explicitly decoupling their magnitude and direction. Given an original weight tensor \(\bm{W}\), WN produces a normalized weight tensor \(\tilde{\bm{W}}\) through:
\begin{equation}
    \tilde{\bm{W}} = g \cdot \frac{\bm{W}}{\|\bm{W}\|},
\end{equation}
where \(\|\bm{W}\|\) denotes the Euclidean norm (magnitude) of the tensor, and \(g\) is a scalar parameter controlling its scale. By separating the scale from the direction explicitly, WN encourages smoother optimization dynamics, potentially accelerating training convergence and enhancing model stability.

\vspace{3mm}

\textbf{Weight Standardization (WS)}~\cite{weightstandardization} applies normalization directly to weights within layers, standardizing them to a consistent scale. Specifically, WS transforms each weight tensor \(\bm{W}\) as follows:
\begin{equation}
    \tilde{\bm{W}} = \frac{\bm{W}-\mu(\bm{W})}{\sigma(\bm{W})+\epsilon},
\end{equation}
where \(\mu(\bm{W})\) and \(\sigma(\bm{W})\) represent the mean and standard deviation of the weight tensor, respectively, and \(\epsilon\) is a small constant for numerical stability. WS smooths the optimization landscape by controlling the Lipschitz constant of the loss function, similarly to Batch Normalization, but acting directly on network weights rather than activations.

\vspace{3mm}

\textbf{Weight Centralization (WC)}~\cite{weightcentralization} explicitly enforces a zero-mean constraint on weight tensors. Given the original weight tensor \(\bm{W}\), WC computes a centralized tensor \(\tilde{\bm{W}}\) through:
\begin{equation}
    \tilde{\bm{W}} = \bm{W}-\mu(\bm{W}),
\end{equation}
where \(\mu(\bm{W})\) denotes the mean of the tensor \(\bm{W}\). By enforcing a zero-mean distribution, WC stabilizes weight updates, improves training dynamics, and often accelerates convergence. Importantly, WC can be generalized further by adjusting weights to specific mean or norm constraints, offering enhanced flexibility for diverse network architectures and tasks.

\subsection{Gradients}

Various gradient manipulation techniques have been developed to address challenges such as vanishing or exploding gradients, enhance stability, accelerate convergence, improve generalization, and facilitate escaping from poor local minima. Indeed, effectively managing gradients (\(\nabla_{\bm{Z}_{l-1}}\) and \(\nabla_{\bm{W}_l}\), collectively denoted as \(\nabla\)) is essential for the efficient training of neural networks, particularly within the framework of Stochastic Gradient Descent (SGD).

\vspace{3mm}
\textbf{Gradient Momentum}~\cite{gradmomentum, sutskever2013importance} mitigates oscillations and accelerates convergence by combining the current gradient with a fraction of the previous gradient update. Formally, momentum updates gradients as follows:
\begin{equation}
    \tilde{\nabla}[t] = \beta \cdot \tilde{\nabla}[t-1] + (1-\beta) \cdot \nabla \mathcal{L},
\end{equation}
where \(\tilde{\nabla}[t]\) is the momentum-adjusted gradient at time step \(t\), \(\beta\) is typically close to 1 (e.g., 0.9), and \(\nabla \mathcal{L}\) denotes the loss gradient. By accumulating velocity in consistent directions, momentum significantly enhances optimization dynamics, particularly in high-dimensional spaces.

\vspace{3mm}
\textbf{Gradient Clipping}~\cite{pascanu2013difficulty} directly addresses the exploding gradient issue by limiting gradient magnitude. Gradients exceeding a certain threshold \(\theta\) are scaled back proportionally:
\begin{equation}
    \tilde{\nabla} = \frac{\nabla}{\max\left(1, \frac{\lVert \nabla \rVert}{\theta}\right)},
\end{equation}
where \(\lVert \nabla \rVert\) represents the gradient norm. Properly chosen clipping thresholds stabilize training without excessively constraining beneficial updates, especially crucial in recurrent neural network training.

\vspace{3mm}
\textbf{Gradient Scaling}~\cite{nvidia_mixed_precision} counters vanishing gradients by explicitly scaling gradient magnitudes:
\begin{equation}
    \tilde{\nabla} = s \cdot \nabla,
\end{equation}
where \(s\) is a scalar factor. Appropriate gradient scaling ensures meaningful gradient updates, frequently employed together with gradient clipping to balance stability and convergence dynamics.

\vspace{3mm}
\textbf{Gradient Noise Addition}~\cite{neelakantan2015adding} introduces controlled randomness into gradient computations, improving robustness and generalization by reducing overfitting. The gradient with added noise is defined as:
\begin{equation}
    \tilde{\nabla} = \nabla + \zeta,
\end{equation}
where \(\zeta\) is typically Gaussian-distributed noise. Carefully tuned noise intensity balances exploration of the optimization landscape with training stability.

\vspace{3mm}
\textbf{Gradient Dropout} introduces stochastic regularization to gradient computations, analogous to dropout in activations. It randomly sets gradient components to zero:
\begin{equation}
    \tilde{\nabla} = \nabla \odot \mathbf{M},
\end{equation}
where \(\mathbf{M}\) is a binary mask sampled from a Bernoulli distribution with retention probability \(p\). Gradient dropout reduces reliance on specific paths or neurons, enhancing generalization and robustness.

\vspace{3mm}
\textbf{Gradient Centralization (GC)}~\cite{gradcentralization} explicitly imposes a zero-mean constraint on gradients, stabilizing training dynamics. For a gradient matrix \(\nabla\), centralized gradients \(\tilde{\nabla}\) are computed as:
\begin{equation}
    \tilde{\nabla}_{ij} = \nabla_{ij} - \frac{1}{c}\sum_{j=1}^{c}\nabla_{ij},
\end{equation}
where \(c\) represents the input dimension for each neuron or filter. GC mitigates neuron correlations, improves generalization, and reduces overfitting.

Empirical evidence consistently supports the efficacy of these gradient manipulation techniques. Importantly, all these methods influence gradient magnitudes directly or indirectly during backpropagation. Motivated by this unified perspective, in the subsequent section, we propose explicitly decoupling gradient adjustments from forward activations, rigorously demonstrating that gradient magnitude adjustments need not remain strictly coupled to forward computations.

{
\section{Mathematical Analysis of Forward-Backward Decoupling}

We now present the core theoretical propositions underpinning our analysis, accompanied by their detailed proofs.


\theoremstyle{definition}
\newtheorem{proposition}{Proposition}

\begin{proposition}[Gradient Direction Dominance]
The direction of weight updates during neural network training is independent of the activation function derivative and is predominantly determined by linear neuron connections.
\end{proposition}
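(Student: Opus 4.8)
The plan is first to specialise the generic building block of Fig.~\ref{fig:neural-blocks} to the standard form $G_l(\bm{W}_l,\bm{Z}_{l-1}) = \phi_l(\bm{W}_l\bm{Z}_{l-1})$, with the scalar activation $\phi_l$ applied component-wise, and then to track precisely where its derivative $\phi_l'$ enters the correction $\Delta\bm{W}_l$. Writing $\bm{a}_l = \bm{W}_l\bm{Z}_{l-1}$ for the pre-activation, the two Jacobians appearing in Eqq.~\eqref{eq:W_update} and~\eqref{eq:D_update} are $\bigl(\nabla_{\bm{W}_l}G_l\bigr)_{ij} = \phi_l'(a_{l,i})\,Z_{l-1,j}$ and $\nabla_{\bm{Z}_{l-1}}G_l = \mathrm{diag}\bigl(\phi_l'(\bm{a}_l)\bigr)\,\bm{W}_l$. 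Substituting the first into Eq.~\eqref{eq:W_update}, the correction of the incoming weight vector of neuron $i$ is $\Delta\bm{W}_{l,i\cdot} = -\eta\,D_{l,i}\,\phi_l'(a_{l,i})\,\bm{Z}_{l-1}^{T}$; setting $\lambda_{l,i} := -\eta\,D_{l,i}\,\phi_l'(a_{l,i})$ and stacking these scalars into $\bm{\lambda}_l$,
\begin{equation}
\Delta\bm{W}_{l,i\cdot} \;=\; \lambda_{l,i}\,\bm{Z}_{l-1}^{T},
\qquad
\Delta\bm{W}_l \;=\; \bm{\lambda}_l\,\bm{Z}_{l-1}^{T},
\end{equation}
a rank-one outer product for a single sample and a sum of such terms over a mini-batch.

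The proposition is then read off from the geometry of this expression. The line along which $\Delta\bm{W}_{l,i\cdot}$ points is $\mathrm{span}(\bm{Z}_{l-1})$, which contains no occurrence of $\phi_l'$ at all; the activation derivative enters only through the scalar $\lambda_{l,i}$, where it fixes the step length and (for non-monotone $\phi_l$) possibly a sign, but never the axis of the update. Since $\bm{Z}_{l-1}$ is exactly the vector delivered to layer $l$ by the linear combinators of the preceding layers during the forward pass, the direction of the weight correction is governed by the linear neuron connections and is insensitive to $\phi_l'$. For a mini-batch the row space of $\Delta\bm{W}_l$ is $\mathrm{span}\{\bm{Z}_{l-1}(\bm{x})\}_{\bm{x}}$, again a purely forward-signal object independent of every $\phi_k'$.

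It remains to locate the activation derivatives inside the residual factor $\bm{\lambda}_l$, that is, inside $\bm{D}_l$. Unrolling Eq.~\eqref{eq:D_update} with the second Jacobian above, starting from $\bm{D}_r = E = \nabla_{\mathring{\bm{Y}}}\mathcal{L}$,
\begin{equation}
\bm{D}_l \;=\; \bm{W}_{l+1}^{T}\,\mathrm{diag}\!\bigl(\phi_{l+1}'(\bm{a}_{l+1})\bigr)\,\bm{W}_{l+2}^{T}\,\mathrm{diag}\!\bigl(\phi_{l+2}'(\bm{a}_{l+2})\bigr)\cdots\bm{W}_{r}^{T}\,\mathrm{diag}\!\bigl(\phi_{r}'(\bm{a}_{r})\bigr)\,E .
\end{equation}
Every activation derivative in the network thus appears only as an entry-wise diagonal rescaling interleaved between the transposed weight matrices $\bm{W}_k^{T}$ and the error signal $E$; replacing each $\phi_k'$ by any strictly positive function preserves the sign pattern of $\bm{D}_l$, hence of the whole update. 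The ``shape'' of the correction is therefore set by $E$ together with the linear maps $\bm{W}_{l+1},\dots,\bm{W}_r$, while the $\phi_k'$ merely reweight magnitudes---which is the ``predominantly determined by linear neuron connections'' half of the statement and the formal hook for the decoupling results that follow.

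The main obstacle I anticipate is stating the claim at the right level of generality. The fully rigorous, exact version is the per-neuron (row-space) statement, where independence from $\phi_l'$ is literal; for the assembled matrix $\Delta\bm{W}_l$ the derivatives do perturb the left factor $\bm{\lambda}_l$, so ``independent of the activation derivative'' must be read as ``the update lies along the forward-signal directions, the $\phi'$ entering only through positive entry-wise scalings that leave the sign structure untouched.'' Some care is also needed where $\phi_l$ is non-differentiable or has $\phi_l'\equiv 0$ on an interval (the Heaviside case), where the derivation runs with a subgradient or surrogate in place of $\phi_l'$; this is not a gap but precisely the regime exploited by the subsequent propositions.
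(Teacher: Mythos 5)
Your proof is correct and reaches the same conclusion, but it attacks the statement from a different end than the paper does. The paper's proof works entirely on the backward Jacobian, expanding $\nabla_{\bm{Z}_{l-1}} G_l(\bm{W}_l,\bm{Z}_{l-1}) = \bm{W}_l^\top \odot \varphi'_l(\bm{W}_l\bm{Z}_{l-1})$ and arguing that, since $\varphi'_l$ enters only as an element-wise multiplier, the direction of the propagated error is set by $\bm{W}_l^\top$; the direction of the weight update itself is never computed inside the proof (that observation, $\hat{u}_w = \bm{Z}_0/\lVert\bm{Z}_0\rVert$, appears only later in the SUC discussion). You instead compute both Jacobians, substitute into Eq.~\eqref{eq:W_update}, and exhibit $\Delta\bm{W}_l = \bm{\lambda}_l\bm{Z}_{l-1}^{T}$ as a rank-one outer product whose row space is $\mathrm{span}(\bm{Z}_{l-1})$ --- which proves the literal claim about ``the direction of weight updates'' directly and exactly, rather than by proxy through $\bm{D}_{l-1}$. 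Your unrolling of $\bm{D}_l$ as alternating $\bm{W}_k^\top$ and $\mathrm{diag}(\varphi_k')$ factors, and your explicit caveat that for the assembled matrix the derivatives do perturb the left factor $\bm{\lambda}_l$, are both more careful than what the paper states; the paper silently elides the fact that a per-row positive rescaling followed by multiplication with $\bm{W}_k^\top$ can change the sign pattern of the result across layers. The one place you inherit that same looseness is the sentence claiming that replacing each $\varphi_k'$ by a positive function ``preserves the sign pattern of $\bm{D}_l$'': that holds only for the scaling applied at the final step before $\bm{D}_l$ is formed, not through the whole product, since the intervening $\bm{W}_k^\top$ mix components --- but this is exactly the gap the paper's own Propositions~1 and~3 leave open, so your proof is no weaker than the original and is, on the weight-update half of the claim, strictly more precise.
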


\begin{proof}
Recall the weight update rule previously introduced in Eq.~\eqref{eq:W_update}:
\begin{equation*}
\Delta \bm{W}_l = - \eta \bm{D}_{l} \odot \nabla_{\bm{W}_l} G_l(\bm{W}_l,\bm{Z}_{l-1}),
\end{equation*}
and the backward gradient propagation equation from Eq.~\eqref{eq:D_update}:
\begin{equation*}
\bm{D}_{l-1} = \bm{D}_{l} \nabla_{\bm{Z}_{l-1}} G_l(\bm{W}_l,\bm{Z}_{l-1}).
\end{equation*}

Expanding explicitly for typical neural network configurations:
\begin{equation}
\nabla_{\bm{Z}_{l-1}} G_l(\bm{W}_l,\bm{Z}_{l-1}) = \bm{W}_l^\top \odot \varphi'_l(\bm{W}_l \bm{Z}_{l-1}),
\end{equation}
where \(\varphi'_l(\cdot)\) denotes the element-wise activation derivative at layer \(l\).

This expression reveals clearly that the directionality of gradients is dictated entirely by the linear transformation \(\bm{W}_l^\top\). The term \(\varphi'_l(\bm{W}_l \bm{Z}_{l-1})\), representing the derivative of the activation function, serves solely as an element-wise scalar multiplier. Hence, the derivative of the activation function does not alter the fundamental gradient direction, only its magnitude.

Since the directionality of gradient updates relies exclusively on the linear transformation defined by neuron connections \(\bm{W}_l^\top\), it underscores the dominant role of these linear components in determining the gradient update directions.

We conclude that gradient direction independence from activation derivatives and linear component dominance are fundamentally interconnected aspects of neural network training dynamics. Thus, gradient direction is dominated by linear neuron connections rather than activation function derivatives.
\end{proof}

\begin{proposition}[Generality of Gradient Direction Dominance]
Gradient direction dominance applies generally to neural network architectures, including CNNs, RNNs, transformers, and graph-based architectures.
\end{proposition}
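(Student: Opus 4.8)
The plan is to reduce the general statement to the mechanism already established in Proposition~1 by exhibiting, for each architecture class, a canonical factorization of every layer map $G_l$ into a \emph{parameter-carrying linear (or affine) operator} $A_l$ acting on the incoming activations, followed by operations that are either element-wise nonlinearities $\varphi_l$ or fixed, activation-independent structural maps (pooling, normalization, attention re-weighting). Writing $G_l(\bm{W}_l,\bm{Z}_{l-1}) = \varphi_l\!\left(A_l(\bm{W}_l)\,\bm{Z}_{l-1}\right)$ in this form, the chain rule gives
\begin{equation*}
\nabla_{\bm{Z}_{l-1}} G_l(\bm{W}_l,\bm{Z}_{l-1}) = \mathrm{diag}\!\big(\varphi'_l(\cdot)\big)\, A_l(\bm{W}_l),
\end{equation*}
so that $\bm{D}_{l-1} = \bm{D}_l\,\mathrm{diag}\!\big(\varphi'_l(\cdot)\big)\,A_l(\bm{W}_l)$: the directional content is carried by the linear operator $A_l$, while $\varphi'_l$ enters only as a per-coordinate rescaling, exactly as in Proposition~1. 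The whole argument therefore collapses to checking this factorization, architecture by architecture.

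Second, I would dispatch the cases where the factorization is immediate. For CNNs, a convolution is a linear map with a structured (Toeplitz/circulant, weight-shared) matrix, hence an admissible $A_l$; average pooling is linear, and max-pooling contributes a $0/1$ selection Jacobian that depends on the data but never on $\varphi'_l$. For RNNs, unrolling in time (backpropagation through time) expresses the hidden-state Jacobian as a product of recurrent weight matrices interleaved with diagonal activation-derivative matrices, i.e. iterated instances of the same template. For message-passing GNNs, neighbourhood aggregation (sum or mean) and adjacency propagation are linear operators, and the learnable message/update transformations are again linear maps followed by element-wise nonlinearities; the graph structure merely fixes the sparsity pattern of $A_l$. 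In each case the update direction is governed by the (possibly structured or sparse) linear connectivity, with $\varphi'_l$ acting as a diagonal multiplier.

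Third --- and this is where the real work and the main obstacle lie --- I would treat transformers. The token-wise feed-forward sublayers fit the MLP template verbatim, and the query/key/value and output projections of self-attention are linear, hence admissible $A_l$. The difficulty is that softmax over attention scores and the normalization layers (LayerNorm, RMSNorm) are \emph{not} element-wise: their Jacobians are dense and genuinely mix coordinates, so the clean diagonal-rescaling picture breaks. The resolution I would argue for is a careful separation of roles: these maps are fixed, essentially parameter-free normalizations that contain \emph{no activation-function derivative} $\varphi'$, so they belong to the activation-independent structural part of $A_l$ rather than to the tunable activation; moreover the attention weights themselves act as data-dependent but activation-derivative-free linear mixing coefficients on the value vectors. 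Under this reading, the backpropagated error in a transformer is still a composition of activation-independent linear/structural operators with diagonal $\varphi'$ factors, so the magnitude contributed by activation derivatives remains redundant. The delicate point a full proof must pin down is precisely this demarcation between ``activation function'' and ``structural operator,'' and the claim should be read modulo those fixed structural maps.

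Finally, I would assemble the cases: across MLPs, CNNs, RNNs, transformers and GNNs, the factor $\nabla_{\bm{Z}_{l-1}} G_l$ always splits as an activation-independent linear/structural operator times a diagonal matrix of activation derivatives, so replacing $\varphi'_l$ by an arbitrary strictly positive multiplier --- or by a stochastic surrogate, as in the Heaviside case where $\varphi'_l \equiv 0$ almost everywhere --- preserves the per-coordinate sign of every backpropagated error term, and hence the update direction, which is fixed by the linear neuron connections. I expect the transformer case (softmax and normalization) to be the principal obstacle; the CNN, RNN and GNN cases should be routine once the factorization template of Proposition~1 is in place.
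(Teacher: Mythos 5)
Your proposal follows the same core route as the paper's proof: both arguments rest on factoring the layer Jacobian $\nabla_{\bm{Z}_{l-1}} G_l$ into an activation-independent linear operator (the paper's $\mathbf{J}_l^\top$, your $A_l(\bm{W}_l)$) composed with an element-wise multiplication by the activation derivative, and then observing that the diagonal factor rescales coordinates without supplying directional information. The difference is one of rigor rather than strategy. The paper simply asserts the factorization $\nabla_{\bm{Z}_{l-1}} G_l = \mathbf{J}_l^\top \odot \varphi'_l(\bm{Z}_{l-1})$ holds for all the listed architectures, relegating the per-architecture justification to a footnote naming the relevant linear operator in each case. You actually carry out the verification architecture by architecture, and in doing so you surface the one point where the paper's one-line claim is genuinely fragile: softmax and LayerNorm in transformers have dense, coordinate-mixing Jacobians, so the statement is only true modulo a deliberate bookkeeping decision to classify those maps as ``structural operators'' rather than as activations. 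The paper makes that decision implicitly (via the phrase ``attention-weighted combinations''); you make it explicit and identify it as the delicate step a full proof must pin down, which is a strict improvement. Two small cautions: first, your closing sentence slides from ``preserves the per-coordinate sign'' to ``hence the update direction'' --- a non-uniform positive diagonal preserves the sign pattern (orthant) of a vector but not its direction as a unit vector, a looseness the paper's proof shares (its claim that $\varphi'_l$ scales components ``without altering the overall direction'' is only literally true when $\varphi'_l$ is constant across components); second, in the Heaviside case $\varphi'_l \equiv 0$ almost everywhere, so the surrogate does not \emph{preserve} a sign so much as \emph{create} one where the true derivative would annihilate the signal --- which is the intended use, but should not be phrased as preservation.
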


\begin{proof}
Consider the backward gradient propagation at a generic network layer \( l-1 \), as previously defined in Eq.~\eqref{eq:D_update}:
\begin{equation*}
\bm{D}_{l-1} = \bm{D}_{l}\nabla_{\bm{Z}_{l-1}} G_l(\bm{W}_l,\bm{Z}_{l-1}),
\end{equation*}
where \(\bm{D}_{l}\) is the propagated gradient at layer \( l \), and \(G_l\) is the generic layer transformation.

For general neural network architectures—such as CNNs, RNNs, transformers, and graph-based networks—the gradient transformation \(\nabla_{\bm{Z}_{l-1}} G_l(\bm{W}_l,\bm{Z}_{l-1})\) can be represented as a linear transformation (via Jacobian matrices or linear operators specific to the given architecture)\footnote{The gradient propagation step involves the Jacobian or another linear operator specific to the network architecture (e.g., weight matrix transpose for fully connected layers, convolution transpose for CNNs, recurrence matrices for RNNs, attention-weighted combinations for Transformers, and linear aggregations for GNNs.
}, followed by an element-wise multiplication with the activation derivatives. Formally, this can be expressed as:
\begin{equation}
\nabla_{\bm{Z}_{l-1}} G_l(\bm{W}_l,\bm{Z}_{l-1}) = \mathbf{J}_l^\top \odot \varphi'_l(\bm{Z}_{l-1}),
\end{equation}
where \(\mathbf{J}_l\) denotes the linear Jacobian or equivalent linear operator defining the gradient propagation specific to the architecture under consideration, and \(\varphi'_l(\bm{Z}_{l-1})\) is the element-wise derivative of the activation function at layer \(l-1\).

Since the activation derivative \(\varphi'_l(\bm{Z}_{l-1})\) scales each component individually without altering the overall direction, it follows directly that:
\begin{equation*}
\mathrm{direction}(\bm{D}_{l-1}) = \mathrm{direction}\left(\mathbf{J}_l^\top \bm{D}_{l}\right).
\end{equation*}

Thus, the dominance of gradient direction by linear transformations is universally applicable across diverse neural network architectures, including CNNs, RNNs, transformers, and graph-based networks.
\end{proof}

\begin{proposition}[Feasibility of Forward-Backward Decoupling]
Breaking the conventional symmetry between forward activation functions and backward gradients does not hinder neural network learning.
\end{proposition}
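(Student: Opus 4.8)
The strategy is to reduce the proposition to a statement about preconditioned (stochastic) gradient descent, leaning on Propositions~1 and~2. First I would formalize the decoupled scheme: in Eqs.~\eqref{eq:W_update} and~\eqref{eq:D_update}, replace the activation derivative \(\varphi'_l\) consistently by a chosen surrogate \(\psi_l\) with strictly positive entries (e.g.\ \(\psi_l\equiv 1\), \(\psi_l=|\varphi'_l|\), or a positive random gain). When \(\psi_l\) is a function of the pre-activation \(\bm{W}_l\bm{Z}_{l-1}\), one observes that the decoupled updates are \emph{exactly} the gradients obtained by running ordinary backpropagation on a surrogate network \(\widetilde{F}\) whose layer maps \(\widetilde{G}_l\) share the linear part \(\bm{W}_l\) of \(G_l\) — this is precisely where Propositions~1 and~2 enter, since the linear operator \(\mathbf{J}_l\) is left untouched — but have activations \(\widetilde\varphi_l\) with \(\widetilde\varphi'_l=\psi_l\). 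Hence \(\widetilde{\Delta\bm{W}}=-\eta\,\nabla_{\bm{W}}\widetilde{\mathcal{L}}\), i.e.\ the decoupled rule is exact gradient descent on a companion objective \(\widetilde{\mathcal{L}}\); when \(\psi_l\) is stochastic or does not depend on the pre-activation, the same identity holds in expectation, with \(\widetilde{\Delta\bm{W}}\) differing from \(-\eta\,\nabla_{\bm{W}}\widetilde{\mathcal{L}}\) only by zero-mean noise.

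Second, I would establish that following \(-\nabla_{\bm{W}}\widetilde{\mathcal{L}}\) does not impair minimization of the true forward loss \(\mathcal{L}\). For a single adjustable layer — and, more generally, whenever the per-layer ratios \(\psi_l/\varphi'_l\) stay bounded away from \(0\) and \(\infty\) — the decoupled update is a coordinate-wise positive rescaling of \(\nabla_{\bm{W}}\mathcal{L}\), so that \(\langle \widetilde{\Delta\bm{W}},\,-\nabla_{\bm{W}}\mathcal{L}\rangle > 0\) whenever \(\nabla_{\bm{W}}\mathcal{L}\neq 0\); it is therefore a genuine descent direction, and the standard descent lemma — or, in the mini-batch regime, the Robbins--Monro conditions on \(\eta\) together with bounded variance of \(\psi\) — yields convergence to a stationary point of \(\mathcal{L}\) at the usual rate, up to the condition number of the effective preconditioner. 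This already subsumes the magnitude-only gradient transformations reviewed in Section~2 (momentum, clipping, scaling, noise, dropout, centralization) as special cases, which is the precise sense in which breaking the symmetry does not hinder learning.

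Third — and this is the part I expect to be the main obstacle — I would handle genuinely flat or non-differentiable forward activations such as the Heaviside step, for which \(\varphi'_l\equiv 0\) almost everywhere, so that \(\nabla_{\bm{W}}\mathcal{L}\) is identically zero and conventional backpropagation carries no information at all. Here the decoupled scheme with a positive surrogate \(\psi_l\) (the straight-through choice \(\psi_l\equiv 1\), or \(\widetilde\varphi_l\) a logistic/hard-sigmoid) still produces a nonzero, well-defined update equal to \(-\eta\,\nabla_{\bm{W}}\widetilde{\mathcal{L}}\), and the claim becomes: minimizing the smooth surrogate loss \(\widetilde{\mathcal{L}}\) also reduces the discrete forward loss \(\mathcal{L}\). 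I would argue this via a monotone coupling between \(\mathcal{L}\) and \(\widetilde{\mathcal{L}}\): since \(\widetilde{F}\) and \(F\) share every linear map and agree in the sign of each neuron's pre-activation relative to its threshold, decreasing \(\widetilde{\mathcal{L}}\) drives the pre-activations toward the correct side of their thresholds and hence cannot increase — and typically decreases — the step-activation loss. Making this coupling quantitative in the presence of hidden layers, where the correct sign pattern of hidden pre-activations is not prescribed by the data, is the genuine difficulty; consistent with the methodology stated in the Introduction, I would therefore complement the clean theoretical core above with the empirical validation of Sections~4 and~5 rather than claim an unconditional quantitative bound for arbitrary depth and arbitrary surrogates.
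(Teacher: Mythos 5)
Your route is genuinely different from the paper's, and considerably more ambitious. The paper's own proof is two steps: it writes the untied rule $\bm{D}_{l-1}^{untied}=(\mathbf{J}_l)^\top\bm{D}_l\odot g(\bm{Z}_{l-1})$ with $g>0$, observes that this preserves the component-wise sign of $(\mathbf{J}_l)^\top\bm{D}_l$, and then concludes by invoking the premise (carried over from Propositions~1 and~2) that learning depends on gradient direction rather than magnitude. It constructs no surrogate objective, proves no descent property, and does not treat the Heaviside case inside the proof; the heavy lifting is deferred to Sections~4 and~5, exactly as you do in your third step. So in the deep/non-differentiable regime you and the paper end up in the same place, while in the shallow regime you attempt to prove strictly more.

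That extra ambition is where the gaps are. First, your claim that the decoupled update equals exact gradient descent on a companion loss $\widetilde{\mathcal{L}}$ of a surrogate network $\widetilde{F}$ with $\widetilde{\varphi}'_l=\psi_l$ fails for depth greater than one: backpropagation through $\widetilde{F}$ would also use $\widetilde{\varphi}_l$ in the \emph{forward} pass, so the activations $\bm{Z}_l$ fed to subsequent layers (and appearing as $\bm{Z}_{l-1}$ in the weight update of Eq.~\eqref{eq:W_update}) would differ from those of $F$. You would need $\widetilde{\varphi}_l=\varphi_l$ pointwise while $\widetilde{\varphi}'_l=\psi_l\neq\varphi'_l$, which is impossible; the decoupled update field is generically non-conservative and is not the gradient of any objective, which is precisely why straight-through-type schemes are hard to analyze. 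Second, your descent-direction argument over-reaches in the multi-layer case: a bounded per-layer ratio $\psi_l/\varphi'_l$ does \emph{not} make the decoupled update a coordinate-wise positive rescaling of $\nabla_{\bm{W}}\mathcal{L}$, because the diagonal rescaling $\mathrm{diag}(\psi_l)$ does not commute with the interleaved linear maps $\mathbf{J}_l^\top$ in Eq.~\eqref{eq:D_update}; the backpropagated error at a hidden layer is a sum over paths weighted by different products of the ratios, and individual components can change sign relative to the true gradient, breaking $\langle\widetilde{\Delta\bm{W}},-\nabla_{\bm{W}}\mathcal{L}\rangle>0$. Your argument is rigorous only for the last adjustable layer (or a single-layer network), which you partially concede. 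If you restrict your quantitative claims to that case and present the rest as the same directional heuristic the paper uses, your proof becomes a strengthening of the paper's rather than a flawed generalization of it.
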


\begin{proof}
Consider the backward gradient propagation rule previously defined in Eq.~\eqref{eq:D_update}. In an untied backward configuration, we redefine the gradient at layer \( l-1 \) as:
\begin{equation}
\bm{D}_{l-1}^{untied} = \left(\mathbf{J}_{l}\right)^\top \bm{D}_{l} \odot g(\bm{Z}_{l-1}), \quad \text{with} \quad g(\bm{Z}_{l-1}) > 0,
\end{equation}
where \(\left(\mathbf{J}_{l}\right)^\top\) denotes the linear Jacobian or equivalent linear operator associated with layer \( l \), and \( g(\bm{Z}_{l-1}) \) is an element-wise, strictly positive function that replaces the standard activation derivative.

The directional consistency remains intact because the element-wise function \( g(\bm{Z}_{l-1}) \) is strictly positive. Thus, for each component \( j \), we have:
\begin{equation*}
\mathrm{sign}\left((\bm{D}_{l-1}^{untied})_j\right) = \mathrm{sign}\left(\left[(\mathbf{J}_{l})^\top \bm{D}_{l}\right]_j\right).
\end{equation*}

Because successful learning in gradient-based optimization predominantly depends on maintaining correct gradient directions rather than precise magnitudes, replacing the conventional activation derivatives with an alternative, strictly positive function \( g(\bm{Z}_{l-1}) \) does not impair the learning process. Consequently, the theoretical feasibility of forward-backward decoupling is established.
\end{proof}

Building upon the theoretical propositions established above, we now rigorously interpret Eq. \eqref{eq:update-notime}, specifically Eqs. \eqref{eq:W_update} and \eqref{eq:D_update}. To do so effectively, it is essential to explicitly analyze the structural properties governing gradient propagation in neural networks. We thus examine a multi-layer perceptron (MLP) as a foundational architecture. Although simpler compared to contemporary neural network architectures, the MLP effectively captures the essential gradient dynamics at the heart of our theoretical discussion.
}

Considering the explicit form of the function \( G_l(\bm{W}_l, \bm{Z}_{l-1}) \):

\begin{equation}
G_l(\bm{W}_l, \bm{Z}_{l-1}) = f_l(\bm{W}_l \cdot [\bm{Z}_{l-1}; 1]),
\label{eq:MLP_activation}
\end{equation}

we see the implications of our theoretical propositions clearly. Activation functions \( f_l \) are typically selected for their monotonicity and continuity, though exceptions such as ReLU remain effective despite non-differentiability at certain points.

Geometrically, each neuron in a network defines a hyperplane that partitions the input space into two half-spaces. Explicitly expanding this definition, we obtain the general equation describing the hyperplane associated with neuron $h$ at layer $l$:
\begin{equation}
\sum_{i=1}^{n_{l-1}} W_{l,h i} Z_{l-1,i} + W_{l,h 0} = 0,
\end{equation}
where $n_{l-1}$ denotes the number of neurons in the preceding layer, $Z_{l-1,i}$ represents the activation from neuron $i$ at layer $l-1$, and $W_{l,h 0}$ is the bias term associated with neuron $h$ at layer $l$. This hyperplane is explicitly characterized by its unit normal vector:
\begin{equation}
\hat{\bm{W}}_{l,h} = \frac{\bm{W}_{l,h}}{\|\bm{W}_{l,h}\|},
\end{equation}
which determines its orientation, and its orthogonal distance from the origin given by:
\begin{equation}
\vartheta_0 = \frac{|W_{l,h 0}|}{\|\bm{W}_{l,h}\|}.
\end{equation}

Given fixed weights $W_{l,h i}$, varying only the bias $W_{l,h 0}$ yields a family of parallel hyperplanes whose orientation is determined solely by $\hat{\bm{W}}_{l,h}$, thus by $W_{l,h i},\, i \geq 1$. Each hyperplane defined by the equation:
\begin{equation}
\sum_{i=1}^{n_{l-1}} W_{l,h i} Z_{l-1,i} + W_{l,h 0} = c,
\end{equation}
corresponds to a unique linear output $c$. The activation function then maps this linear output $c$ to an activation value $f(c)$. For any given input point $\bm{Z}_{l-1}$, the neuron calculates this linear output, precisely identifying the hyperplane associated with that linear output. The activation function subsequently produces the neuron's final output.

Weight updates performed through backpropagation (as described in Eq.~\eqref{eq:W_update}) explicitly induce two primary geometric transformations of these hyperplanes:

\begin{enumerate}
    \item \textbf{Translation}: Primarily driven by bias updates ($W_{l,h0}$), shifting the hyperplane along its normal direction without changing its orientation.

    \item \textbf{Rotation}: Caused by updates to weights excluding biases ($W_{l,h i},\, i \geq 1$), altering the orientation of the hyperplane around the origin.
\end{enumerate}

Thus, through weight adjustments, neural networks dynamically reposition and reorient these hyperplanes, effectively adapting decision boundaries during the learning process.

\begin{figure}[!t]
    \centering
    \subfloat[3D visualization showing rotations and translations of hyperplanes due to weight adjustments.]{
        \includegraphics[width=0.49\textwidth]{./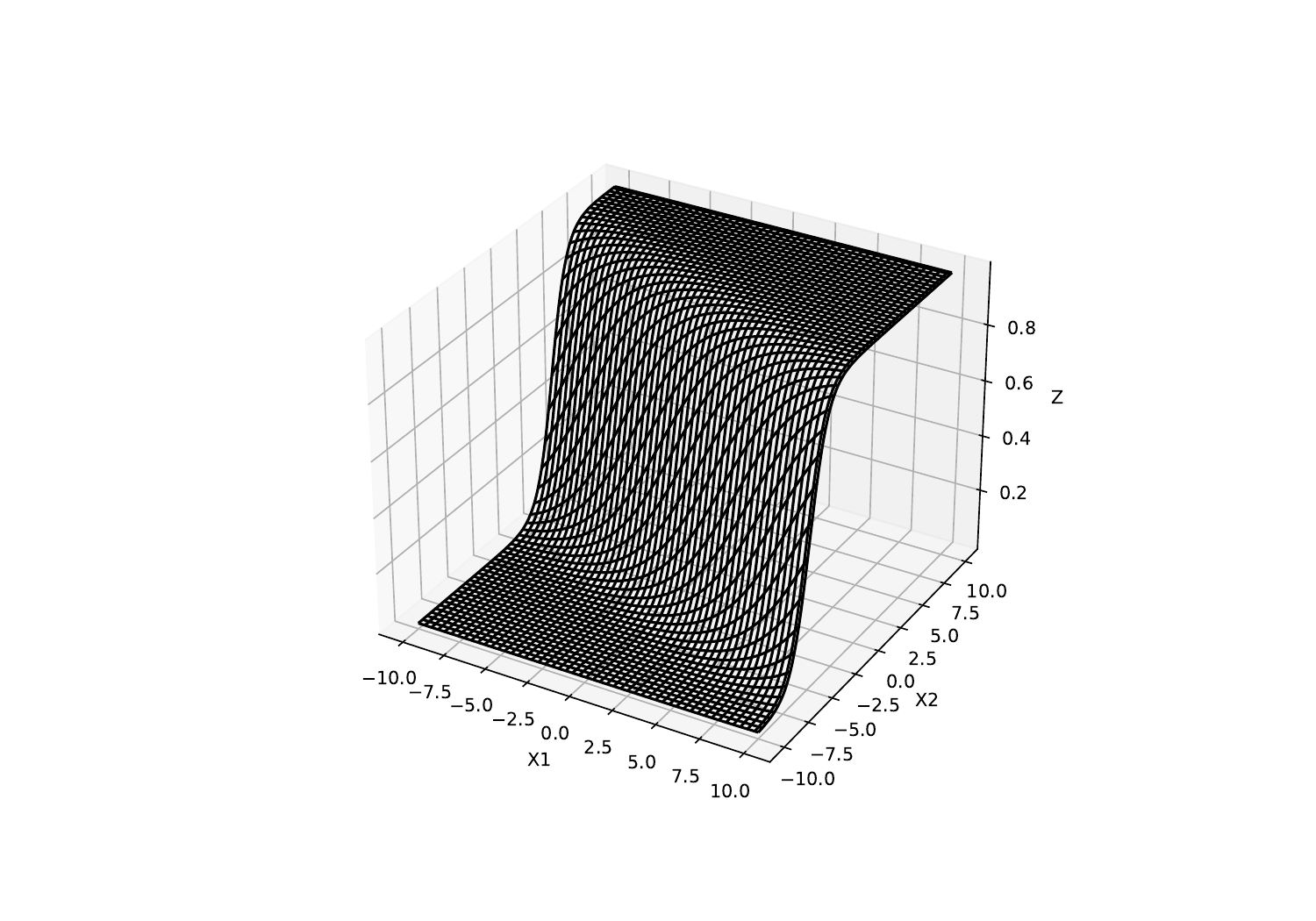}
        \includegraphics[width=0.49\textwidth]{./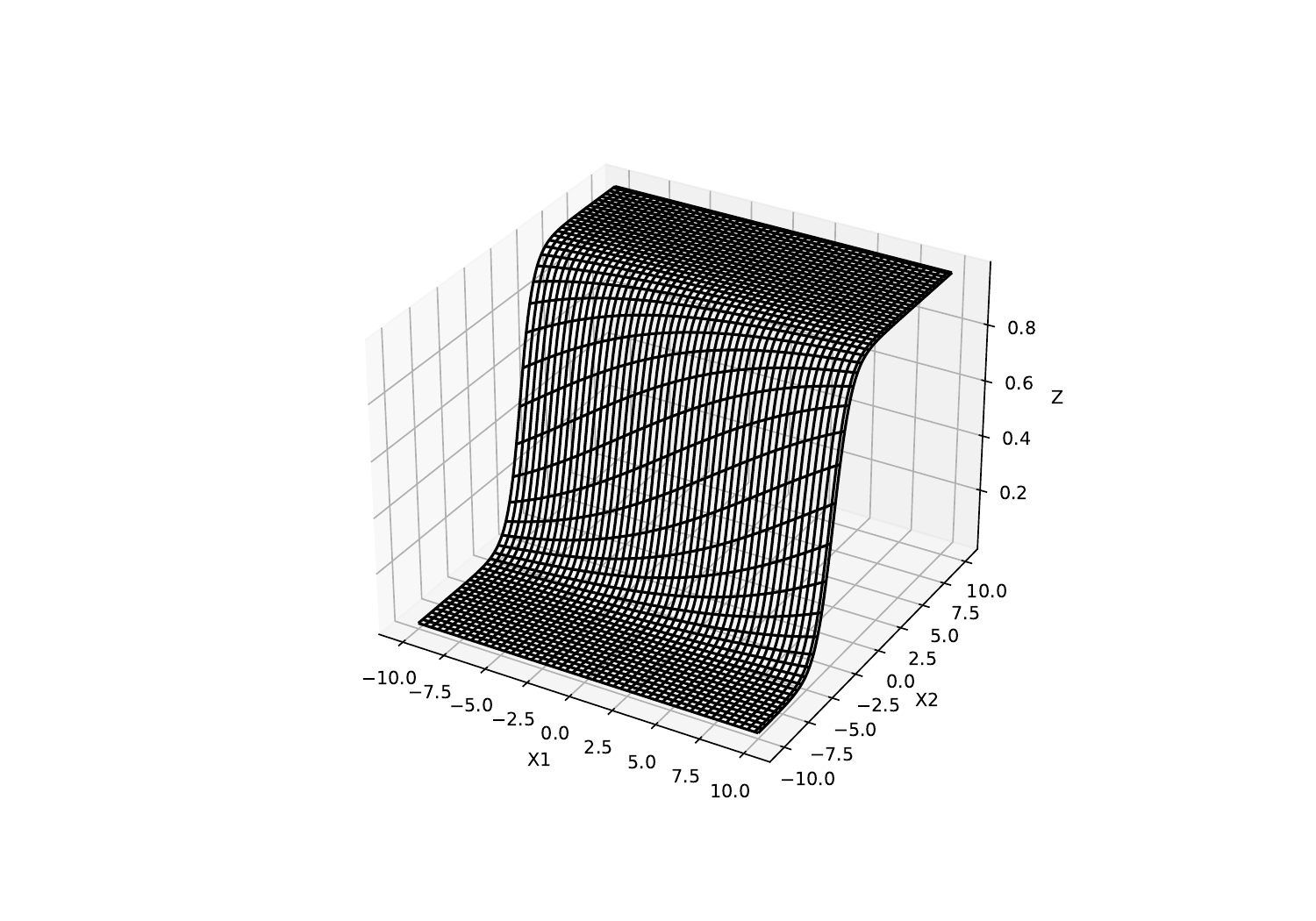}
        \label{fig:3d_comparison}
    }
    \vspace{1mm}
    \subfloat[2D visualization clearly demonstrating hyperplane adjustments from original (left) to updated weights (right).]{
        \includegraphics[width=0.45\textwidth]{./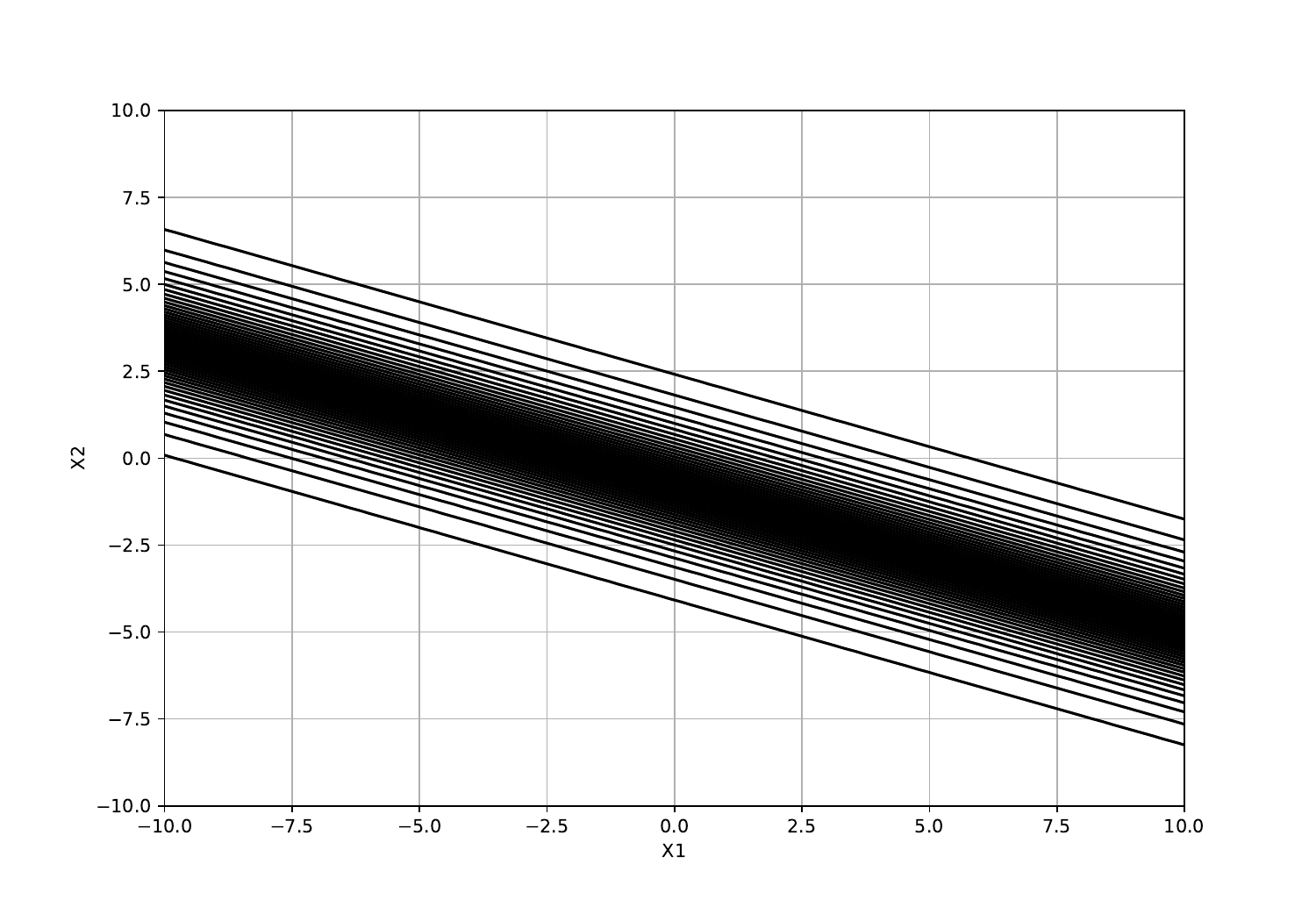}
        \hspace{1cm}
        \includegraphics[width=0.45\textwidth]{./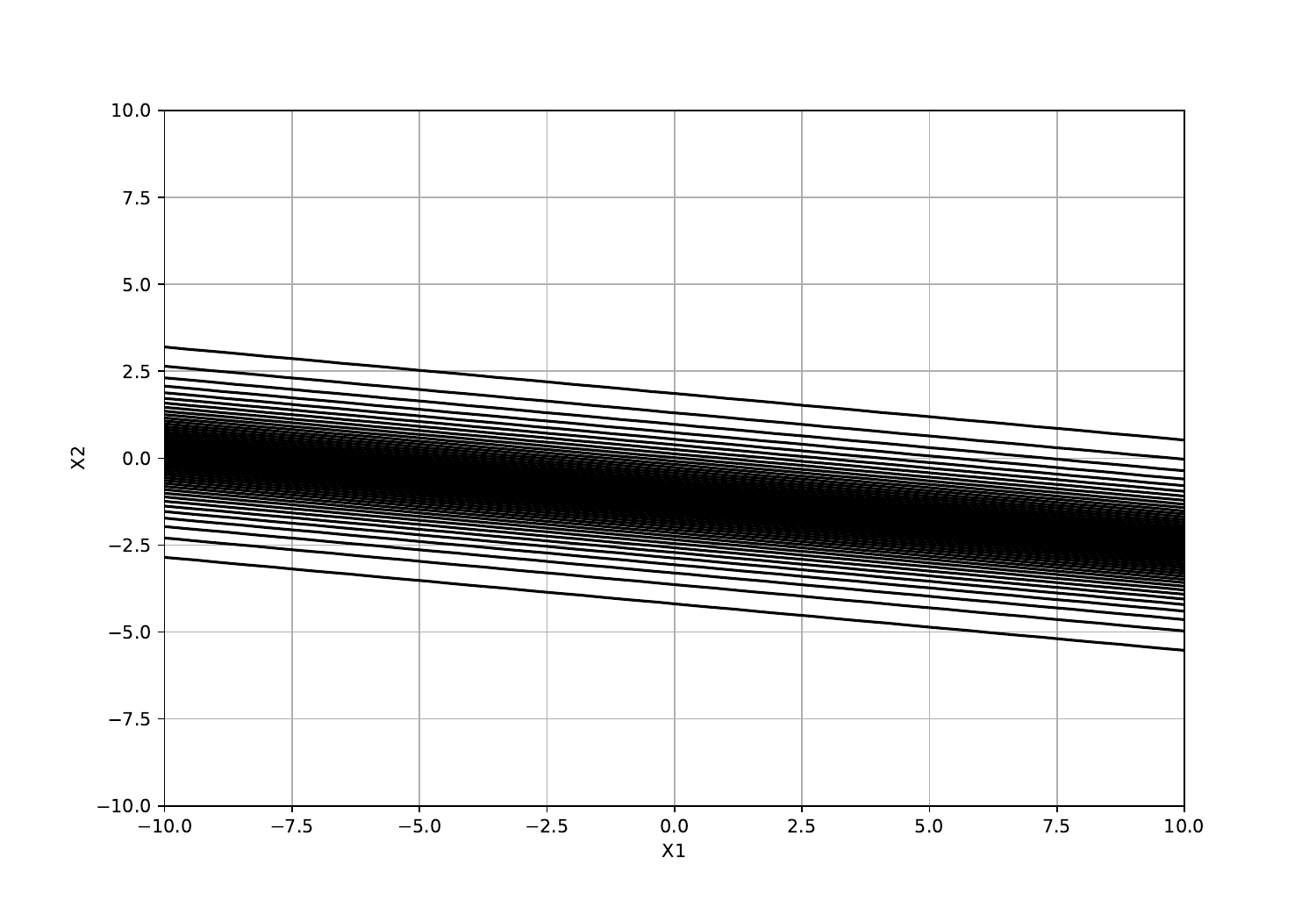}
        \label{fig:2d_comparison}
    }

    \caption{Illustration of weight updates' impact on hyperplane configurations.}
    \label{fig:weight_adjustment_visualization}
\end{figure}

Figure~\ref{fig:weight_adjustment_visualization} visually demonstrates these transformations. Specifically, the left panels depict original activation functions and hyperplanes, whereas the right panels illustrate the consequences of weight updates—rotations and translations that reshape decision boundaries.

These visualizations explicitly demonstrate the theoretical insights from Proposition 1: while the magnitude of updates depends on gradients and errors, the activation vector from the preceding layer exclusively determines their directionality.

To further elucidate and substantiate the theoretical propositions established above, we first analyze these concepts within the context of a simplified, yet representative neural architecture.: the Single Unit Classifier (SUC), depicted in Fig.~\ref{fig:SUC}. The forward and backward flows characterizing the training process of the SUC are clearly illustrated in Fig.~\ref{fig:SUC-process}, allowing us to concretely examine the practical impact of decoupling forward activation functions from backward gradient computations.

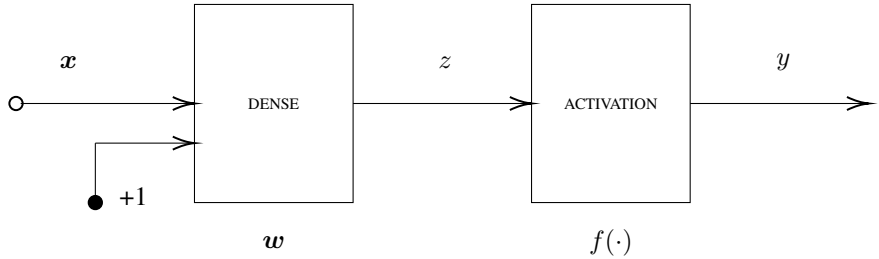
\begin{figure}[t]
\centering
\begin{tikzpicture}[x=0.75pt,y=0.75pt,yscale=-1,xscale=1]

\draw   (100,10) -- (180,10) -- (180,110) -- (100,110) -- cycle ;
\draw   (270,10) -- (350,10) -- (350,110) -- (270,110) -- cycle ;
\draw    (12.35,60) -- (98,60) ;
\draw [shift={(100,60)}, rotate = 180] [color={rgb, 255:red, 0; green, 0; blue, 0 }  ][line width=0.75]    (10.93,-3.29) .. controls (6.95,-1.4) and (3.31,-0.3) .. (0,0) .. controls (3.31,0.3) and (6.95,1.4) .. (10.93,3.29)   ;
\draw [shift={(10,60)}, rotate = 0] [color={rgb, 255:red, 0; green, 0; blue, 0 }  ][line width=0.75]      (0, 0) circle [x radius= 3.35, y radius= 3.35]   ;
\draw    (180,60) -- (268,60) ;
\draw [shift={(270,60)}, rotate = 180] [color={rgb, 255:red, 0; green, 0; blue, 0 }  ][line width=0.75]    (10.93,-3.29) .. controls (6.95,-1.4) and (3.31,-0.3) .. (0,0) .. controls (3.31,0.3) and (6.95,1.4) .. (10.93,3.29)   ;
\draw    (350,60) -- (438,60) ;
\draw [shift={(440,60)}, rotate = 180] [color={rgb, 255:red, 0; green, 0; blue, 0 }  ][line width=0.75]    (10.93,-3.29) .. controls (6.95,-1.4) and (3.31,-0.3) .. (0,0) .. controls (3.31,0.3) and (6.95,1.4) .. (10.93,3.29)   ;
\draw    (50,80) -- (98,80) ;
\draw [shift={(100,80)}, rotate = 180] [color={rgb, 255:red, 0; green, 0; blue, 0 }  ][line width=0.75]    (10.93,-3.29) .. controls (6.95,-1.4) and (3.31,-0.3) .. (0,0) .. controls (3.31,0.3) and (6.95,1.4) .. (10.93,3.29)   ;
\draw    (50,80) -- (50,110) ;
\draw [shift={(50,110)}, rotate = 90] [color={rgb, 255:red, 0; green, 0; blue, 0 }  ][fill={rgb, 255:red, 0; green, 0; blue, 0 }  ][line width=0.75]      (0, 0) circle [x radius= 3.35, y radius= 3.35]   ;

\draw (36.5,39) node   [align=left] {$\bm{x}$};
\draw (140,60) node   [align=left] {{\tiny DENSE}};
\draw (310,60) node   [align=left] {{\tiny ACTIVATION}};
\draw (61,101) node [anchor=north west][inner sep=0.75pt]   [align=left] {+1};
\draw (140,130) node    {$\bm{w}$};
\draw (310,130) node    {$f(\cdot)$};
\draw (226.5,39) node   [align=left] {$z$};
\draw (397,39) node    {$y$};

\end{tikzpicture}
\caption{Schematic representation of the Single Unit Classifier (SUC).}\label{fig:SUC} 
\end{figure}

A Single Unit Classifier (SUC) comprises two fundamental blocks: a \emph{dense block} and an \emph{activation block}. Each of these blocks plays a distinct role in shaping the gradient dynamics during training:
\begin{enumerate}
\item \textbf{Dense Block:} This block implements a linear transformation. Given an input vector \( \bm{x} \) and an augmented weight vector \( \bm{w} \) (including bias), the dense block computes its output \( z \) as:
\begin{equation}
z = \bm{w}^T \begin{bmatrix} 1 \\[6pt] \bm{x} \end{bmatrix}.
\end{equation}
\item \textbf{Activation Block:} Subsequently, the output \( z \) from the dense block undergoes a nonlinear transformation through an activation function \( f \), yielding the final scalar output \( y \):
\begin{equation*}
y = f(z).
\end{equation*}
\end{enumerate}

Considering the chain rule of differentiation as introduced in the theoretical foundations, the derivative of \( y \) with respect to the input vector \( \bm{x} \) can be explicitly expressed as:
\begin{equation}
\frac{\partial y}{\partial \bm{x}} = \frac{\partial y}{\partial z} \cdot \frac{\partial z}{\partial \bm{x}}.
\end{equation}

In this expression, \( \frac{\partial y}{\partial z} \) is determined exclusively by the derivative \( f'(z) \) of the chosen activation function, while \( \frac{\partial z}{\partial \bm{x}} \) is dictated by the weight vector \( \bm{w} \), reflecting the linearity of the dense block. Due to this linear relationship, the directional component of the gradient \( \frac{\partial y}{\partial \bm{x}} \) aligns exactly with the augmented weight vector \( \bm{w} \). Explicitly, its unit direction vector (\emph{versor}) is given by:
\begin{equation}
\hat{\bm{v}} = \frac{\bm{w}}{\|\bm{w}\|_2},
\end{equation}
where \( \|\bm{w}\|_2 \) denotes the L2 norm of the weight vector \( \bm{w} \). Meanwhile, the magnitude of the gradient, capturing the combined influence of both the activation function's derivative and the weight vector's magnitude, is:
\begin{equation}
f'(z) \times \|\bm{w}\|_2.
\end{equation}

From this explicit analysis, consistent with Propositions 1–3, we observe a clear separation of roles: the nonlinearity introduced by the activation function does not alter the gradient direction but rather exclusively modulates its magnitude. Thus, methods aiming to improve training dynamics, such as adaptive learning rates or gradient normalization strategies discussed in Section 2, primarily affect the gradient magnitude. This understanding opens intriguing possibilities: by explicitly recognizing this directional invariance, one could deliberately modify the gradient magnitude independently, introducing alternative functions or even stochastic variations in the backward propagation without compromising the gradient's directional integrity.

\begin{figure}[!ht]
\centering
\resizebox{\textwidth}{!}{
\begin{tikzpicture}[x=0.75pt,y=0.75pt,yscale=-1,xscale=1]

\draw    (180,60) -- (180,78) ;
\draw [shift={(180,80)}, rotate = 270] [color={rgb, 255:red, 0; green, 0; blue, 0 }  ][line width=0.75]    (10.93,-3.29) .. controls (6.95,-1.4) and (3.31,-0.3) .. (0,0) .. controls (3.31,0.3) and (6.95,1.4) .. (10.93,3.29)   ;
\draw    (180,100) -- (180,118) ;
\draw [shift={(180,120)}, rotate = 270] [color={rgb, 255:red, 0; green, 0; blue, 0 }  ][line width=0.75]    (10.93,-3.29) .. controls (6.95,-1.4) and (3.31,-0.3) .. (0,0) .. controls (3.31,0.3) and (6.95,1.4) .. (10.93,3.29)   ;
\draw    (130,130) -- (148,130) ;
\draw [shift={(150,130)}, rotate = 180] [color={rgb, 255:red, 0; green, 0; blue, 0 }  ][line width=0.75]    (10.93,-3.29) .. controls (6.95,-1.4) and (3.31,-0.3) .. (0,0) .. controls (3.31,0.3) and (6.95,1.4) .. (10.93,3.29)   ;
\draw    (180,140) -- (180,158) ;
\draw [shift={(180,160)}, rotate = 270] [color={rgb, 255:red, 0; green, 0; blue, 0 }  ][line width=0.75]    (10.93,-3.29) .. controls (6.95,-1.4) and (3.31,-0.3) .. (0,0) .. controls (3.31,0.3) and (6.95,1.4) .. (10.93,3.29)   ;
\draw    (180,180) -- (180,198) ;
\draw [shift={(180,200)}, rotate = 270] [color={rgb, 255:red, 0; green, 0; blue, 0 }  ][line width=0.75]    (10.93,-3.29) .. controls (6.95,-1.4) and (3.31,-0.3) .. (0,0) .. controls (3.31,0.3) and (6.95,1.4) .. (10.93,3.29)   ;
\draw    (180,220) -- (180,238) ;
\draw [shift={(180,240)}, rotate = 270] [color={rgb, 255:red, 0; green, 0; blue, 0 }  ][line width=0.75]    (10.93,-3.29) .. controls (6.95,-1.4) and (3.31,-0.3) .. (0,0) .. controls (3.31,0.3) and (6.95,1.4) .. (10.93,3.29)   ;
\draw    (180,260) .. controls (180.48,302.88) and (182.96,298.45) .. (198.05,298.03) ;
\draw [shift={(200,298)}, rotate = 180] [color={rgb, 255:red, 0; green, 0; blue, 0 }  ][line width=0.75]    (10.93,-3.29) .. controls (6.95,-1.4) and (3.31,-0.3) .. (0,0) .. controls (3.31,0.3) and (6.95,1.4) .. (10.93,3.29)   ;
\draw    (270,360) -- (270,312) ;
\draw [shift={(270,310)}, rotate = 450] [color={rgb, 255:red, 0; green, 0; blue, 0 }  ][line width=0.75]    (10.93,-3.29) .. controls (6.95,-1.4) and (3.31,-0.3) .. (0,0) .. controls (3.31,0.3) and (6.95,1.4) .. (10.93,3.29)   ;
\draw    (580.1,262.05) .. controls (581.72,294.9) and (578.78,300.65) .. (560,300) ;
\draw [shift={(580,260)}, rotate = 86.97] [color={rgb, 255:red, 0; green, 0; blue, 0 }  ][line width=0.75]    (10.93,-3.29) .. controls (6.95,-1.4) and (3.31,-0.3) .. (0,0) .. controls (3.31,0.3) and (6.95,1.4) .. (10.93,3.29)   ;
\draw    (580,240) -- (580,222) ;
\draw [shift={(580,220)}, rotate = 450] [color={rgb, 255:red, 0; green, 0; blue, 0 }  ][line width=0.75]    (10.93,-3.29) .. controls (6.95,-1.4) and (3.31,-0.3) .. (0,0) .. controls (3.31,0.3) and (6.95,1.4) .. (10.93,3.29)   ;
\draw    (510,210) -- (528,210) ;
\draw [shift={(530,210)}, rotate = 180] [color={rgb, 255:red, 0; green, 0; blue, 0 }  ][line width=0.75]    (10.93,-3.29) .. controls (6.95,-1.4) and (3.31,-0.3) .. (0,0) .. controls (3.31,0.3) and (6.95,1.4) .. (10.93,3.29)   ;
\draw  [dash pattern={on 0.84pt off 2.51pt}]  (210,210) -- (440,210) ;
\draw    (200,170) .. controls (259.4,170.99) and (387.41,211.18) .. (448.18,210.05) ;
\draw [shift={(450,210)}, rotate = 538.0899999999999] [color={rgb, 255:red, 0; green, 0; blue, 0 }  ][line width=0.75]    (10.93,-3.29) .. controls (6.95,-1.4) and (3.31,-0.3) .. (0,0) .. controls (3.31,0.3) and (6.95,1.4) .. (10.93,3.29)   ;
\draw    (580,200) -- (580,182) ;
\draw [shift={(580,180)}, rotate = 450] [color={rgb, 255:red, 0; green, 0; blue, 0 }  ][line width=0.75]    (10.93,-3.29) .. controls (6.95,-1.4) and (3.31,-0.3) .. (0,0) .. controls (3.31,0.3) and (6.95,1.4) .. (10.93,3.29)   ;
\draw    (580,160) -- (580,142) ;
\draw [shift={(580,140)}, rotate = 450] [color={rgb, 255:red, 0; green, 0; blue, 0 }  ][line width=0.75]    (10.93,-3.29) .. controls (6.95,-1.4) and (3.31,-0.3) .. (0,0) .. controls (3.31,0.3) and (6.95,1.4) .. (10.93,3.29)   ;
\draw    (630,130) -- (612,130) ;
\draw [shift={(610,130)}, rotate = 360] [color={rgb, 255:red, 0; green, 0; blue, 0 }  ][line width=0.75]    (10.93,-3.29) .. controls (6.95,-1.4) and (3.31,-0.3) .. (0,0) .. controls (3.31,0.3) and (6.95,1.4) .. (10.93,3.29)   ;
\draw    (509.93,142.19) .. controls (509.04,170) and (510.53,170) .. (570,170) ;
\draw [shift={(510,140)}, rotate = 91.91] [color={rgb, 255:red, 0; green, 0; blue, 0 }  ][line width=0.75]    (10.93,-3.29) .. controls (6.95,-1.4) and (3.31,-0.3) .. (0,0) .. controls (3.31,0.3) and (6.95,1.4) .. (10.93,3.29)   ;
\draw    (580,80) -- (580,50) ;
\draw [shift={(580,50)}, rotate = 270] [color={rgb, 255:red, 0; green, 0; blue, 0 }  ][fill={rgb, 255:red, 0; green, 0; blue, 0 }  ][line width=0.75]      (0, 0) circle [x radius= 3.35, y radius= 3.35]   ;
\draw    (580,120) -- (580,102) ;
\draw [shift={(580,100)}, rotate = 450] [color={rgb, 255:red, 0; green, 0; blue, 0 }  ][line width=0.75]    (10.93,-3.29) .. controls (6.95,-1.4) and (3.31,-0.3) .. (0,0) .. controls (3.31,0.3) and (6.95,1.4) .. (10.93,3.29)   ;
\draw    (200,90) .. controls (259.4,90.99) and (348.2,131.18) .. (408.19,130.05) ;
\draw [shift={(410,130)}, rotate = 538.0899999999999] [color={rgb, 255:red, 0; green, 0; blue, 0 }  ][line width=0.75]    (10.93,-3.29) .. controls (6.95,-1.4) and (3.31,-0.3) .. (0,0) .. controls (3.31,0.3) and (6.95,1.4) .. (10.93,3.29)   ;
\draw    (10,88.5) -- (160,88.5)(10,91.5) -- (160,91.5) ;
\draw    (270,88.5) -- (560,88.5)(270,91.5) -- (560,91.5) ;
\draw    (600,88.5) -- (660,88.5)(600,91.5) -- (660,91.5) ;
\draw    (270,168.5) -- (500,168.5)(270,171.5) -- (500,171.5) ;
\draw    (10,168.5) -- (160,168.5)(10,171.5) -- (160,171.5) ;
\draw    (10,248.5) -- (160,248.5)(10,251.5) -- (160,251.5) ;
\draw    (600,168.5) -- (660,168.5)(600,171.5) -- (660,171.5) ;
\draw    (600,248.5) -- (660,248.5)(600,251.5) -- (660,251.5) ;
\draw    (200,248.5) -- (560,248.5)(200,251.5) -- (560,251.5) ;
\draw    (180,260) -- (180,358) ;
\draw [shift={(180,360)}, rotate = 270] [color={rgb, 255:red, 0; green, 0; blue, 0 }  ][line width=0.75]    (10.93,-3.29) .. controls (6.95,-1.4) and (3.31,-0.3) .. (0,0) .. controls (3.31,0.3) and (6.95,1.4) .. (10.93,3.29)   ;
\draw    (10,328.5) -- (160,328.5)(10,331.5) -- (160,331.5) ;
\draw    (200,328.5) -- (250,328.5)(200,331.5) -- (250,331.5) ;
\draw    (290,328.5) -- (660,328.5)(290,331.5) -- (660,331.5) ;
\draw    (270,360) .. controls (269.5,313.75) and (269.83,320.67) .. (330,320) ;
\draw    (180,260) .. controls (177.17,310.67) and (519.17,238.67) .. (520,280) ;
\draw    (520,280) -- (520,288) ;
\draw [shift={(520,290)}, rotate = 270] [color={rgb, 255:red, 0; green, 0; blue, 0 }  ][line width=0.75]    (10.93,-3.29) .. controls (6.95,-1.4) and (3.31,-0.3) .. (0,0) .. controls (3.31,0.3) and (6.95,1.4) .. (10.93,3.29)   ;
\draw  [dash pattern={on 0.84pt off 2.51pt}]  (360,300) -- (410,300) ;
\draw    (497.97,311.21) .. controls (464.62,330.4) and (418.04,323.92) .. (330,320) ;
\draw [shift={(500,310)}, rotate = 148.33] [color={rgb, 255:red, 0; green, 0; blue, 0 }  ][line width=0.75]    (10.93,-3.29) .. controls (6.95,-1.4) and (3.31,-0.3) .. (0,0) .. controls (3.31,0.3) and (6.95,1.4) .. (10.93,3.29)   ;

\draw (182,49) node   [align=left] {$\displaystyle \mathbf{X}$};
\draw (114,129) node    {$\mathbf{W}$};
\draw (180.5,89) node    {$\mathbf{Z}_{0}$};
\draw (178.5,131) node    {$\mathbf{WZ}_{0}$};
\draw (183,209) node    {$\varphi (\mathbf{Z}_{1})$};
\draw (180.5,251) node    {$\mathbf{Z}_{2}$};
\draw (284,297) node    {$\mathcal{L} =\frac{1}{2m} \ \mathbf{1}^{T} \cdot (\mathbf{Y} -\mathbf{Z}_{2})^{\circ 2}$};
\draw (487.5,296) node    {$\nabla \mathcal{L} =-\frac{1}{m}(\mathbf{Y} -\mathbf{Z}_{2})$};
\draw (181.5,169) node    {$\mathbf{Z}_{1}$};
\draw (582.5,251) node    {$\mathbf{D}_{2}$};
\draw (481.5,208.5) node    {$\varphi ^{'}(\mathbf{Z}_{1})$};
\draw (584,207) node    {$\mathbf{\varphi ^{'}( Z_{1}) \circ D}_{2}$};
\draw (582.5,171) node    {$\mathbf{D}_{1}$};
\draw (576.5,131) node    {$\mathbf{W}^{T}\mathbf{D}_{1}$};
\draw (654,131) node    {$\mathbf{W}$};
\draw (481,129.5) node    {$\Delta \mathbf{W} =-\eta \mathbf{D}_{1}\mathbf{Z}^{T}_{0}$};
\draw (582.5,91) node    {$\mathbf{D}_{0}$};
\draw (11,132) node [anchor=west] [inner sep=0.75pt]  [font=\small] [align=left] {LAYER 1\\(FC)};
\draw (11,212) node [anchor=west] [inner sep=0.75pt]  [font=\small] [align=left] {LAYER 2\\(ACTIVATION)};
\draw (11,56) node [anchor=west] [inner sep=0.75pt]  [font=\small] [align=left] {INPUT};
\draw (11,291) node [anchor=west] [inner sep=0.75pt]  [font=\small] [align=left] {LOSS};
\draw (181.5,373) node   [align=left] {$\displaystyle \mathring{\mathbf{Y}}$};
\draw (9,375) node [anchor=west] [inner sep=0.75pt]  [font=\small] [align=left] {OUTPUT};
\draw (268.5,376) node   [align=left] {$\displaystyle \mathbf{Y}$};
\draw (135,12) node [anchor=north west][inner sep=0.75pt]   [align=left] {FORWARD};
\draw (575,20.5) node   [align=left] {BACKWARD};

\end{tikzpicture}
}
\caption{Block diagram showing the training of SUC, where $\mathring{\textbf{Y}}$ is the network output, $\textbf{Y}$ the target, $\mathbf{W}_i$, $\varphi_i(\cdot)$, $\textbf{Z}_i$ are respectively the weighting matrix, the activation function and the output at the $i$-th layers, $\textbf{D}_i$ is the delta factor used for updating the weights at the $i$-th layer.}\label{fig:SUC-process} 
\end{figure}
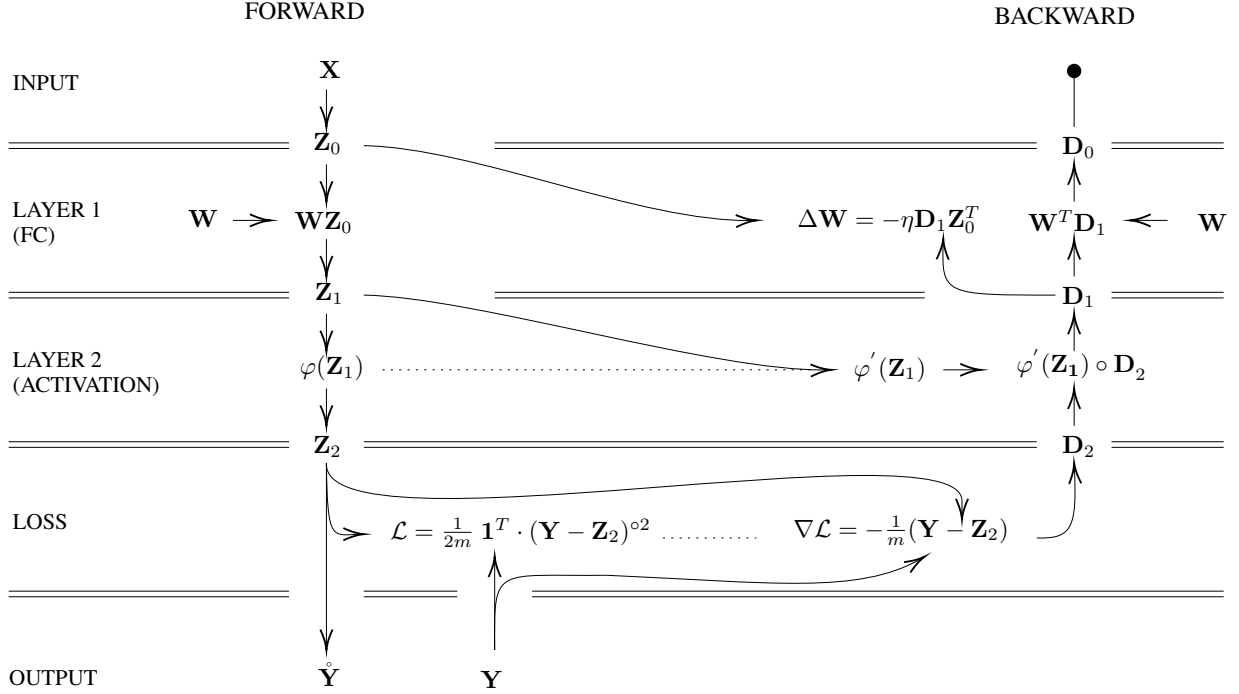

These observations remain valid even when batching is employed, as depicted in Fig.~\ref{fig:SUC-process}. Initially, consider a mini-batch size \( m=1 \). Under this scenario, the activation potential, the target, and the network's output are scalar quantities, i.e., \( \bm{Z}_1 = \zeta \), \( \bm{Y}=\gamma \), and \( \mathring{\bm{Y}}=\bar{\gamma} \in \mathbb{R} \). Consequently, the loss function simplifies to \( \mathcal{L}=\frac{1}{2}(\gamma- \bar{\gamma})^2 \) with gradient \( \nabla \mathcal{L} = (\bar{\gamma}- \gamma) \). Moreover, the derivative of the activation function reduces to a scalar, \( \varphi'(\bm{Z}_1) = \varphi'(\zeta) \). Thus, the scalar local gradient term \( \bm{D}_1 = \varphi'(\zeta)(\bar{\gamma}-\gamma) \) explicitly affects only the magnitude of the parameter update \( \Delta \bm{W} \), but not its direction. Formally:

\begin{equation}
    \Delta \bm{W} = -\lVert \Delta \bm{W} \rVert \hat{u}_w,
\end{equation}

where the unit vector indicating the direction of weight variation is explicitly given by:

\begin{equation}
    \hat{u}_w = \frac{\bm{Z}_0}{\lVert \bm{Z}_0 \rVert} = \frac{\bm{X}}{\lVert \bm{X} \rVert}.
\end{equation}

The explicit magnitude of this variation is:

\begin{equation}
    \lVert \Delta \bm{W} \rVert = \eta \lVert \bm{Z}_0 \rVert |\bm{D}_1| = \eta \lVert \bm{X} \rVert |\varphi'(\zeta)| |\gamma - \bar{\gamma}|.
\end{equation}

Therefore, the gradient of the activation function \( \varphi'(\zeta) \) does not influence the direction of the weight update \( \Delta \bm{W} \). Hence, the magnitude of the weight update can be modulated through alternative criteria, independently of the gradient of the activation function. This property opens possibilities for training architectures such as a SUC with activation functions that typically exhibit near-zero gradients, e.g., the Heaviside step function.

The above considerations naturally generalize beyond \( m=1 \). When the mini-batch comprises multiple samples (\( m > 1 \)), the total weight update is:

\begin{equation*}
\Delta \bm{W} = \sum_{k=1}^{m} \Delta \bm{W}_k,
\end{equation*}
with each sample's contribution being explicitly:
\begin{equation}
    \Delta \mathbf{W}_{k} = -\eta \varphi'(\zeta_k) (\gamma_k - \bar{\gamma}_k) \mathbf{Z}_{0;k}^T.
\end{equation}

Since each individual update \( \Delta \mathbf{W}_{k} \) maintains the direction dictated by \( \mathbf{Z}_{0;k} \), the activation derivative \( \varphi'(\zeta_k) \) influences solely the magnitude, preserving the directional invariance as highlighted in Proposition~1.

This directional invariance holds consistently, even when the network output dimension \( p > 1 \). For such cases, the weight update matrix \( \Delta \mathbf{W} \in \mathbb{R}^{p \times q} \) is explicitly expressed as:

\begin{equation}
\Delta \mathbf{W} = -\eta \mathbf{D}_1 \mathbf{Z}_0^T = 
\begin{pmatrix}
  \sum_{k=1}^{m} -\eta \mathbf{D}_{1;1,k} \mathbf{Z}_{0;k,1} & \cdots & \sum_{k=1}^{m} -\eta \mathbf{D}_{1;1,k} \mathbf{Z}_{0;k,q}\\
  \vdots & \ddots & \vdots \\
  \sum_{k=1}^{m} -\eta \mathbf{D}_{1;p,k} \mathbf{Z}_{0;k,1} & \cdots & \sum_{k=1}^{m} -\eta \mathbf{D}_{1;p,k} \mathbf{Z}_{0;k,q}
\end{pmatrix},
\end{equation}

further demonstrating that the directionality of the weight updates remains unaffected by the activation derivative \( \varphi' \).

The directional invariance observed in these derivations, consistent with Propositions~2 and 3, extends beyond MLP architectures and is relevant to contemporary deep neural networks composed of modular building blocks. The generalization to broader classes of architectures relies on the following explicit observations:

\begin{enumerate}
\item \textbf{Layer Decomposition:} Architectures with explicitly defined blocks for linear transformations and activation functions (such as those illustrated in Fig.~\ref{fig:mlp-process}) inherently share the same gradient propagation structure, thus preserving directional invariance.

\item \textbf{Tensor-based Operations:} Advanced neural networks operating on tensor data fundamentally leverage inner products along specific tensor dimensions, maintaining linearity at the neuron level. Architectures like CNNs and ResNets explicitly perform spatially constrained inner products, while RNNs and Transformers perform inner products along temporal or feature dimensions, preserving directional invariance of gradient updates.

\item \textbf{Graph-based Architectures:} Even in complex architectures characterized by non-sequential graph structures, gradient propagation remains explicitly modular. Each module’s gradient updates depend linearly on the errors backpropagated from subsequent modules and activations received from preceding modules. Thus, despite architectural complexity, the core directional invariance principles remain explicitly intact.
\end{enumerate}

Consequently, our theoretical analysis of directional invariance, although explicitly demonstrated within simpler neural architectures, robustly generalizes to contemporary and structurally diverse deep neural networks.

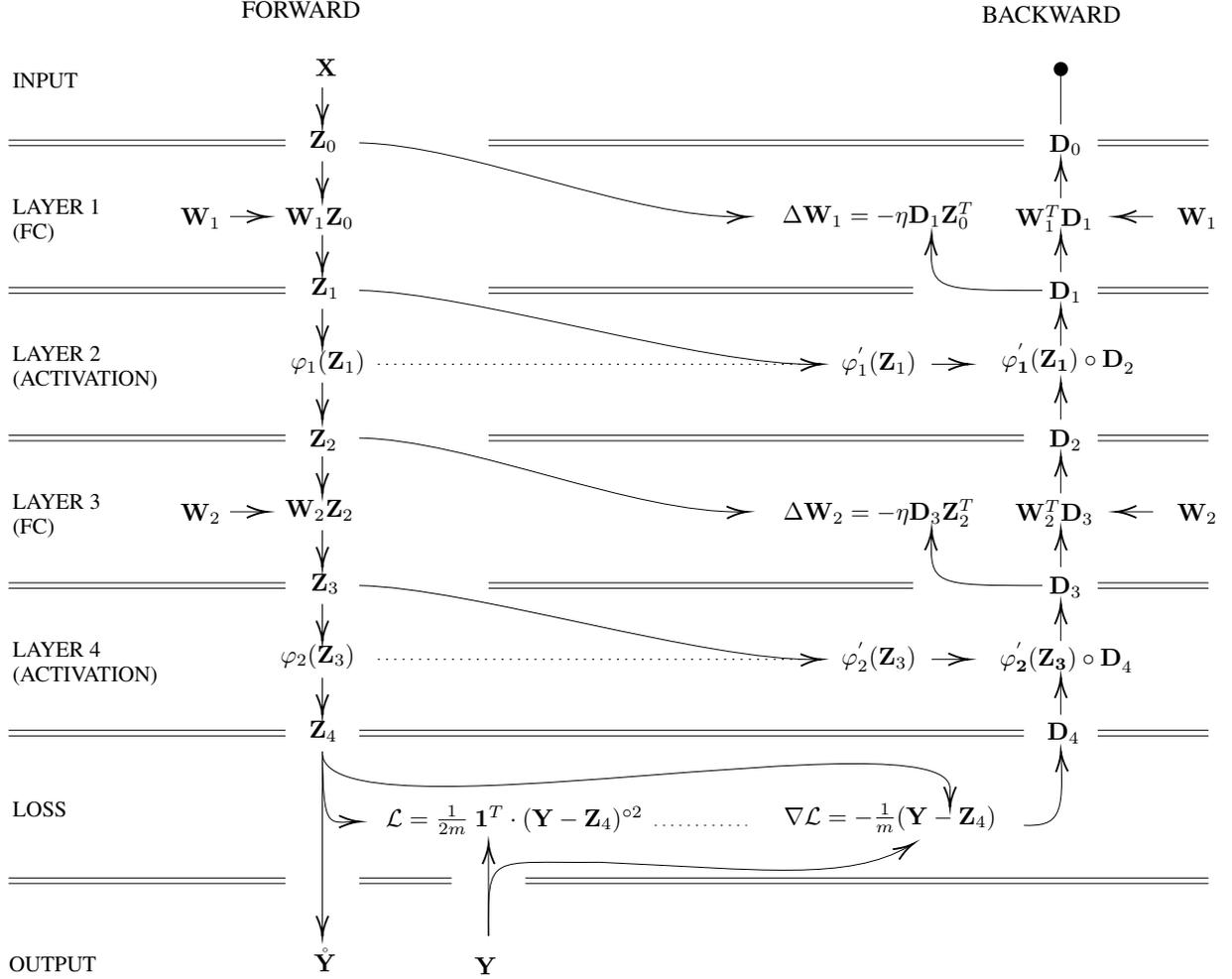
\begin{figure}[!tp]
\centering
\resizebox{\textwidth}{!}{
\begin{tikzpicture}[x=0.75pt,y=0.75pt,yscale=-1,xscale=1]

\draw    (180,60) -- (180,78) ;
\draw [shift={(180,80)}, rotate = 270] [color={rgb, 255:red, 0; green, 0; blue, 0 }  ][line width=0.75]    (10.93,-3.29) .. controls (6.95,-1.4) and (3.31,-0.3) .. (0,0) .. controls (3.31,0.3) and (6.95,1.4) .. (10.93,3.29)   ;
\draw    (180,100) -- (180,118) ;
\draw [shift={(180,120)}, rotate = 270] [color={rgb, 255:red, 0; green, 0; blue, 0 }  ][line width=0.75]    (10.93,-3.29) .. controls (6.95,-1.4) and (3.31,-0.3) .. (0,0) .. controls (3.31,0.3) and (6.95,1.4) .. (10.93,3.29)   ;
\draw    (130,130) -- (148,130) ;
\draw [shift={(150,130)}, rotate = 180] [color={rgb, 255:red, 0; green, 0; blue, 0 }  ][line width=0.75]    (10.93,-3.29) .. controls (6.95,-1.4) and (3.31,-0.3) .. (0,0) .. controls (3.31,0.3) and (6.95,1.4) .. (10.93,3.29)   ;
\draw    (180,140) -- (180,158) ;
\draw [shift={(180,160)}, rotate = 270] [color={rgb, 255:red, 0; green, 0; blue, 0 }  ][line width=0.75]    (10.93,-3.29) .. controls (6.95,-1.4) and (3.31,-0.3) .. (0,0) .. controls (3.31,0.3) and (6.95,1.4) .. (10.93,3.29)   ;
\draw    (180,180) -- (180,198) ;
\draw [shift={(180,200)}, rotate = 270] [color={rgb, 255:red, 0; green, 0; blue, 0 }  ][line width=0.75]    (10.93,-3.29) .. controls (6.95,-1.4) and (3.31,-0.3) .. (0,0) .. controls (3.31,0.3) and (6.95,1.4) .. (10.93,3.29)   ;
\draw    (180,220) -- (180,238) ;
\draw [shift={(180,240)}, rotate = 270] [color={rgb, 255:red, 0; green, 0; blue, 0 }  ][line width=0.75]    (10.93,-3.29) .. controls (6.95,-1.4) and (3.31,-0.3) .. (0,0) .. controls (3.31,0.3) and (6.95,1.4) .. (10.93,3.29)   ;
\draw    (180,260) -- (180,278) ;
\draw [shift={(180,280)}, rotate = 270] [color={rgb, 255:red, 0; green, 0; blue, 0 }  ][line width=0.75]    (10.93,-3.29) .. controls (6.95,-1.4) and (3.31,-0.3) .. (0,0) .. controls (3.31,0.3) and (6.95,1.4) .. (10.93,3.29)   ;
\draw    (180,300) -- (180,318) ;
\draw [shift={(180,320)}, rotate = 270] [color={rgb, 255:red, 0; green, 0; blue, 0 }  ][line width=0.75]    (10.93,-3.29) .. controls (6.95,-1.4) and (3.31,-0.3) .. (0,0) .. controls (3.31,0.3) and (6.95,1.4) .. (10.93,3.29)   ;
\draw    (130,290) -- (148,290) ;
\draw [shift={(150,290)}, rotate = 180] [color={rgb, 255:red, 0; green, 0; blue, 0 }  ][line width=0.75]    (10.93,-3.29) .. controls (6.95,-1.4) and (3.31,-0.3) .. (0,0) .. controls (3.31,0.3) and (6.95,1.4) .. (10.93,3.29)   ;
\draw    (180,340) -- (180,358) ;
\draw [shift={(180,360)}, rotate = 270] [color={rgb, 255:red, 0; green, 0; blue, 0 }  ][line width=0.75]    (10.93,-3.29) .. controls (6.95,-1.4) and (3.31,-0.3) .. (0,0) .. controls (3.31,0.3) and (6.95,1.4) .. (10.93,3.29)   ;
\draw    (180,380) -- (180,398) ;
\draw [shift={(180,400)}, rotate = 270] [color={rgb, 255:red, 0; green, 0; blue, 0 }  ][line width=0.75]    (10.93,-3.29) .. controls (6.95,-1.4) and (3.31,-0.3) .. (0,0) .. controls (3.31,0.3) and (6.95,1.4) .. (10.93,3.29)   ;
\draw    (180,420) .. controls (180.48,462.88) and (182.96,458.45) .. (198.05,458.03) ;
\draw [shift={(200,458)}, rotate = 180] [color={rgb, 255:red, 0; green, 0; blue, 0 }  ][line width=0.75]    (10.93,-3.29) .. controls (6.95,-1.4) and (3.31,-0.3) .. (0,0) .. controls (3.31,0.3) and (6.95,1.4) .. (10.93,3.29)   ;
\draw    (270,520) -- (270,472) ;
\draw [shift={(270,470)}, rotate = 450] [color={rgb, 255:red, 0; green, 0; blue, 0 }  ][line width=0.75]    (10.93,-3.29) .. controls (6.95,-1.4) and (3.31,-0.3) .. (0,0) .. controls (3.31,0.3) and (6.95,1.4) .. (10.93,3.29)   ;
\draw    (580.1,422.05) .. controls (581.72,454.9) and (578.78,460.65) .. (560,460) ;
\draw [shift={(580,420)}, rotate = 86.97] [color={rgb, 255:red, 0; green, 0; blue, 0 }  ][line width=0.75]    (10.93,-3.29) .. controls (6.95,-1.4) and (3.31,-0.3) .. (0,0) .. controls (3.31,0.3) and (6.95,1.4) .. (10.93,3.29)   ;
\draw    (580,400) -- (580,382) ;
\draw [shift={(580,380)}, rotate = 450] [color={rgb, 255:red, 0; green, 0; blue, 0 }  ][line width=0.75]    (10.93,-3.29) .. controls (6.95,-1.4) and (3.31,-0.3) .. (0,0) .. controls (3.31,0.3) and (6.95,1.4) .. (10.93,3.29)   ;
\draw  [dash pattern={on 0.84pt off 2.51pt}]  (210,370) -- (440,370) ;
\draw    (200,330) .. controls (259.4,330.99) and (387.41,371.18) .. (448.18,370.05) ;
\draw [shift={(450,370)}, rotate = 538.0899999999999] [color={rgb, 255:red, 0; green, 0; blue, 0 }  ][line width=0.75]    (10.93,-3.29) .. controls (6.95,-1.4) and (3.31,-0.3) .. (0,0) .. controls (3.31,0.3) and (6.95,1.4) .. (10.93,3.29)   ;
\draw    (510,370) -- (528,370) ;
\draw [shift={(530,370)}, rotate = 180] [color={rgb, 255:red, 0; green, 0; blue, 0 }  ][line width=0.75]    (10.93,-3.29) .. controls (6.95,-1.4) and (3.31,-0.3) .. (0,0) .. controls (3.31,0.3) and (6.95,1.4) .. (10.93,3.29)   ;
\draw    (580,360) -- (580,342) ;
\draw [shift={(580,340)}, rotate = 450] [color={rgb, 255:red, 0; green, 0; blue, 0 }  ][line width=0.75]    (10.93,-3.29) .. controls (6.95,-1.4) and (3.31,-0.3) .. (0,0) .. controls (3.31,0.3) and (6.95,1.4) .. (10.93,3.29)   ;
\draw    (580,320) -- (580,302) ;
\draw [shift={(580,300)}, rotate = 450] [color={rgb, 255:red, 0; green, 0; blue, 0 }  ][line width=0.75]    (10.93,-3.29) .. controls (6.95,-1.4) and (3.31,-0.3) .. (0,0) .. controls (3.31,0.3) and (6.95,1.4) .. (10.93,3.29)   ;
\draw    (580,280) -- (580,262) ;
\draw [shift={(580,260)}, rotate = 450] [color={rgb, 255:red, 0; green, 0; blue, 0 }  ][line width=0.75]    (10.93,-3.29) .. controls (6.95,-1.4) and (3.31,-0.3) .. (0,0) .. controls (3.31,0.3) and (6.95,1.4) .. (10.93,3.29)   ;
\draw    (580,240) -- (580,222) ;
\draw [shift={(580,220)}, rotate = 450] [color={rgb, 255:red, 0; green, 0; blue, 0 }  ][line width=0.75]    (10.93,-3.29) .. controls (6.95,-1.4) and (3.31,-0.3) .. (0,0) .. controls (3.31,0.3) and (6.95,1.4) .. (10.93,3.29)   ;
\draw    (510,210) -- (528,210) ;
\draw [shift={(530,210)}, rotate = 180] [color={rgb, 255:red, 0; green, 0; blue, 0 }  ][line width=0.75]    (10.93,-3.29) .. controls (6.95,-1.4) and (3.31,-0.3) .. (0,0) .. controls (3.31,0.3) and (6.95,1.4) .. (10.93,3.29)   ;
\draw    (630,290) -- (612,290) ;
\draw [shift={(610,290)}, rotate = 360] [color={rgb, 255:red, 0; green, 0; blue, 0 }  ][line width=0.75]    (10.93,-3.29) .. controls (6.95,-1.4) and (3.31,-0.3) .. (0,0) .. controls (3.31,0.3) and (6.95,1.4) .. (10.93,3.29)   ;
\draw    (200,250) .. controls (259.4,250.99) and (348.2,291.18) .. (408.19,290.05) ;
\draw [shift={(410,290)}, rotate = 538.0899999999999] [color={rgb, 255:red, 0; green, 0; blue, 0 }  ][line width=0.75]    (10.93,-3.29) .. controls (6.95,-1.4) and (3.31,-0.3) .. (0,0) .. controls (3.31,0.3) and (6.95,1.4) .. (10.93,3.29)   ;
\draw    (509.93,302.19) .. controls (509.04,330) and (510.53,330) .. (570,330) ;
\draw [shift={(510,300)}, rotate = 91.91] [color={rgb, 255:red, 0; green, 0; blue, 0 }  ][line width=0.75]    (10.93,-3.29) .. controls (6.95,-1.4) and (3.31,-0.3) .. (0,0) .. controls (3.31,0.3) and (6.95,1.4) .. (10.93,3.29)   ;
\draw  [dash pattern={on 0.84pt off 2.51pt}]  (210,210) -- (440,210) ;
\draw    (200,170) .. controls (259.4,170.99) and (387.41,211.18) .. (448.18,210.05) ;
\draw [shift={(450,210)}, rotate = 538.0899999999999] [color={rgb, 255:red, 0; green, 0; blue, 0 }  ][line width=0.75]    (10.93,-3.29) .. controls (6.95,-1.4) and (3.31,-0.3) .. (0,0) .. controls (3.31,0.3) and (6.95,1.4) .. (10.93,3.29)   ;
\draw    (580,200) -- (580,182) ;
\draw [shift={(580,180)}, rotate = 450] [color={rgb, 255:red, 0; green, 0; blue, 0 }  ][line width=0.75]    (10.93,-3.29) .. controls (6.95,-1.4) and (3.31,-0.3) .. (0,0) .. controls (3.31,0.3) and (6.95,1.4) .. (10.93,3.29)   ;
\draw    (580,160) -- (580,142) ;
\draw [shift={(580,140)}, rotate = 450] [color={rgb, 255:red, 0; green, 0; blue, 0 }  ][line width=0.75]    (10.93,-3.29) .. controls (6.95,-1.4) and (3.31,-0.3) .. (0,0) .. controls (3.31,0.3) and (6.95,1.4) .. (10.93,3.29)   ;
\draw    (630,130) -- (612,130) ;
\draw [shift={(610,130)}, rotate = 360] [color={rgb, 255:red, 0; green, 0; blue, 0 }  ][line width=0.75]    (10.93,-3.29) .. controls (6.95,-1.4) and (3.31,-0.3) .. (0,0) .. controls (3.31,0.3) and (6.95,1.4) .. (10.93,3.29)   ;
\draw    (509.93,142.19) .. controls (509.04,170) and (510.53,170) .. (570,170) ;
\draw [shift={(510,140)}, rotate = 91.91] [color={rgb, 255:red, 0; green, 0; blue, 0 }  ][line width=0.75]    (10.93,-3.29) .. controls (6.95,-1.4) and (3.31,-0.3) .. (0,0) .. controls (3.31,0.3) and (6.95,1.4) .. (10.93,3.29)   ;
\draw    (580,80) -- (580,50) ;
\draw [shift={(580,50)}, rotate = 270] [color={rgb, 255:red, 0; green, 0; blue, 0 }  ][fill={rgb, 255:red, 0; green, 0; blue, 0 }  ][line width=0.75]      (0, 0) circle [x radius= 3.35, y radius= 3.35]   ;
\draw    (580,120) -- (580,102) ;
\draw [shift={(580,100)}, rotate = 450] [color={rgb, 255:red, 0; green, 0; blue, 0 }  ][line width=0.75]    (10.93,-3.29) .. controls (6.95,-1.4) and (3.31,-0.3) .. (0,0) .. controls (3.31,0.3) and (6.95,1.4) .. (10.93,3.29)   ;
\draw    (200,90) .. controls (259.4,90.99) and (348.2,131.18) .. (408.19,130.05) ;
\draw [shift={(410,130)}, rotate = 538.0899999999999] [color={rgb, 255:red, 0; green, 0; blue, 0 }  ][line width=0.75]    (10.93,-3.29) .. controls (6.95,-1.4) and (3.31,-0.3) .. (0,0) .. controls (3.31,0.3) and (6.95,1.4) .. (10.93,3.29)   ;
\draw    (10,88.5) -- (160,88.5)(10,91.5) -- (160,91.5) ;
\draw    (270,88.5) -- (560,88.5)(270,91.5) -- (560,91.5) ;
\draw    (600,88.5) -- (660,88.5)(600,91.5) -- (660,91.5) ;
\draw    (270,168.5) -- (500,168.5)(270,171.5) -- (500,171.5) ;
\draw    (10,168.5) -- (160,168.5)(10,171.5) -- (160,171.5) ;
\draw    (10,248.5) -- (160,248.5)(10,251.5) -- (160,251.5) ;
\draw    (10,328.5) -- (160,328.5)(10,331.5) -- (160,331.5) ;
\draw    (10,408.5) -- (160,408.5)(10,411.5) -- (160,411.5) ;
\draw    (600,168.5) -- (660,168.5)(600,171.5) -- (660,171.5) ;
\draw    (600,248.5) -- (660,248.5)(600,251.5) -- (660,251.5) ;
\draw    (600,328.5) -- (660,328.5)(600,331.5) -- (660,331.5) ;
\draw    (600,408.5) -- (660,408.5)(600,411.5) -- (660,411.5) ;
\draw    (270,248.5) -- (560,248.5)(270,251.5) -- (560,251.5) ;
\draw    (270,328.5) -- (500,328.5)(270,331.5) -- (500,331.5) ;
\draw    (200,408.5) -- (560,408.5)(200,411.5) -- (560,411.5) ;
\draw    (180,420) -- (180,518) ;
\draw [shift={(180,520)}, rotate = 270] [color={rgb, 255:red, 0; green, 0; blue, 0 }  ][line width=0.75]    (10.93,-3.29) .. controls (6.95,-1.4) and (3.31,-0.3) .. (0,0) .. controls (3.31,0.3) and (6.95,1.4) .. (10.93,3.29)   ;
\draw    (10,488.5) -- (160,488.5)(10,491.5) -- (160,491.5) ;
\draw    (200,488.5) -- (250,488.5)(200,491.5) -- (250,491.5) ;
\draw    (290,488.5) -- (660,488.5)(290,491.5) -- (660,491.5) ;
\draw    (270,520) .. controls (269.5,473.75) and (269.83,480.67) .. (330,480) ;
\draw    (180,420) .. controls (177.17,470.67) and (519.17,398.67) .. (520,440) ;
\draw    (520,440) -- (520,448) ;
\draw [shift={(520,450)}, rotate = 270] [color={rgb, 255:red, 0; green, 0; blue, 0 }  ][line width=0.75]    (10.93,-3.29) .. controls (6.95,-1.4) and (3.31,-0.3) .. (0,0) .. controls (3.31,0.3) and (6.95,1.4) .. (10.93,3.29)   ;
\draw  [dash pattern={on 0.84pt off 2.51pt}]  (360,460) -- (410,460) ;
\draw    (497.97,471.21) .. controls (464.62,490.4) and (418.04,483.92) .. (330,480) ;
\draw [shift={(500,470)}, rotate = 148.33] [color={rgb, 255:red, 0; green, 0; blue, 0 }  ][line width=0.75]    (10.93,-3.29) .. controls (6.95,-1.4) and (3.31,-0.3) .. (0,0) .. controls (3.31,0.3) and (6.95,1.4) .. (10.93,3.29)   ;

\draw (182,49) node   [align=left] {$\displaystyle \mathbf{X}$};
\draw (114,131) node    {$\mathbf{W}_{1}$};
\draw (180.5,89) node    {$\mathbf{Z}_{0}$};
\draw (178.5,131) node    {$\mathbf{W}_{1}\mathbf{Z}_{0}$};
\draw (183,209) node    {$\varphi _{1}(\mathbf{Z}_{1})$};
\draw (180.5,251) node    {$\mathbf{Z}_{2}$};
\draw (178.5,289) node    {$\mathbf{W}_{2}\mathbf{Z}_{2}$};
\draw (114,291) node    {$\mathbf{W}_{2}$};
\draw (156,360.4) node [anchor=north west][inner sep=0.75pt]    {$\varphi _{2}(\mathbf{Z}_{3})$};
\draw (180.5,409) node    {$\mathbf{Z}_{4}$};
\draw (284,457) node    {$\mathcal{L} =\frac{1}{2m} \ \mathbf{1}^{T} \cdot (\mathbf{Y} -\mathbf{Z}_{4})^{\circ 2}$};
\draw (487.5,456) node    {$\nabla \mathcal{L} =-\frac{1}{m}(\mathbf{Y} -\mathbf{Z}_{4})$};
\draw (581.5,410) node    {$\mathbf{D}_{4}$};
\draw (181.5,169) node    {$\mathbf{Z}_{1}$};
\draw (181.5,329) node    {$\mathbf{Z}_{3}$};
\draw (481.5,368.5) node    {$\varphi ^{'}_{2}(\mathbf{Z}_{3})$};
\draw (583,369) node    {$\mathbf{\varphi ^{'}_{2}( Z_{3}) \circ D}_{4}$};
\draw (582.5,331) node    {$\mathbf{D}_{3}$};
\draw (576.5,291) node    {$\mathbf{W}^{T}_{2}\mathbf{D}_{3}$};
\draw (582.5,251) node    {$\mathbf{D}_{2}$};
\draw (481.5,208.5) node    {$\varphi ^{'}_{1}(\mathbf{Z}_{1})$};
\draw (584,207) node    {$\mathbf{\varphi ^{'}_{1}( Z_{1}) \circ D}_{2}$};
\draw (654,291) node    {$\mathbf{W}_{2}$};
\draw (481.5,290.5) node    {$\Delta \mathbf{W}_{2} =-\eta \mathbf{D}_{3}\mathbf{Z}^{T}_{2}$};
\draw (582.5,171) node    {$\mathbf{D}_{1}$};
\draw (576.5,131) node    {$\mathbf{W}^{T}_{1}\mathbf{D}_{1}$};
\draw (654,131) node    {$\mathbf{W}_{1}$};
\draw (481,129.5) node    {$\Delta \mathbf{W}_{1} =-\eta \mathbf{D}_{1}\mathbf{Z}^{T}_{0}$};
\draw (582.5,91) node    {$\mathbf{D}_{0}$};
\draw (11,132) node [anchor=west] [inner sep=0.75pt]  [font=\small] [align=left] {LAYER 1\\(FC)};
\draw (11,212) node [anchor=west] [inner sep=0.75pt]  [font=\small] [align=left] {LAYER 2\\(ACTIVATION)};
\draw (11,292) node [anchor=west] [inner sep=0.75pt]  [font=\small] [align=left] {LAYER 3\\(FC)};
\draw (11,372) node [anchor=west] [inner sep=0.75pt]  [font=\small] [align=left] {LAYER 4\\(ACTIVATION)};
\draw (11,56) node [anchor=west] [inner sep=0.75pt]  [font=\small] [align=left] {INPUT};
\draw (11,451) node [anchor=west] [inner sep=0.75pt]  [font=\small] [align=left] {LOSS};
\draw (181.5,533) node   [align=left] {$\displaystyle \mathring{\mathbf{Y}}$};
\draw (9,535) node [anchor=west] [inner sep=0.75pt]  [font=\small] [align=left] {OUTPUT};
\draw (268.5,536) node   [align=left] {$\displaystyle \mathbf{Y}$};
\draw (135,12) node [anchor=north west][inner sep=0.75pt]   [align=left] {FORWARD};
\draw (575,20.5) node   [align=left] {BACKWARD};

\end{tikzpicture}
}
\caption{Training depiction of a neural network. Here, \( \mathring{\bm{Y}} \) represents the network's output, \( \bm{Y} \) denotes the target, and \( \bm{W}_i \), \( \varphi_i(\cdot) \), \( \bm{Z}_i \) respectively symbolize the weighting matrix, the activation function, and the \( i \)-th layer's output. \( \bm{D}_i \) stands for the delta factor, crucial for weight updates at the \( i \)-th layer.}\label{fig:mlp-process} 
\end{figure}

It is important to recognize that while the above generalizations are applicable across many neural network architectures, specific instances or novel architectures may require additional analysis. Hence, careful validation remains advisable when extending these principles to unconventional or emerging neural models.

Our findings, grounded in the fundamental mechanics elaborated in the theoretical propositions, align with and complement a wide spectrum of neural network optimization methodologies reviewed in Section~2. These methods, which span interventions on activations, weights, and gradients, uniformly exhibit a shared characteristic: they predominantly modulate the magnitude rather than altering the inherent direction of gradient-based updates.

\paragraph{Weights and Gradients} The relationship between weight updates and gradients is explicit and intuitive. Most existing optimization techniques affecting weights or gradients—including regularization, pruning, gradient clipping, and momentum-based approaches—primarily influence the magnitude of updates. Conversely, the direction, dictated by the gradient of the loss function relative to the weights, remains fundamentally preserved.

\paragraph{Activations} The role of activations warrants deeper examination. Techniques such as batch normalization, layer normalization, instance normalization, and related normalization approaches explicitly adjust the scale and statistical distribution of activations. Crucially, these adjustments maintain the relative ordering and proportionality among activation components. This explicit preservation ensures that the directionality of weight updates—determined fundamentally by preceding activations—remains invariant. Even activation functions that introduce strong non-linear transformations (e.g., ReLU, sigmoid, tanh) explicitly affect only the magnitude through their derivatives, leaving the direction of the weight update consistently defined by the pre-activation vector.

This definitive understanding that the gradient's direction is independent of, and more significant than, its magnitude naturally facilitates the decoupling of forward activations and backward gradient paths. Since the critical directional information is determined solely by linear neuron connections and activations from preceding layers, it becomes feasible—and theoretically justified—to separate the gradient magnitude from its traditional dependency on forward activation derivatives. This insight paves the way for novel gradient manipulation strategies, allowing greater flexibility and potentially improving neural network training methodologies.

These theoretical insights definitively establish that the invariance and primacy of gradient directions enable the decoupling of forward activations from backward gradient computations. Since directional information is governed exclusively by linear neuron connections and preceding layer activations, it becomes both feasible and theoretically sound to dissociate gradient magnitudes from their conventional reliance on activation derivatives. Consequently, gradient-based corrections can now be explicitly modulated for targeted purposes—such as accelerating convergence, enhancing generalization, or mitigating overfitting—while explicitly maintaining the intrinsic directional pathway inherent in neural learning dynamics. Recognizing and leveraging this fundamental property opens novel avenues in neural network training, enabling innovative magnitude adjustments that explicitly optimize learning performance without compromising directional consistency.

The next section explicitly presents empirical validations that corroborate these theoretical propositions, further illuminating essential nuances of gradient-based learning dynamics and providing concrete evidence of the efficacy of forward-backward decoupling in neural networks.

\section{Experimental Validation}

To empirically validate the theoretical insights presented in Section~3, we conducted a series of experiments aimed at both confirming our conclusions and uncovering additional insights into the decoupling of forward activations and backward gradients.

\subsection{Architectures}

We employed three neural network architectures: the Single Unit Classifier (SUC), the Multi-Layer Perceptron (MLP), and the Convolutional Neural Network (LeNet-5). These architectures were intentionally chosen for their simplicity, enabling a clear examination of fundamental concepts and isolating the specific effects of modifying gradient computations.

Specifically, the SUC allowed us to directly examine the impact on the most fundamental neural unit. The MLP provided a straightforward yet sufficiently complex setup to transparently analyze the mathematical nuances of backpropagation under relaxed traditional constraints. In contrast, LeNet-5 introduced greater complexity through its convolutional layers and quadratic-mean pooling operations, better representing realistic deep learning scenarios. Collectively, these models facilitated a comprehensive exploration of how decoupling forward activation functions from backward gradient computations influences various architectural elements, including fully connected layers, convolutional operations, and pooling mechanisms.

We deliberately omitted more advanced architectures, such as deeper convolutional networks or Transformers, because their complexity might obscure the essential theoretical phenomena under investigation and hinder the precise identification of the effects of our modifications.

{
\subsection{Datasets}

For our experiments, we employed four benchmark datasets, each selected for its relevance to different architectures and experimental analyses:

\begin{itemize}

\item \textbf{Ones Dataset (Synthetic)}:  
We designed a synthetic binary classification task to determine whether the proportion of '1's within an input vector \(\bm{X}\) exceeds a specified threshold (e.g., 70\%). This task produces a linearly separable decision boundary, clearly delineating distinct, non-overlapping convex regions. The simplicity of this dataset enables precise investigation into how modifications in gradient computations of activation functions influence parameter updates, isolating this effect from potential confounding factors.

\item \textbf{Digits Dataset}~\cite{digits}:  
The Digits dataset consists of 1,797 samples distributed across 10 classes representing handwritten digits (0--9), with class sizes ranging between 174 and 183 samples, ensuring balanced representation. Each digit is represented by an 8×8 pixel image, converted into a 64-dimensional numerical feature vector with pixel values ranging from 0 to 16. This dataset, sourced from the scikit-learn library, is frequently employed for benchmarking numerical classification algorithms and evaluating lower-dimensional representations.

\item \textbf{MNIST Dataset}~\cite{mnist}:  
The MNIST dataset is a widely recognized benchmark comprising 70,000 grayscale images of handwritten digits (0--9), divided into 60,000 training samples and 10,000 test samples. Each image measures \(28\times 28\) pixels, yielding a 784-dimensional feature vector per sample. Pixel intensity values range from 0 (black) to 255 (white). Due to its structured nature and standardized labeling, MNIST serves as a fundamental benchmark for convolutional neural networks and classification algorithms.

\item \textbf{Fashion-MNIST Dataset}~\cite{fashion-mnist}:  
Fashion-MNIST, structurally similar to MNIST, contains 70,000 grayscale images evenly divided into 10 categories representing various clothing items (e.g., t-shirts, trousers, dresses). The dataset is partitioned into 60,000 training and 10,000 test samples, each comprising a \(28\times 28\) pixel image converted into a 784-dimensional feature vector. Pixel intensity values range from 0 to 255. Fashion-MNIST presents a more challenging scenario than MNIST, due to increased complexity and visual similarities among the clothing categories, making it suitable for testing algorithm robustness and generalization capabilities.

\end{itemize}

\subsection{Detailed Experimental Setup}

To ensure statistical robustness while balancing computational feasibility, each experiment was independently executed 20 times. Datasets were consistently partitioned into training and test sets using an 80/20 split, a standard practice that provides representative test data for reliable performance evaluation.

We explored two distinct experimental configurations, termed \textit{tied} and \textit{untied}. The \textit{tied} configuration, denoted as ``Forward Function / Backward Gradient'' (\(f/f^\prime\)), employs the gradient derived directly from the forward activation function. In contrast, the \textit{untied} configuration (\(f/g^\prime\)) permits an independent, explicitly defined gradient function separate from the forward activation. Notably, the forward activation functions remain identical in both configurations; the sole distinction lies in the backward gradient computations, which, in the untied case, are user-defined and not constrained to match the derivative of the activation functions.

Examples of the gradient functions used and alternative backward gradient computations are shown in Figures~\ref{fig:gradcontributions} and~\ref{fig:jamming}, respectively. To explicitly isolate and analyze the impact of decoupling gradients from forward activation functions, we intentionally avoided extensive hyperparameter tuning through validation sets. All models and experiments were implemented using TensorFlow and PyTorch, and executed on GPU-accelerated computing platforms to ensure computational efficiency.

\begin{figure}[ht]
  \centering
   \subfloat[Constant]{\includegraphics[scale=0.25]{./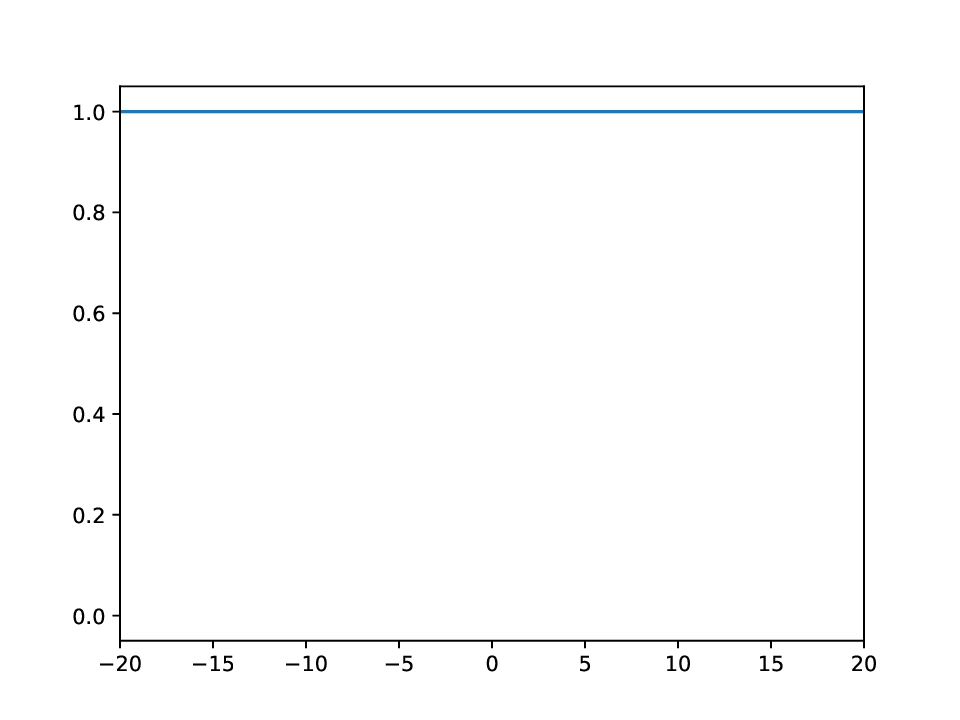}
    \label{fig:const}}
  \hfill
  \subfloat[Rectangular]{\includegraphics[scale=0.25]{./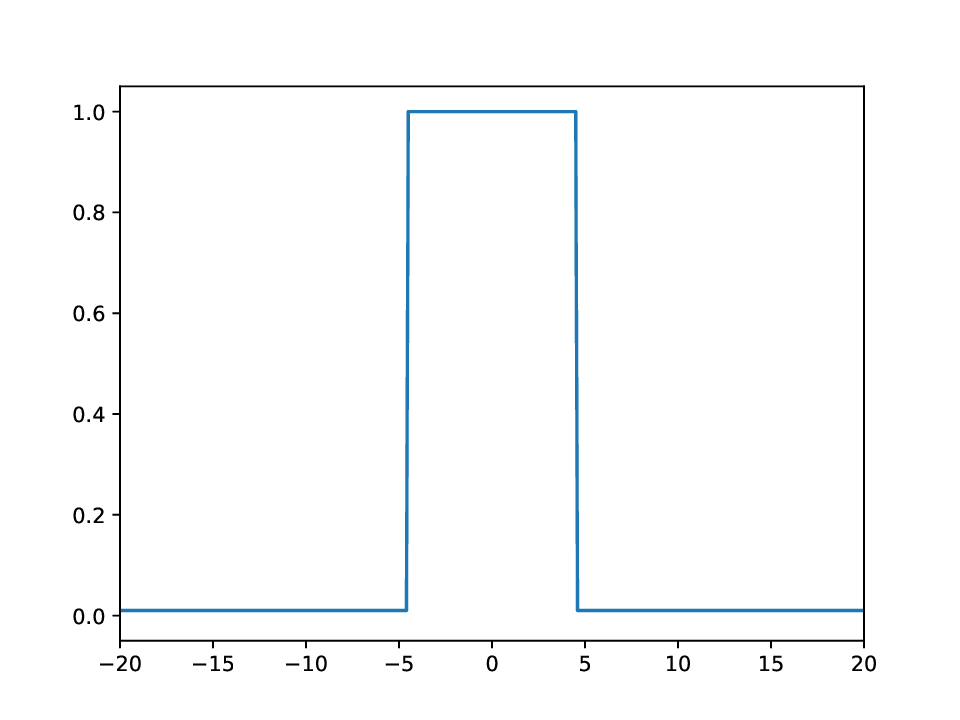}
    \label{fig:rect}}
     \hfill
  \subfloat[Triangular]{\includegraphics[scale=0.25]{./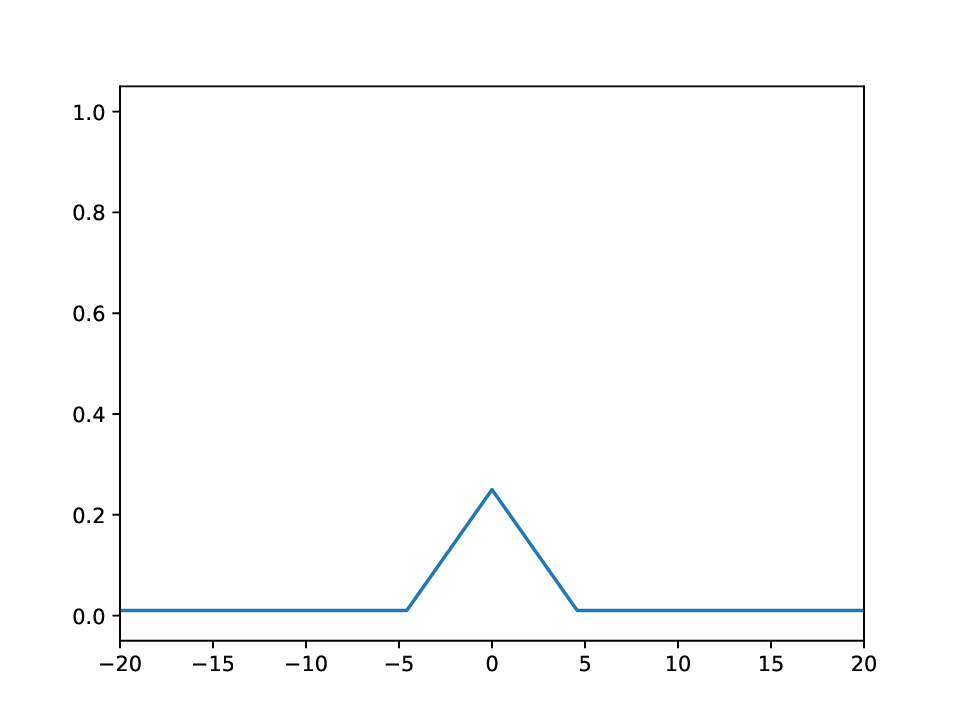}
    \label{fig:triangular}}
  
  \caption{Gradient contributions used in the experiments in place of the activation functions gradients. Specifically, we have considered a constant function (a), a rectangular function (c) a triangular function(d).}
  \label{fig:gradcontributions}
\end{figure}

We chose accuracy as the primary evaluation metric due to its straightforward interpretability and suitability for datasets with balanced class distributions, such as those employed here (Ones, Digits, MNIST, and Fashion-MNIST). Accuracy was calculated on the separate test sets (20\% of total samples), with results averaged across 20 independent experimental runs. We report both the mean and standard deviation to indicate consistency and robustness of performance. Although alternative metrics (e.g., precision, recall, F1-score, AUC-ROC) were considered, accuracy was preferred for its simplicity and ease of comparability, particularly given the balanced nature of the datasets and the absence of significant class imbalance.

}

\subsection{Experiment 1: Single Unit Classifier}

In the initial experiment, we examine a basic neural architecture consisting of a single-layer unit classifier (SUC), as depicted in Fig. \ref{fig:SUC}. Historically significant as a foundational model in neural network research from the mid-1940s to the mid-1960s, the SUC serves as an ideal platform for verifying the theoretical propositions outlined previously.

The classification task chosen for this experiment involves determining whether the proportion of 1s in the input vector \( \bm{X} \) exceeds a predefined threshold (e.g., 70\%). Theoretically, the decision space for this problem is linearly separable, ensuring non-intersecting convex envelopes for each feature class. This setup enables a focused investigation into the role of the activation function's gradient during backpropagation and its impact on parameter updates.

Using a constant gradient in place of the logistic function's derivative during backpropagation presents several notable benefits:
\begin{enumerate}
\item \textbf{Computational and Memory Efficiency:} Eliminating the derivative calculation simplifies and speeds up backpropagation, reducing computational overhead and memory usage.
\item \textbf{Mitigation of Saturation Issues:} Avoiding conventional activation gradients prevents the slowdown or stagnation associated with activation saturation, common in sigmoid or tanh functions.
\item \textbf{Enhanced Training Stability:} Consistent gradient magnitude ensures stable and predictable training, particularly useful when traditional activation gradients become excessively small or large.
\end{enumerate}

Table \ref{tab:exp1} summarizes results obtained using identity, sigmoid, and Heaviside step activation functions, evaluated across varying input dimensions (2, 10, 100), batch sizes (32, 64), and learning rates (0.01, 0.1).

\begin{table}[!ht]
\setlength\tabcolsep{0pt}
\caption{Experiment 1 results. Training a Single Unit Classifier under tied and untied configurations across varying dimensions (Dim), batch sizes (BS), and learning rates (LR). The untied configuration employs a constant gradient instead of the logistic function's derivative.}
\label{tab:exp1}
\begin{tabular*}{\textwidth}{@{\extracolsep{\fill}} ccc|cc}
\toprule
\multicolumn{3}{c}{\textbf{Configuration}} & Log/d(Log) & \textit{Log/1} \\
\multicolumn{1}{c}{\textbf{Dim}} & \multicolumn{1}{c}{\textbf{BS}} & \multicolumn{1}{c}{\textbf{LR}} & \multicolumn{2}{c}{}\\
\midrule
2 & 32 & 0.01 & 0.9938 (0.003) & 0.9967 (0.002) \\
 & & 0.1 & \textbf{0.9982 (0.001)} & \textbf{0.9986 (0.001)} \\
 & 64 & 0.01 & 0.9906 (0.003) & 0.9950 (0.002) \\
 & & 0.1 & 0.9976 (0.002) & 0.9984 (0.001) \\
\hline
10 & 32 & 0.01 & 0.9957 (0.002) & 0.9965 (0.002) \\
 & & 0.1 & 0.9971 (0.001) & 0.9977 (0.001) \\
 & 64 & 0.01 & 0.9948 (0.002) & 0.9961 (0.002) \\
 & & 0.1 & 0.9967 (0.001) & 0.9973 (0.001) \\
\hline
100 & 32 & 0.01 & 0.9853 (0.004) & 0.9871 (0.003) \\
 & & 0.1 & 0.9886 (0.002) & 0.9903 (0.002) \\
 & 64 & 0.01 & 0.9536 (0.009) & 0.9859 (0.003) \\
 & & 0.1 & 0.9880 (0.002) & 0.9893 (0.002) \\
\bottomrule
\end{tabular*}
\end{table}

Results from Table \ref{tab:exp1} indicate that replacing the logistic activation gradient with a constant value does not negatively affect accuracy; rather, it demonstrates comparable or improved robustness, especially at higher learning rates. This finding challenges the conventional belief that symmetric gradient propagation (tied configuration) is critical for effective learning.

The untied configuration consistently shows stable or superior performance relative to the tied configuration, demonstrating robustness across varying input dimensions, batch sizes, and learning rates. As expected, accuracy declines slightly with increased dimensionality for both, particularly at higher learning rates, performance remains equal to or better than the traditional tied method. The minimal standard deviations highlight the stability and reliability of the untied approach across repeated trials.

Overall, Experiment 1 presents the untied configuration as a viable and robust alternative to traditional backpropagation methods, aligning well with our theoretical propositions.

\subsection{Experiment 2: Multilayer Perceptron}

In this experiment, conducted using the Digits dataset, we explored the training effectiveness of a standard two-layer Multi-Layer Perceptron (MLP) using sigmoid activation when the activation function's gradient is replaced by a constant. We considered two configurations: the \textit{tied configuration}, where the gradient is derived from the sigmoid derivative, and the \textit{untied configuration}, where the sigmoid gradient is substituted by a constant value of 1. Experiments spanned batch sizes (BS) of 32, 64, and 128, and learning rates (LR) of 0.01, 0.05, and 0.1.

\begin{table}[!ht]
\setlength\tabcolsep{0pt}
\caption{Experiment 2 results. Mean accuracy (with standard deviations) on the test set for MLPs under different hidden-unit configurations, batch sizes (BS), and learning rates (LR). \textit{Log/d(Log)} represents the tied configuration; \textit{Log/1} denotes the untied configuration.}
\label{exp2}
{\footnotesize
\begin{tabular*}{\textwidth}{@{\extracolsep{\fill}} cc|cccc}
\toprule
\multicolumn{2}{c}{\textbf{Hidden Units}} & \multicolumn{2}{c}{32} & \multicolumn{2}{c}{64} \\
\multicolumn{2}{c}{\textbf{Configuration}} & Log/d(Log) & \textit{Log/1} & Log/d(Log) & \textit{Log/1}\\
\textbf{BS} & \textbf{LR} & & & & \\
\midrule
32 & 0.01 & 0.9663 (0.007) & 0.9351 (0.013) & 0.9733 (0.008) & 0.9474 (0.009) \\
 & 0.05 & \textbf{0.9666 (0.010)} & 0.9364 (0.011) & 0.9732 (0.007) & 0.9452 (0.011) \\
 & 0.1 & 0.9585 (0.017) & 0.9364 (0.015) & 0.9713 (0.009) & 0.9449 (0.011) \\
\hline
64 & 0.01 & 0.9597 (0.008) & \textbf{0.9375 (0.013)} & \textbf{0.9742 (0.009)} & \textbf{0.9480 (0.013)} \\
 & 0.05 & 0.9616 (0.013) & 0.9364 (0.012) & 0.9722 (0.007) & 0.9506 (0.009) \\
 & 0.1 & 0.8502 (0.064) & 0.9339 (0.013) & 0.5841 (0.344) & 0.9411 (0.019) \\
\hline
128 & 0.01 & 0.9561 (0.014) & 0.9217 (0.032) & 0.9727 (0.009) & 0.9412 (0.012) \\
 & 0.05 & 0.8119 (0.131) & 0.9326 (0.015) & 0.7725 (0.209) & 0.9484 (0.017) \\
 & 0.1 & 0.0984 (0.025) & 0.8732 (0.041) & 0.0947 (0.016) & 0.8732 (0.037) \\
\bottomrule
\end{tabular*}
}
\end{table}

\begin{figure}[htp] 
    \centering
    \subfloat[Training accuracy]{%
        \includegraphics[width=0.5\textwidth]{./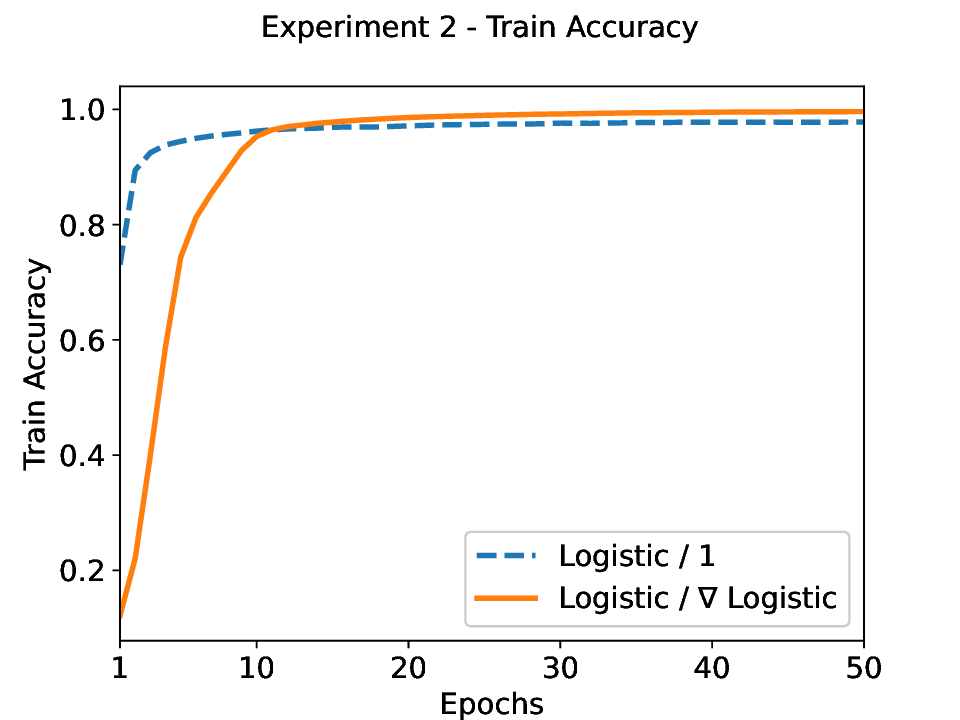}%
        \label{fig:a}%
    }%
    \hfill%
    \subfloat[Training loss]{%
        \includegraphics[width=0.5\textwidth]{./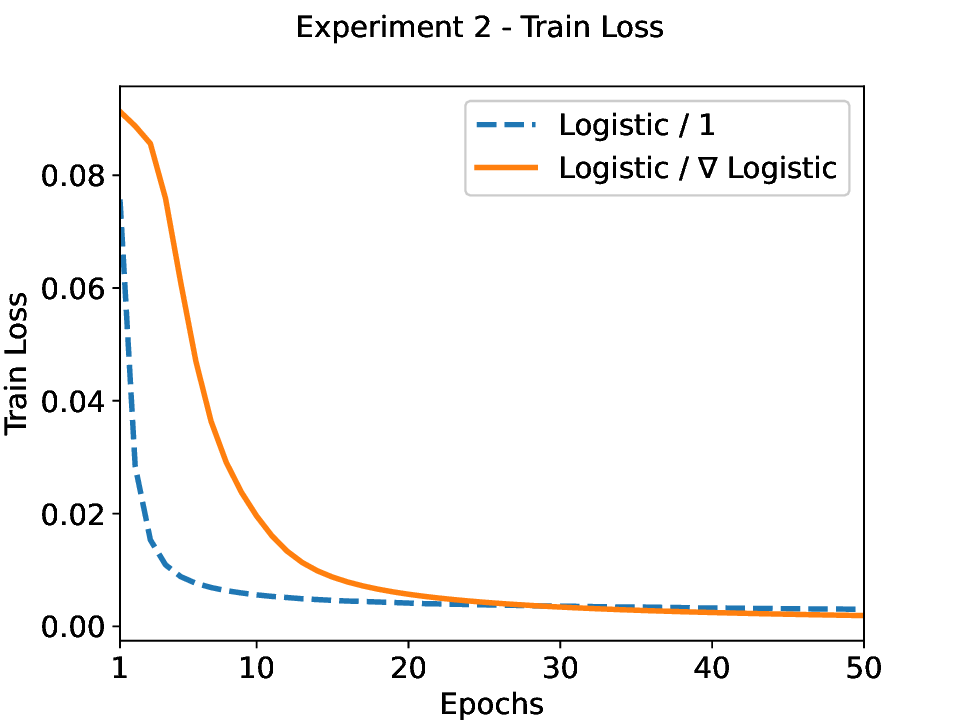}%
        \label{fig:b}%
    }%
    \caption{Training accuracy and loss trends for tied and untied gradient configurations in the MLP architecture with sigmoid activation (Experiment 2).}
\end{figure}

Table \ref{exp2} and Figures \ref{fig:a}-\ref{fig:b} highlight several findings from Experiment 2:
\begin{enumerate}

\item \textbf{Convergence capability}: The untied configuration successfully converges during training, although the tied configuration generally achieves slightly higher accuracy across most experimental conditions.

\item \textbf{Rapid convergence of the untied configuration}: Figures \ref{fig:a} and \ref{fig:b} demonstrate that the untied configuration achieves faster initial convergence compared to the tied configuration, indicating potential efficiency advantages.

\item \textbf{Sensitivity to learning rate}: Higher learning rates significantly impact the tied configuration, causing instability. This effect is particularly evident for larger hidden layers (128 units), where the tied configuration fails to converge at a learning rate (LR) of 0.1, whereas the untied configuration continues to converge, albeit with reduced performance.

\item \textbf{Stability of the untied configuration}: Using a constant gradient provides inherent stability, consistently reflected in robust performance across varying conditions.

\end{enumerate}

These findings support the use of constant gradients as a practical alternative, particularly when stability and training speed are critical considerations. This configuration simplifies gradient computation and offers significant potential to reduce training time, as evidenced by the accelerated convergence shown in Figure \ref{fig:a}. Thus, employing constant gradient modulation not only maintains effective training convergence but also enhances computational efficiency and stability.
 
\subsection{Experiment 3: Convolutional Neural Networks}

In this experiment, we extend our investigation to convolutional neural networks (CNNs), aiming to explore the implications of constant and alternative gradient modulation methods. The well-known MNIST dataset, a widely-used benchmark for CNN evaluation, was utilized alongside the canonical LeNet5 architecture to assess three distinct gradient modulation scenarios. First, the tied configuration (Logistic/d(Logistic)) directly uses the gradient derived from the logistic activation function's derivative. Second, the untied constant gradient configuration (Logistic/1) replaces this traditional gradient with a constant value of 1, thereby simplifying the gradient computation process. Third, the untied rectangular gradient configuration (Logistic/Rectangular) employs a rectangular window function to modulate gradients, introducing a controlled variation in gradient magnitude independent of the activation function's derivative.

Experiment 3 explored the effects of various gradient modulation strategies—tied logistic derivatives, constant gradients, and rectangular gradients—in training the LeNet5 convolutional neural network (CNN). A variety of hyperparameter settings were tested, specifically batch sizes of 256, 512, and 1024, paired with learning rates of 0.005, 0.01, 0.05, and 0.1. The mean accuracy and standard deviation across 20 experimental runs are detailed in Table \ref{exp3}.
\begin{table}[ht]
\setlength\tabcolsep{0pt}
\caption{Experiment 3 results. Mean test accuracy (with standard deviations) for LeNet5 under tied (\textit{Log / d(Log)}), constant gradient (\textit{Log / 1}), and rectangular gradient (\textit{Log / Rect}) configurations. Results are provided for varying batch sizes (BS) and learning rates (LR).}
\label{exp3}
\begin{tabular*}{\textwidth}{@{\extracolsep{\fill}} cc|ccc}
\toprule
 \textbf{BS} & \textbf{LR} & Log/d(Log) & \textit{Log/1} & \textit{Log/Rect} \\
\midrule
256 & 0.005 & 0.1118 (0.002) & 0.7783 (0.155) & 0.9601 (0.002) \\
    & 0.01  & 0.1170 (0.016) & \textbf{0.9021 (0.010)} & 0.9771 (0.001) \\
    & 0.05  & 0.9708 (0.001) & 0.8852 (0.017) & 0.9827 (0.002) \\
    & 0.1   & \textbf{0.9812 (0.001)} & 0.6708 (0.289) & \textbf{0.9831 (0.002)} \\
\hline
512 & 0.005 & 0.1118 (0.002) & 0.1989 (0.166) & 0.7232 (0.002) \\
    & 0.01  & 0.1118 (0.002) & 0.8666 (0.004) & 0.9596 (0.002) \\
    & 0.05  & 0.9236 (0.002) & 0.7175 (0.303) & 0.9819 (0.002) \\
    & 0.1   & 0.9697 (0.001) & 0.8612 (0.011) & 0.9801 (0.001) \\
\hline
1024& 0.005 & 0.1119 (0.002) & 0.1751 (0.065) & 0.1119 (0.002) \\
    & 0.01  & 0.1119 (0.002) & 0.7698 (0.007) & 0.7155 (0.002) \\
    & 0.05  & 0.1865 (0.046) & 0.8182 (0.163) & 0.9784 (0.002) \\
    & 0.1   & 0.9104 (0.002) & 0.8560 (0.010) & 0.9655 (0.030) \\
\bottomrule
\end{tabular*}
\end{table}

All configurations displayed considerable sensitivity to the chosen learning rate, which is not surprising. Performance varied significantly at higher learning rates, emphasizing the importance of meticulous hyperparameter tuning. Furthermore, larger batch sizes combined with lower learning rates presented convergence challenges, highlighting the critical role that careful batch size selection plays in ensuring effective model training.

A key outcome of this experiment is that the Log/Rect configuration consistently maintained stable performance across varying hyperparameters, unlike the tied configuration, which exhibited significant performance declines under certain conditions. In contrast, the Log/Const configuration showed a notably larger range of performance variability compared to the baseline tied configuration. This observation underscores the potential advantages of employing rectangular gradient modulation strategies, highlighting their robustness and stability in convolutional neural network training.

\subsection{Experiment 4: CNNs with ReLU}

Building upon insights from previous experiments, we further explore convolutional neural networks (CNNs) by examining the effects of different gradient modulation approaches in conjunction with Rectified Linear Unit (ReLU) activations. We maintained the LeNet5 architecture as our standard experimental baseline.

The analysis focused on comparing five specific gradient modulation configurations:
\begin{enumerate}
\item The traditional tied configuration (ReLU/d(ReLU)), utilizing the gradient of the ReLU activation function.
\item An untied configuration using the derivative of the logistic activation (ReLU/d(Logistic)).
\item An untied configuration employing a constant gradient (ReLU/1).
\item An untied rectangular gradient modulation approach (ReLU/Rect).
\item An untied triangular gradient modulation strategy (ReLU/Triang).
\end{enumerate}

A comprehensive set of hyperparameter combinations was assessed, specifically batch sizes of 256, 512, and 1024, paired with learning rates of 0.005, 0.01, 0.05, and 0.1. Mean test accuracies and standard deviations from 20 experimental runs are presented in Table \ref{exp4}.

\begin{table}[ht]
\setlength\tabcolsep{0pt}
\caption{Experiment 4 results. Mean test accuracy (standard deviations in parentheses) for LeNet5 with ReLU activations under one tied (\textit{ReLU/H}) and four untied configurations (\textit{ReLU/d(Logistic)}, \textit{ReLU/1}, \textit{ReLU/Triang}, \textit{ReLU/Rect}), evaluated across varying batch sizes (BS) and learning rates (LR). Bold values highlight best performance per row.}
\label{exp4}
{\scriptsize
\begin{tabular*}{\textwidth}{@{\extracolsep{\fill}}cc|ccccc}
\toprule
 \multicolumn{2}{c}{\textbf{Configuration}} & ReLU/H & \textit{ReLU/d(Logistic)} & \textit{ReLU/1} & \textit{ReLU/Triang} & \textit{ReLU/Rect} \\
 \multicolumn{1}{c}{\textbf{BS}} & \multicolumn{1}{c}{\textbf{LR}} \\
\midrule
256 & 0.005 & 0.9832 (0.001) & 0.9055 (0.002) & \textbf{0.9810 (0.001)} & 0.9022 (0.002) & 0.9735 (0.001) \\
 & 0.01 & 0.9857 (0.001) & 0.9426 (0.001) & 0.9789 (0.002) & 0.9409 (0.001) & 0.9777 (0.001) \\
 & 0.05 & \textbf{0.9883 (0.001)} & 0.9762 (0.001) & 0.1123 (0.002) & 0.9766 (0.001) & 0.9779 (0.001) \\
 & 0.1 & 0.9003 (0.262) & \textbf{0.9809 (0.001)} & 0.1120 (0.002) & \textbf{0.9816 (0.001)} & 0.2794 (0.332) \\
\hline
512 & 0.005 & 0.9785 (0.001) & 0.7785 (0.003) & 0.9787 (0.001) & 0.7469 (0.003) & 0.9640 (0.001) \\
 & 0.01 & 0.9833 (0.001) & 0.9055 (0.002) & 0.9809 (0.001) & 0.9022 (0.002) & 0.9735 (0.001) \\
 & 0.05 & 0.9875 (0.001) & 0.9684 (0.001) & 0.1111 (0.001) & 0.9683 (0.001) & 0.9798 (0.001) \\
 & 0.1 & 0.9630 (0.004) & 0.9761 (0.001) & 0.1125 (0.002) & 0.9766 (0.001) & 0.9768 (0.004) \\
\hline
1024 & 0.005 & 0.9679 (0.001) & 0.5075 (0.011) & 0.9727 (0.001) & 0.5020 (0.011) & 0.9464 (0.001) \\
 & 0.01 & 0.9783 (0.001) & 0.7737 (0.003) & 0.9785 (0.001) & 0.7405 (0.003) & 0.9641 (0.001) \\
 & 0.05 & 0.9866 (0.001) & 0.9511 (0.001) & 0.1119 (0.002) & 0.9497 (0.001) & 0.9788 (0.001) \\
 & 0.1 & 0.8986 (0.262) & 0.9683 (0.001) & 0.1111 (0.002) & 0.9682 (0.001) & \textbf{0.9804 (0.001)} \\
\bottomrule
\end{tabular*}}
\end{table}

The traditional tied configuration (ReLU/H) consistently delivered strong and reliable performance, affirming its established efficacy. However, several interesting observations emerged concerning the untied gradient configurations. Both the ReLU/d(Logistic) and ReLU/Rect configurations frequently matched or even exceeded the performance of the traditional tied approach under certain hyperparameter settings, highlighting their potential utility. Notably, larger batch sizes tended to enhance performance in untied configurations, possibly due to improved gradient estimation and stability in updates.

Conversely, the ReLU/1 configuration experienced notable performance drops under specific conditions, underscoring its sensitivity and potential instability relative to other configurations. Furthermore, pronounced variations in performance at higher learning rates across all configurations clearly illustrate the critical role of careful hyperparameter tuning.

In summary, these findings demonstrate the practical viability and potential advantages of employing untied gradient modulation configurations when training CNNs with ReLU activations. These approaches, particularly the ReLU/Rect configuration, offer promising alternatives to conventional methods. Nevertheless, achieving optimal performance demands meticulous hyperparameter optimization, given the observed sensitivity to learning rate adjustments.

\subsection{Experiment 5: Randomized Gradient Modulation}

In this experiment, we investigate neural network training when the gradient magnitude is entirely disrupted and replaced by uniform random noise sampled from the interval \([0,1]\). To ensure comparability, all noise functions are normalized such that their mean aligns with the average gradient magnitude observed in traditional gradient configurations. We use the LeNet5 architecture trained on the DIGITS dataset. Specifically, we explore three distinct noise modulation methods: \textit{Full-Jamming}, \textit{Positive-Jamming}, and \textit{Rectangular-Jamming}, as illustrated in Figure~\ref{fig:jamming}.

The noise modulation techniques are defined as follows:

\begin{itemize}
    \item \textbf{Full-Jamming (Fig.~\ref{fig:fulljam}):} The backward gradient is completely replaced by a uniformly random value from the interval \([0,1]\), representing the most disruptive scenario. This approach ensures that only the sign (directionality) of the gradient is preserved.
    
    \item \textbf{Positive-Jamming (Fig.~\ref{fig:posjam}):} The backward gradient is replaced by a uniformly random value from the interval \([0,1]\) exclusively when the feedback gradient is non-negative. Negative feedback gradient values are set to \(0\).
    
    \item \textbf{Rectangular-Jamming (Fig.~\ref{fig:rectjam}):} The backward gradient is replaced by a uniformly random value from the interval \([0,1]\) within a specified interval of feedback gradient values and is set to \(0\) outside this range.
\end{itemize}

\begin{figure}[!ht]
  \centering
  \subfloat[Full-Jamming]{\includegraphics[scale=0.30]{./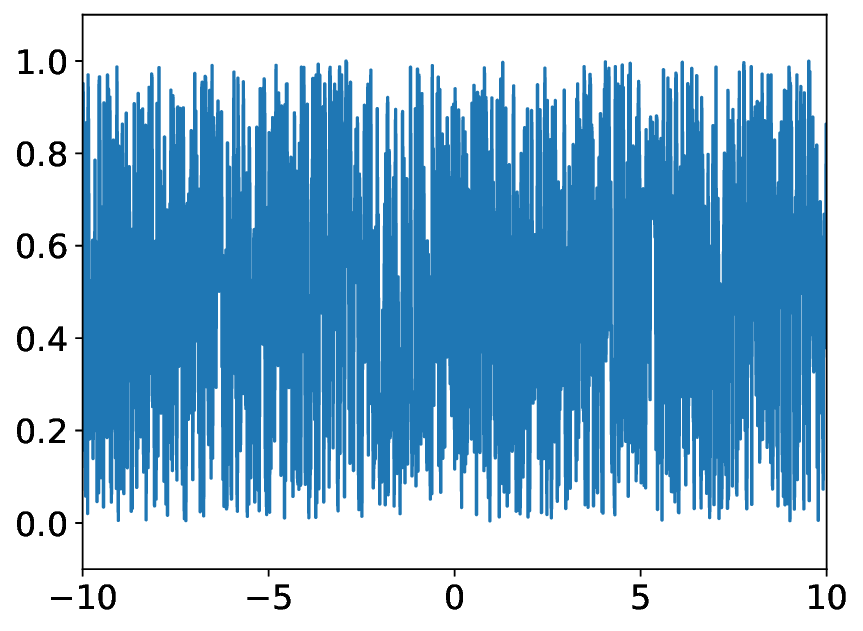}
    \label{fig:fulljam}}
  \hfill
  \subfloat[Positive-Jamming]{\includegraphics[scale=0.30]{./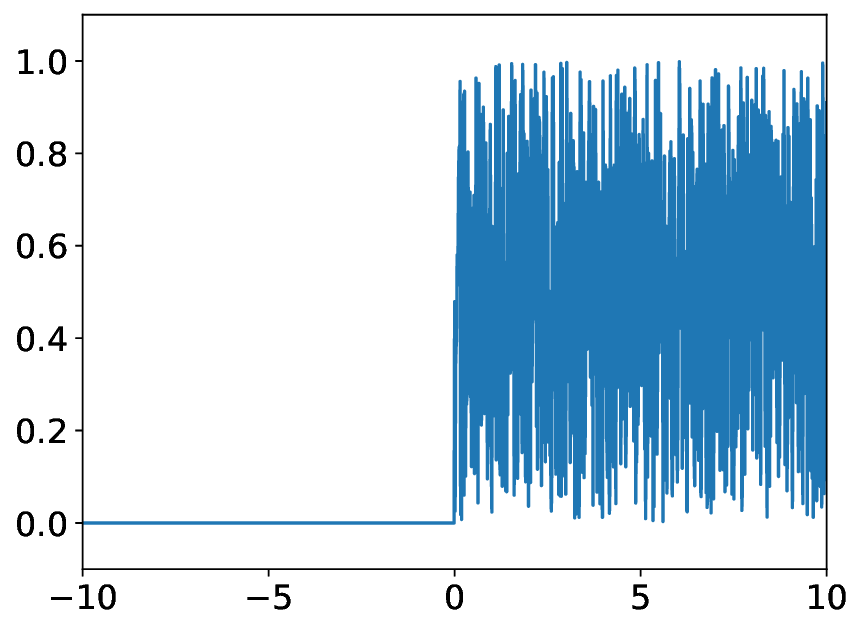}
    \label{fig:posjam}}
  \hfill
  \subfloat[Rectangular-Jamming]{\includegraphics[scale=0.30]{./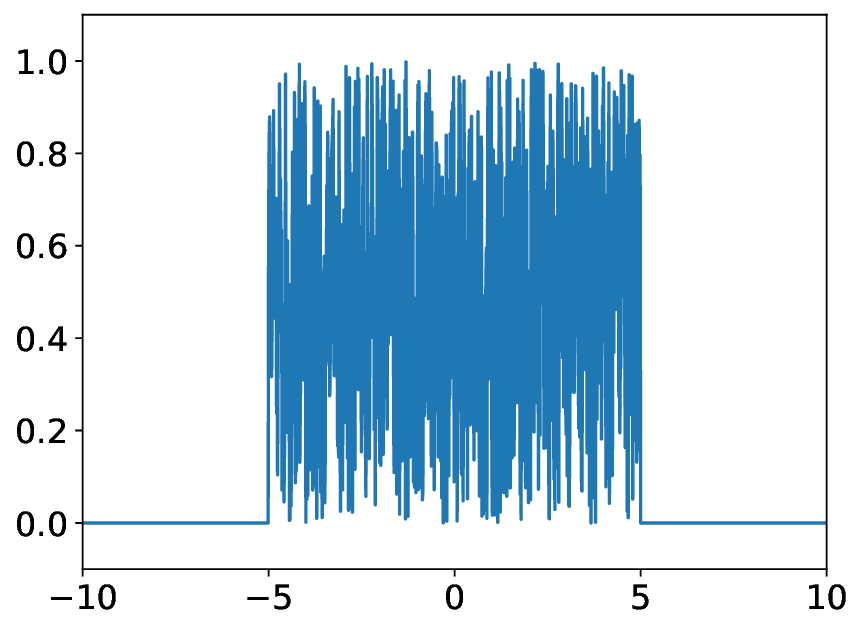}
    \label{fig:rectjam}}
  \caption{Noise functions used for gradient modulation in Experiment~6: (a) Full-Jamming, (b) Positive-Jamming, and (c) Rectangular-Jamming. In (a), \(y\) is randomized for all \(x \in [-10,10]\); in (b), \(y=0\) for \(x<0\) and randomized for \(x \ge 0\); in (c), \(y\) is randomized for \(|x| \le 5\) and \(0\) otherwise. Each function replaces activation-gradient contributions with randomized values.}
  \label{fig:jamming}
\end{figure}

The results from Experiment 5 highlight a compelling aspect of neural network training: networks exhibit considerable resilience to random perturbations in gradient magnitudes, largely maintaining their classification accuracy despite stochastic interference. Specifically, Table~\ref{exp5_1} demonstrates that CNN performance remains remarkably consistent even when the gradient amplitude is entirely randomized (Full-Jamming). A comparative analysis between the performance metrics of jammed gradient configurations (Table~\ref{exp5_1}) and their traditional gradient counterparts (referenced from earlier experiments) reveals closely aligned accuracy levels.

This robustness strongly supports the hypothesis that gradient direction, rather than its precise magnitude, serves as the primary driver guiding the network's training trajectory. The directional information provided by gradients appears sufficient to ensure effective model convergence and successful classification, relegating gradient amplitude to a secondary role.

\begin{table}[!ht]
\setlength\tabcolsep{0pt}
\caption{Experiment 5 results (Full-Jamming). Mean test accuracy (with standard deviations) for LeNet5 trained with Logistic, ReLU, and Linear activation functions using a Full-Jamming (FJ) gradient modulation approach (see Figure~\ref{fig:fulljam}). Results across various batch sizes (BS) and learning rates (LR) are reported. Boldface indicates the best performance per row.}
\label{exp5_1}
{\begin{tabular*}{\columnwidth}{@{\extracolsep\fill}@{\hskip6pt}lllll@{\hskip6pt}}
\toprule
\multicolumn{2}{l}{{Configuration}} & \textit{Log/FJ} & \textit{ReLU/FJ} & \textit{Linear/FJ}\\
\cmidrule(l{0pt}r{5pt}){1-2}
{BS} & {LR} & & &\\
\midrule
32  & 0.001 & 0.4790 (0.1134) & 0.9225 (0.0069) & 0.8969 (0.0026)\\
    & 0.002 & 0.5452 (0.2516) & 0.9373 (0.0037) & 0.9028 (0.0029)\\
    & 0.005 & 0.7846 (0.0227) & \textbf{0.9515 (0.0062)} & 0.9099 (0.0036)\\
    & 0.01  & 0.8467 (0.0166) & 0.1022 (0.0061) & 0.9127 (0.0032)\\
    & 0.02  & \textbf{0.8543 (0.0366)} & 0.1006 (0.0064) & \textbf{0.9160 (0.0026)}\\
\midrule
64  & 0.001 & 0.1611 (0.0892) & 0.8981 (0.0046) & 0.8872 (0.0027)\\
    & 0.002 & 0.4961 (0.1274) & 0.9216 (0.0025) & 0.8960 (0.0022)\\
    & 0.005 & 0.6906 (0.0789) & 0.9453 (0.0059) & 0.9041 (0.0027)\\
    & 0.01  & 0.7827 (0.0549) & 0.9425 (0.0115) & 0.9101 (0.0026)\\
    & 0.02  & 0.8260 (0.0333) & 0.1006 (0.0064) & 0.9145 (0.0033)\\
\midrule
128 & 0.001 & 0.1099 (0.0027) & 0.8598 (0.0133) & 0.8718 (0.0043)\\
    & 0.002 & 0.1195 (0.0146) & 0.9029 (0.0064) & 0.8881 (0.0031)\\
    & 0.005 & 0.5080 (0.1367) & 0.9288 (0.0049) & 0.8993 (0.0028)\\
    & 0.01  & 0.6709 (0.1254) & 0.9450 (0.0050) & 0.9055 (0.0032)\\
    & 0.02  & 0.8056 (0.0282) & 0.9311 (0.0341) & 0.9099 (0.0029)\\
\bottomrule
\end{tabular*}}
\end{table}

In the case of Full-Jamming, Table~\ref{exp5_1} highlights several critical insights regarding network behavior when the gradient magnitude is completely disrupted. Remarkably, CNNs using ReLU and Linear activations maintain high accuracy despite randomized gradient magnitudes.

Conversely, the Logistic activation exhibits significant sensitivity. This inability of the Logistic activation to converge under the Full-Jamming scenario likely arises from a combination of its bounded and symmetric output range and the complete randomization of gradient magnitudes. Specifically, Logistic outputs are strictly constrained between 0 and 1, with symmetric saturation regions near both extremes. When weight updates are forced in these regions, parameter changes become effectively random and symmetric, leading to oscillations or aimless fluctuations. Consequently, the network parameters fail to systematically approach regions of improved performance. In contrast, activations such as ReLU and Linear---having unbounded or partially unbounded output ranges---allow even randomly perturbed gradients to provide sufficient directional signals, facilitating effective convergence despite the absence of accurate gradient magnitude information.

Additionally, smaller batch sizes enhance performance, suggesting that more frequent parameter updates may partially compensate for the loss of precise gradient magnitude information. Lastly, the stability and reliability of ReLU and Linear activations remain consistently strong across various learning rates, with slightly greater consistency observed at larger batch sizes.

\begin{table}[!ht]
\setlength\tabcolsep{0pt}
\caption{Experiment 5 results (Positive-Jamming). Mean test accuracy (with standard deviations) for LeNet5 with Logistic, ReLU, and Linear activations using Positive-Jamming (PJ) gradient modulation (see Figure~\ref{fig:posjam}). Results for different batch sizes (BS) and learning rates (LR) are provided. Boldface highlights the best accuracy per row.}
\label{exp5_2}
{\begin{tabular*}{\columnwidth}{@{\extracolsep\fill}@{\hskip6pt}lllll@{\hskip6pt}}
\toprule
\multicolumn{2}{l}{{Configuration}} & \textit{Log/PJ} & \textit{ReLU/PJ} & \textit{Linear/PJ}\\
\cmidrule(l{0pt}r{5pt}){1-2}
{BS} & {LR} & & &\\
\midrule
32  & 0.001 & 0.1099 (0.0027) & 0.9211 (0.0050) & 0.8908 (0.0032)\\
    & 0.002 & 0.2052 (0.0708) & 0.9452 (0.0025) & 0.8188 (0.0967)\\
    & 0.005 & 0.2356 (0.0460) & 0.9699 (0.0025) & 0.0981 (0.0014)\\
    & 0.01  & 0.2352 (0.0404) & 0.9776 (0.0010) & 0.0981 (0.0014)\\
    & 0.02  & 0.1788 (0.0761) & \textbf{0.9831 (0.0019)} & 0.0981 (0.0014)\\
\midrule
64  & 0.001 & 0.1099 (0.0027) & 0.8865 (0.0088) & 0.8754 (0.0033)\\
    & 0.002 & 0.1099 (0.0027) & 0.9197 (0.0065) & 0.8925 (0.0037)\\
    & 0.005 & 0.2010 (0.0765) & 0.9555 (0.0038) & 0.0981 (0.0014)\\
    & 0.01  & 0.2329 (0.0394) & 0.9691 (0.0021) & 0.0981 (0.0014)\\
    & 0.02  & \textbf{0.2647 (0.0751)} & 0.9782 (0.0029) & 0.0981 (0.0014)\\
\midrule
128 & 0.001 & 0.1099 (0.0027) & 0.5203 (0.2741) & 0.8338 (0.0170)\\
    & 0.002 & 0.1099 (0.0027) & 0.8851 (0.0119) & 0.8764 (0.0039)\\
    & 0.005 & 0.1099 (0.0027) & 0.9297 (0.0051) & \textbf{0.8957 (0.0034)}\\
    & 0.01  & 0.2085 (0.0500) & 0.9501 (0.0039) & 0.0981 (0.0014)\\
    & 0.02  & 0.2296 (0.0774) & 0.9664 (0.0012) & 0.0981 (0.0014)\\
\bottomrule
\end{tabular*}}
\end{table}

\begin{table}[!ht]
\setlength\tabcolsep{0pt}
\caption{Experiment 5 results (Rectangular-Jamming). Mean test accuracy (standard deviation in parentheses) of LeNet5 with Logistic, ReLU, and Linear activation functions under Rectangular-Jamming (RJ) gradient modulation (see Figure~\ref{fig:rectjam}). Results are reported for different batch sizes (BS) and learning rates (LR). Bold values indicate the highest accuracy per row.}
\label{exp5_3}
{\begin{tabular*}{\columnwidth}{@{\extracolsep\fill}@{\hskip6pt}lllll@{\hskip6pt}}
\toprule
\multicolumn{2}{l}{{Configuration}} & \textit{Log/RJ} & \textit{ReLU/RJ} & \textit{Linear/RJ}\\
\cmidrule(l{0pt}r{5pt}){1-2}
{BS} & {LR} & & &\\
\toprule
32  & 0.001 & 0.7807 (0.0182) & 0.9121 (0.0064) & 0.8971 (0.0026)\\
    & 0.002 & 0.9227 (0.0047) & 0.9349 (0.0044) & 0.9023 (0.0024)\\
    & 0.005 & 0.9670 (0.0026) & 0.9565 (0.0018) & 0.8931 (0.0275)\\
    & 0.01  & 0.9799 (0.0011) & \textbf{0.9620 (0.0062)} & 0.8350 (0.0708)\\
    & 0.02  & \textbf{0.9845 (0.0010)} & 0.9266 (0.0169) & 0.6144 (0.4102)\\
64  & 0.001 & 0.1451 (0.0560) & 0.8934 (0.0049) & 0.8871 (0.0017)\\
\midrule
    & 0.002 & 0.7850 (0.0188) & 0.9171 (0.0042) & 0.8974 (0.0024)\\
    & 0.005 & 0.9373 (0.0027) & 0.9443 (0.0024) & 0.9047 (0.0032)\\
    & 0.01  & 0.9674 (0.0021) & 0.9558 (0.0061) & 0.9014 (0.0206)\\
    & 0.02  & 0.9800 (0.0012) & 0.9545 (0.0073) & 0.8773 (0.0709)\\
\midrule
128 & 0.001 & 0.1099 (0.0027) & 0.8473 (0.0092) & 0.8686 (0.0030)\\
    & 0.002 & 0.2332 (0.1232) & 0.8935 (0.0062) & 0.8875 (0.0023)\\
    & 0.005 & 0.8546 (0.0126) & 0.9221 (0.0043) & 0.8990 (0.0035)\\
    & 0.01  & 0.9347 (0.0040) & 0.9409 (0.0045) & 0.9051 (0.0026)\\
    & 0.02  & 0.9680 (0.0023) & 0.9548 (0.0054) & \textbf{0.9099 (0.0038)}\\
\bottomrule
\end{tabular*}}
\end{table}

The results from the Positive-Jamming and Rectangular-Jamming scenarios further reinforce the key insights derived from the Full-Jamming experiment. Specifically, \hyperref[exp5_2]{Tables~\ref{exp5_2}} and~\ref{exp5_3} illustrate that CNNs employing ReLU and Linear activations maintain high accuracy despite selective gradient magnitude disruptions, particularly at higher learning rates. ReLU activation consistently demonstrates robustness in both Positive-Jamming and Rectangular-Jamming scenarios, achieving accuracy levels up to 98~\%. Similarly, the Linear activation function exhibits strong adaptability, delivering comparable accuracy levels under these conditions.

In contrast, the Logistic activation continues to show pronounced sensitivity. Under Positive-Jamming conditions, its performance remains close to random chance, indicating persistent vulnerability to gradient magnitude perturbations in the saturation regime. This is further confirmed by the convergence of Logistic in the case of Rectangular-Jamming.

Collectively, these findings reaffirm the feasibility of breaking the traditional symmetry between forward activation functions and backward gradient computations. They demonstrate that explicitly calculating the activation function's contribution to gradients---and consequently to weight corrections---is not strictly necessary for effective neural network training. Instead, neural networks primarily rely on the directional information provided by gradients, with activations such as ReLU and Linear exhibiting substantial robustness against gradient magnitude disruptions. Conversely, Logistic activation, with its bounded and symmetric output range, shows significant vulnerability, indicating inherent limitations when subjected to gradient magnitude perturbations.

\subsection{Experiment 6: Linear Activation}

Moving from the findings from Experiment 5, this experiment aims to explore the feasibility of training a neural network (LeNet5) using a linear activation function, specifically the identity function. Specifically, we investigate breaking the forward-backward symmetry by employing a jammed gradient during backpropagation. The experiment is conducted using the Fashion-MNIST dataset.

The experiment investigates the effectiveness of gradient modulation strategies in compensating for the inherent limitations of linear activation. It is widely acknowledged in neural network literature that nonlinear activation functions are crucial for enabling networks to capture and represent complex data patterns. Conversely, linear activations, notably the identity function, restrict the expressive capability of neural networks, a limitation clearly illustrated by the performance results presented in Table~\ref{tab:fasion_basis}. These results demonstrate that networks using a purely linear activation configuration, i.e. the identity function (Linear/1), fail to achieve meaningful learning, consistently yielding low accuracy, while networks utilizing ReLU-induced nonlinearity (ReLU/H) attain significantly improved performance.

\begin{table}[]
\caption{Experiment 6 results. Mean test accuracy (with standard deviations in parentheses) of LeNet5 trained on Fashion-MNIST, keeping ht the forward-backward symmetry: Linear activation with constant gradient (\textit{Linear/1}) and ReLU activation with its Heaviside derivative (\textit{ReLU/H}). Results across various batch sizes (BS: 128, 256, 512) and learning rates (LR: 0.005, 0.01, 0.05, 0.1) demonstrate the critical role of ReLU-induced nonlinearity in achieving competitive accuracy, and underline the sensitivity of both models to hyperparameter selection. Boldface highlights the best-performing Linear activation cases.}
\label{tab:fasion_basis}
\begin{tabular*}{\textwidth}{@{\extracolsep{\fill}} cc|cc}
\toprule
\multicolumn{2}{l}{\textbf{Configuration}} & Linear/1 & ReLU/H \\
\textbf{BS} & \textbf{LR} &  &  \\ \midrule
128 & 0.005 & 0.1001 (0.0029) & 0.1460 (0.0048) \\
 & 0.01 & 0.1001 (0.0029) & 0.1991 (0.0101) \\
 & 0.05 & 0.1525 (0.0266) & 0.6833 (0.0033) \\
 & 0.1 & \textbf{0.5941 (0.0108)} & 0.7610 (0.0027) \\
\midrule
256 & 0.005 & 0.1001 (0.0029) & 0.8674 (0.0028) \\
 & 0.01 & 0.1001 (0.0029) & 0.8858 (0.0032) \\
 & 0.05 & 0.1528 (0.0247) & \textbf{0.8970 (0.0043)} \\
 & 0.1 & 0.4427 (0.0331) & 0.7716 (0.2898) \\
\midrule
512 & 0.005 & 0.1001 (0.0029) & 0.1345 (0.0044) \\
 & 0.01 & 0.1001 (0.0029) & 0.1865 (0.0051) \\
 & 0.05 & 0.1521 (0.0243) & 0.6563 (0.0030) \\
 & 0.1 & 0.3077 (0.0275) & 0.7435 (0.0026) \\ 
\bottomrule
\end{tabular*}
\end{table}

However, as discussed in Section 2, the role of any activation function can be interpreted as distributing activation values along a continuum, effectively modulating the response of parallel hyperplanes defined solely by the unit weights. These weights alone are responsible for rotating and translating the hyperplanes within the input space. While the geometric positioning—both orientation (rotation) and displacement (translation)—of these hyperplanes is entirely governed by the weights and bias terms, the activation function maps each linear output (associated with a particular hyperplane) into activation values. Therefore the contribution of activation function is given to backward path. From a theoretical perspective, it is insightful to reinterpret the role of the ReLU activation function by emphasizing two primary considerations:

\begin{enumerate}
\item \textbf{Absence of Saturation for Positive Inputs:} Unlike activation functions such as sigmoid or tanh, which experience gradient saturation for large input magnitudes, ReLU does not saturate for positive inputs. Specifically, the function remains linear for positive inputs, thereby facilitating unobstructed gradient propagation. This characteristic significantly contributes to its widespread adoption and success in deep neural network architectures.

\item \textbf{Emergent Sparsity via Zero Gradient for Negative Inputs:} For negative linear outputs, ReLU outputs zero, naturally resulting in a zero gradient during backpropagation. Although this behavior is not explicitly designed to halt weight updates, it inherently induces sparsity within the network. Neurons with negative linear outputs become inactive and consequently do not propagate gradient updates to their associated weights. This emergent sparsity allows networks to allocate computational resources selectively toward features actively contributing to output generation, enhancing both efficiency and representational effectiveness.
\end{enumerate}

\begin{table}[]
\caption{Experiment 6 results. Test accuracy (mean $\pm$ std) of LeNet5 on Fashion-MNIST with linear activation, under different noisy gradient modulation strategies: Full-Jamming (FJ), Positive-Jamming (PJ), and Rectangular-Jamming (RJ), across varying batch sizes (BS) and learning rates (LR).}

\label{tab:fasion_lin_jam}
\begin{tabular*}{\textwidth}{@{\extracolsep{\fill}} cc|ccc}
\toprule
\multicolumn{2}{l}{\textbf{Configuration}} & \textit{Linear/FJ} & \textit{Linear/PJ} & \textit{Linear/RJ} \\
\textbf{BS} & \textbf{LR} &  &  &  \\ \midrule
32 & 0.1 & 0.5074 (0.0150) & 0.3676 (0.0276) & 0.5102 (0.0172) \\
 & 0.5 & 0.7758 (0.0025) & 0.7504 (0.0027) & 0.7753 (0.0024) \\
 & 1.0 & 0.8113 (0.0027) & 0.7913 (0.0025) & 0.8106 (0.0030) \\
 & 1.2 & \textbf{0.8231 (0.0032)} & \textbf{0.7994 (0.0024)} & \textbf{0.8231 (0.0029)} \\
\midrule
64 & 0.1 & 0.4616 (0.0209) & 0.3628 (0.0288) & 0.4618 (0.0220) \\
 & 0.5 & 0.7728 (0.0023) & 0.7449 (0.0030) & 0.7721 (0.0022) \\
 & 1.0 & 0.8073 (0.0022) & 0.7893 (0.0024) & 0.8070 (0.0022) \\
 & 1.2 & 0.8194 (0.0031) & 0.7969 (0.0027) & 0.8190 (0.0026) \\
\midrule
128 & 0.1 & 0.3852 (0.0280) & 0.3574 (0.0305) & 0.3859 (0.0272) \\
 & 0.5 & 0.7691 (0.0023) & 0.7380 (0.0028) & 0.7684 (0.0019) \\
 & 1.0 & 0.8035 (0.0024) & 0.7868 (0.0023) & 0.8029 (0.0022) \\
 & 1.2 & 0.8146 (0.0030) & 0.7941 (0.0027) & 0.8146 (0.0025) \\
 \bottomrule
\end{tabular*}
\end{table}

Building upon these findings and the theoretical considerations discussed in Section 2, this experiment investigates the impact of deliberately modulating gradient updates—referred to as gradient jamming—on the learning capability of networks using linear activation functions. Specifically, three gradient modulation strategies are assessed: Full-Jamming (FJ), Positive-Jamming (PJ), and Rectangular-Jamming (RJ), with performance outcomes presented in Table~\ref{tab:fasion_lin_jam}. Results indicate that while the Full-Jamming strategy mirrors the ineffectiveness observed in purely linear configurations, restricted modulation approaches such as Positive-Jamming and Rectangular-Jamming markedly enhance the network's ability to learn, achieving accuracy levels, that although not satisfactory, are at least comparable to those observed with standard ReLU activation. This is notably interesting, even when ReLU is enhanced by Batch Normalization (BN) and Gradient Clipping (GC), as shown in Table~\ref{tab:relu_bn_gc}.

\begin{table}[!ht]
\setlength\tabcolsep{0pt}
\caption{Experiment 6 results. Mean test accuracy (with standard deviations in parentheses) of LeNet5 on the Fashion-MNIST dataset, comparing standard ReLU activation with its Heaviside derivative (\textit{ReLU/H}), ReLU activation combined with Batch Normalization (\textit{ReLU/H+BN}), and ReLU activation with Gradient Clipping (\textit{ReLU/H+GC}). Results are presented for varying learning rates (LR: 0.005, 0.01, 0.05, 0.1) with a fixed batch size (BS = 256). Boldface highlights the best-performing configuration for each learning rate.}
\label{tab:relu_bn_gc}
\begin{tabular*}{\textwidth}{@{\extracolsep{\fill}} cc|ccc}
\toprule
\multicolumn{2}{c}{\textbf{Configuration}} & ReLU/H & ReLU/H + BN & ReLU/H + GC \\
\textbf{BS} & \textbf{LR} & & & \\
\midrule
256 & 0.005 & 0.8674 (0.003) & 0.8978 (0.006) & 0.8685 (0.003) \\
    & 0.01  & 0.8858 (0.003) & 0.9039 (0.005) & 0.8814 (0.004) \\
    & 0.05  & \textbf{0.8970 (0.004)} & 0.9017 (0.004) & \textbf{0.8887 (0.005)} \\
    & 0.1   & 0.7716 (0.290) & \textbf{0.9057 (0.002)} & 0.8514 (0.178) \\
\bottomrule
\end{tabular*}
\end{table}

{
\subsection{Summary}

Our series of experiments systematically explored the feasibility and implications of breaking the conventional symmetry between forward and backward propagation in neural network training. A concise summary of each experiment, dataset, configuration details, and their corresponding key findings are provided in Table~\ref{tab:summary}.

\begin{table}[ht!]
\centering
\caption{Summary of experiments and key findings}
\label{tab:summary}
\renewcommand{\arraystretch}{1.2}
\resizebox{\textwidth}{!}{%

\begin{tabular}{@{}c>{\arraybackslash}p{0.25\textwidth}ll>{\arraybackslash}p{0.45\textwidth}@{}}
\toprule
\textbf{\#} & \textbf{Experiment} & \textbf{Dataset} & \textbf{Configurations} & \textbf{Key Findings} \\ 
\midrule
1 & \begin{tabular}[c]{@{}>{\raggedright\arraybackslash}p{0.25\textwidth}@{}}Single Unit Classifier (SUC)\end{tabular} & Ones & 
\begin{tabular}[c]{@{}l@{}}Log/d(Logistic)\\ Log/1\end{tabular} & 
\begin{tabular}[c]{@{}>{\raggedright\arraybackslash}p{0.45\textwidth}@{}}Breaking forward-backward symmetry does not influence linear separability.\end{tabular} \\ 
\midrule
2 & \begin{tabular}[c]{@{}>{\raggedright\arraybackslash}p{0.25\textwidth}@{}}Multilayer Perceptron\end{tabular} & Digits & 
\begin{tabular}[c]{@{}l@{}}Log/d(Log)\\ Log/1\end{tabular} & 
\begin{tabular}[c]{@{}>{\raggedright\arraybackslash}p{0.45\textwidth}@{}}Gradient modulation does not significantly impact standard multilayer network performance.\end{tabular} \\ 
\midrule
3 & \begin{tabular}[c]{@{}>{\raggedright\arraybackslash}p{0.25\textwidth}@{}}Convolutional Neural Networks\end{tabular} & MNIST & 
\begin{tabular}[c]{@{}l@{}}Log/d(Log)\\ Log/1\\ Log/Rect\end{tabular} & 
\begin{tabular}[c]{@{}>{\raggedright\arraybackslash}p{0.45\textwidth}@{}}Gradient modulation achieves comparable or potentially improved performance (e.g., faster convergence).\end{tabular} \\ 
\midrule
4 & \begin{tabular}[c]{@{}>{\raggedright\arraybackslash}p{0.25\textwidth}@{}}CNNs + ReLU\end{tabular} & MNIST & 
\begin{tabular}[c]{@{}l@{}}ReLU/H\\ ReLU/d(Log)\\ ReLU/1\\ ReLU/Triang\\ ReLU/Rect\end{tabular} & 
\begin{tabular}[c]{@{}>{\raggedright\arraybackslash}p{0.45\textwidth}@{}}Findings from Experiment 3 are confirmed for CNNs with ReLU as forward activation.\end{tabular} \\ \midrule
5 & \begin{tabular}[c]{@{}>{\raggedright\arraybackslash}p{0.25\textwidth}@{}}Randomized Gradient Modulation\end{tabular} & MNIST & 
\begin{tabular}[c]{@{}l@{}}Log/FJ\\ ReLU/FJ\\ Linear/FJ\\ Log/PJ\\ ReLU/PJ\\ Linear/PJ\\ Log/RJ\\ ReLU/RJ\\ Linear/RJ\end{tabular} & 
\begin{tabular}[c]{@{}>{\raggedright\arraybackslash}p{0.45\textwidth}@{}}Gradient randomization does not adversely affect learnability; linear networks can also learn effectively.\end{tabular} \\ 
\midrule
6 & \begin{tabular}[c]{@{}>{\raggedright\arraybackslash}p{0.25\textwidth}@{}}Linear Activation\end{tabular} & \begin{tabular}[c]{@{}l@{}}Fashion\\MNIST\end{tabular} & 
\begin{tabular}[c]{@{}l@{}}Linear/1\\ ReLU/H\\ ReLU/H + BN\\ ReLU/H + GC\\ Linear/FJ\\ Linear/PJ\\ Linear/RJ\end{tabular} & 
\begin{tabular}[c]{@{}>{\raggedright\arraybackslash}p{0.45\textwidth}@{}}Trainability of linear networks is confirmed, achieving results comparable to ReLU at some extent.\end{tabular} \\ 
\bottomrule
\end{tabular}%
}
\end{table}

The overall experimental findings strongly support the hypothesis that decoupling the forward activation functions from their corresponding backward gradient computations is feasible and it can offer interesting insights regarding the training of neural networks. Experiments conducted across diverse architectures, including simple Single Unit Classifiers (SUC), Multi-Layer Perceptrons (MLP), and more complex Convolutional Neural Networks (LeNet5), consistently showed that employing gradient modulations independent of the forward activation derivative does not inherently hinder network learning.

Particularly noteworthy are results from experiments involving stochastic gradient modulations (e.g., Full-Jamming, Rectangular-Jamming, and Positive-Jamming), where gradient magnitudes were entirely randomized. Despite these perturbations, the network consistently retained high classification performance, underscoring that the precise gradient magnitude derived from the activation function is largely redundant, provided the overall directional information remains intact.

Furthermore, an important insight is the confirmed trainability of neural networks employing linear activation functions. Although these networks have inherently limited expressiveness compared to their nonlinear counterparts, appropriate gradient modulation techniques, particularly Positive-Jamming and Rectangular-Jamming, significantly improve their performance. This finding challenges conventional assumptions and opens new research avenues for exploiting linear activations under controlled gradient conditions.

These findings advocate strongly for the broader exploration of forward-backward decoupling in neural network training. This opens the door to novel gradient modulation techniques that may further optimize learning stability, convergence speed, and computational efficiency, significantly expanding the design flexibility and practical applicability of neural network architectures.
}

{
\section{Application Example: Binary Neural Networks}

Training Binary Neural Networks (BNNs) poses a well-known challenge due to their use of binary activations and weights, typically taking values in $\{-1,+1\}$. While this binary constraint significantly reduces computational resources and memory requirements, it introduces notable obstacles in gradient-based optimization, primarily because standard backpropagation methods rely on differentiable activation functions.

The training of BNNs can be straightforwardly addressed by employing forward-backward decoupling, which separates forward activation functions from backward gradient computations, thus facilitating gradient approximation in the presence of non-differentiable operations.

\subsection{Preliminary Experiments on BNN Training}

Our experiments focused on a specific subclass of BNNs, specifically networks employing the Heaviside step function as their activation function. The capability to decouple functions during forward and backward propagations provides a foundation that supports the feasibility of training such networks. We evaluated binarized versions of Single-Unit Classifiers (SUC) and LeNet5 architectures.

According to our research objectives, one experiment involved training an SUC using a distinct configuration: the Heaviside step function for forward propagation and either the sigmoid's derivative or a constant function for backward propagation. The results, summarized in Table \ref{tab:bin_exp1}, demonstrate the practicality of this decoupled training method. Extensive evaluations across various network dimensions and configurations, as detailed in Table \ref{tab:bin_exp1}, confirm the robustness and consistency of our approach.

\begin{table}[!ht]
\setlength\tabcolsep{0pt} 
\caption{Single-Unit Classifier with BNN. The table reports mean test accuracies (with standard deviations) for a Binary Neural Network (BNN) trained using the Heaviside step function activation in the classification task from Experiment~1. Columns labeled \textit{Step/d(Log)} and \textit{Step/1} denote two distinct gradient approximation methods. \textbf{Dim} refers to input dimensionality, \textbf{BS} to batch size, and \textbf{LR} to learning rate. Boldface highlights the best result in each row.}
\label{tab:bin_exp1}
\begin{tabular*}{\textwidth}{@{\extracolsep{\fill}} ccc|cc}
\toprule
\multicolumn{3}{c}{\textbf{Configuration}} & \textit{Step/d(Log)} & \textit{Step/1} \\
\textbf{Dim} & \textbf{BS} & \textbf{LR} & & \\
\midrule
2 & 32 & 0.01 & 0.9999 (0.000) & \textbf{0.9999 (0.000)} \\
& & 0.1 & 0.9997 (0.000) & 0.9998 (0.000) \\
& 64 & 0.01 & 0.9999 (0.000) & 0.9999 (0.000) \\
& & 0.1 & \textbf{1.0000 (0.000)} & 0.9996 (0.000) \\
\hline
10 & 32 & 0.01 & 0.9991 (0.001) & \textbf{0.9992 (0.001)} \\
& & 0.1 & \textbf{0.9993 (0.001)} & 0.9986 (0.001) \\
& 64 & 0.01 & 0.9988 (0.001) & 0.9990 (0.001) \\
& & 0.1 & 0.9992 (0.001) & 0.9992 (0.001) \\
\hline
100 & 32 & 0.01 & 0.9855 (0.004) & 0.9894 (0.003) \\
& & 0.1 & \textbf{0.9900 (0.003)} & \textbf{0.9935 (0.002)} \\
& 64 & 0.01 & 0.9517 (0.009) & 0.9891 (0.003) \\
& & 0.1 & 0.9897 (0.003) & 0.9913 (0.002) \\
\bottomrule
\end{tabular*}
\end{table}

To further evaluate the scalability of our approach, we conducted an additional experiment employing a binarized LeNet5 architecture. The results, presented in Table~\ref{tab:bin_exp2}, reinforce our proposition that BNNs equipped with the Heaviside step activation function can be effectively trained using various gradient computation strategies. These findings illustrate the flexibility and robustness of our proposed approach, highlighting its potential applicability to diverse BNN architectures.

\begin{table}[!ht]
\setlength\tabcolsep{0pt} 
\caption{LeNet5 with BNN. This table reports mean test accuracies (with standard deviations) for a convolutional Binary Neural Network (BNN) trained with the Heaviside step activation function on the classification problem from Experiment~3. Four gradient approximation methods are compared: \textit{Step/d(Log)}, \textit{Step/1}, \textit{Step/Triang}, and \textit{Step/Rect}. Results are presented for various batch sizes (BS) and learning rates (LR). Boldface indicates the best performance in each row.}
\label{tab:bin_exp2}
{\small
\begin{tabular*}{\textwidth}{@{\extracolsep{\fill}} cc|cccc}
\toprule
\multicolumn{2}{c}{\textbf{Configuration}} & \textit{Step/d(Log)} & \textit{Step/1} & \textit{Step/Triang} & \textit{Step/Rect} \\
\textbf{BS} & \textbf{LR} & & & & \\
\midrule
256 & 0.01 & 0.8820 (0.003) & 0.7691 (0.030) & 0.8781 (0.003) & 0.9675 (0.001) \\
& 0.1 & \textbf{0.9487 (0.001)} & 0.8428 (0.014) & \textbf{0.9456 (0.002)} & \textbf{0.9817 (0.002)} \\
\hline
512 & 0.01 & 0.8436 (0.003) & 0.7298 (0.054) & 0.8411 (0.002) & 0.9514 (0.001) \\
& 0.1 & 0.9336 (0.002) & \textbf{0.8424 (0.011)} & 0.9308 (0.002) & 0.9801 (0.001) \\
\hline
1024 & 0.01 & 0.7737 (0.004) & 0.6806 (0.028) & 0.7684 (0.003) & 0.9121 (0.002) \\
& 0.1 & 0.9142 (0.001) & 0.8366 (0.012) & 0.9118 (0.001) & 0.9765 (0.002) \\
\bottomrule
\end{tabular*}}
\end{table}

\subsection{Comparison to Existing Literature}

Several approaches have been proposed in the literature to address the challenges encountered when training BNNs. It is useful to position our approach within the context of existing research. Although a thorough exploration of these approaches and their broader implications would be valuable, such an analysis is beyond the scope of this paper and would require more extensive discussion. Therefore, our current goal is primarily to illustrate how our theoretical contributions relate to and complement existing methods, rather than to provide an exhaustive comparison or detailed analysis.

For example, Bengio et al. introduced the Straight-Through Estimator (STE) \cite{bengio2013estimating}, a heuristic technique designed to approximate gradients by replacing non-differentiable operations in the backward pass with differentiable proxies. STE has demonstrated considerable practical efficacy. Our theoretical framework complements this by providing a formal justification for gradient approximation methods, reinforcing and extending the heuristic insights provided by STE.

Similarly, Courbariaux et al. developed BinaryConnect \cite{courbariaux2015binaryconnect}, an approach that retains full-precision weights during gradient updates but utilizes binarized weights during forward propagation, thus achieving improved gradient stability empirically. Our contribution enhances this method by providing a theoretical perspective that supports and deepens the conceptual foundations of BinaryConnect’s empirical approach.

Additionally, Rastegari et al. introduced XNOR-Net \cite{rastegari2016xnor}, which mitigates accuracy degradation associated with quantization by incorporating explicit scaling factors. Our theoretical framework further complements this practical innovation by offering analytical insights into the fundamental aspects of gradient computation, thereby strengthening the theoretical basis for improvements in training stability and accuracy.

Finally, Batch Normalization, proposed by Ioffe and Szegedy \cite{ioffe2015batch}, significantly stabilizes training dynamics and has proven especially beneficial for BNNs. Our theoretical approach aligns naturally with batch normalization by explicitly addressing gradient stability and computation, thus providing additional theoretical support for this widely-used stabilization method.
}

\section{Conclusions}

This study systematically explores the theoretical and practical implications of decoupling forward activations from backward gradient computations in neural network training. Through rigorous mathematical analyses and empirical validations, we uncovered several foundational insights.

Our theoretical propositions have explicitly demonstrated that the direction of gradient updates in neural network training is independent from the derivative of the activation function. This result fundamentally implies that the exact magnitude provided by activation function gradients can be replaced by alternative gradient modulation strategies without compromising the effectiveness of learning. We thoroughly validated this insight across a diverse range of architectures, activation functions, and gradient modulation techniques, underscoring the feasibility and robustness of decoupling the forward and backward phases of neural network training.

Empirically, our experiments demonstrate that simple gradient modulation techniques (such as using a constant value, rectangular functions, or stochastic noise) can yield performance levels comparable to or exceeding traditional gradient calculations derived directly from activation functions. Specifically:

\begin{itemize}
\item \textbf{Single Unit Classifier and MLP architectures} exhibited robustness and swift convergence when gradients were modulated by constant values rather than tied to specific activation derivatives.
\item \textbf{CNN architectures (LeNet5)} showed enhanced resilience and stability when employing gradient modulation techniques, such as rectangular and triangular functions, particularly when using ReLU activation functions.
\item \textbf{Gradient Jamming experiments} confirmed that even significant stochastic perturbations in gradient magnitude do not hinder learning, reinforcing the primacy of gradient direction over amplitude.
\end{itemize}

Crucially, our exploration into Binary Neural Networks (BNNs), presented in Section~5, provided compelling evidence of practical applications of our theoretical insights. BNN training, historically constrained by non-differentiable binary activations, significantly benefited from forward-backward decoupling. Employing alternative backward-phase gradient approximations, such as constant and stochastic gradients, facilitated effective training and competitive performance. Our results clearly demonstrated that gradient magnitude decoupling enabled the training of BNNs with activation functions like the Heaviside step function.

In conclusion, this work highlights a fundamental shift in understanding neural network optimization dynamics. By explicitly breaking the conventional forward-backward symmetry, we unveil a robust theoretical justification and practical framework for employing simplified or alternative gradient computations. This opens avenues for future research into novel training paradigms and architectures, potentially transforming traditional neural network optimization strategies. Future work will expand these insights further by exploring more complex architectures and investigating novel gradient modulation strategies tailored specifically to diverse application scenarios.

\section*{Acknowledgements}
The authors would like to thank Prof.\ Francisco (Paco) Herrera from the University of Granada and Dr.\ Andrés Herrera Poyatos from the University of Oxford for their insightful comments and suggestions.

\bibliographystyle{elsarticle-num} 
\bibliography{references}

\begin{thebibliography}{10}
\expandafter\ifx\csname url\endcsname\relax
  \def\url#1{\texttt{#1}}\fi
\expandafter\ifx\csname urlprefix\endcsname\relax\def\urlprefix{URL }\fi
\expandafter\ifx\csname href\endcsname\relax
  \def\href#1#2{#2} \def\path#1{#1}\fi

\bibitem{Widrow90}
B.~{Widrow}, M.~A. {Lehr}, 30 years of adaptive neural networks: perceptron, madaline, and backpropagation, Proceedings of the IEEE 78~(9) (1990) 1415--1442.

\bibitem{cauchy1847}
A.-L. Cauchy, Méthode générale pour la résolution des systèmes d'équations simultanées, Comptes Rendus 25 (1847) 536--538.

\bibitem{robbins1951}
H.~Robbins, S.~Monro, A stochastic approximation method, The Annals of Mathematical Statistics (1951) 400--407.

\bibitem{widrow1960}
B.~Widrow, M.~E. Hoff, Adaptive "adaline" neuron using chemical "memistors", Stanford Electron. Labs. Tech. Rep. 1553~(2) (1960) 1--26.

\bibitem{rumelhart1986}
D.~E. Rumelhart, G.~E. Hinton, R.~J. Williams, Learning representations by back-propagating errors, Nature 323~(6088) (1986) 533--536.

\bibitem{gradientexplosion}
R.~Pascanu, T.~Mikolov, Y.~Bengio, On the difficulty of training recurrent neural networks, in: Proceedings of the 30th International Conference on International Conference on Machine Learning - Volume 28, ICML'13, JMLR.org, 2013, p. III–1310–III–1318.

\bibitem{sgd}
L.~Bottou, \href{http://leon.bottou.org/papers/bottou-91c}{Stochastic gradient learning in neural networks}, in: Proceedings of Neuro-N\^imes 91, EC2, Nimes, France, 1991.
\newline\urlprefix\url{http://leon.bottou.org/papers/bottou-91c}

\bibitem{gradmomentum}
N.~Qian, On the momentum term in gradient descent learning algorithms, Neural Netw. 12~(1) (1999) 145–151.
\newblock \href {https://doi.org/10.1016/S0893-6080(98)00116-6} {\path{doi:10.1016/S0893-6080(98)00116-6}}.

\bibitem{residualnets}
A.~Veit, M.~Wilber, S.~Belongie, Residual networks behave like ensembles of relatively shallow networks, in: Proceedings of the 30th International Conference on Neural Information Processing Systems, NIPS'16, Curran Associates Inc., Red Hook, NY, USA, 2016, p. 550–558.

\bibitem{10534683}
M.~Mahdavimanshadi, M.~G. Anaraki, M.~Mowlai, Z.~Ahmadirad, A multistage stochastic optimization model for resilient pharmaceutical supply chain in covid-19 pandemic based on patient group priority, in: 2024 Systems and Information Engineering Design Symposium (SIEDS), 2024, pp. 382--387.
\newblock \href {https://doi.org/10.1109/SIEDS61124.2024.10534683} {\path{doi:10.1109/SIEDS61124.2024.10534683}}.

\bibitem{10628468}
S.~Santos, T.~Breaux, T.~Norton, S.~Haghighi, S.~Ghanavati, { Requirements Satisfiability with In-Context Learning }, in: 2024 IEEE 32nd International Requirements Engineering Conference (RE), IEEE Computer Society, Los Alamitos, CA, USA, 2024, pp. 168--179.
\newblock \href {https://doi.org/10.1109/RE59067.2024.00025} {\path{doi:10.1109/RE59067.2024.00025}}.

\bibitem{10870866}
F.~S. Banitaba, S.~Aygun, M.~S. Moghadam, A.~Jalilvand, B.~Li, M.~H. Najafi, Adversarial attack bypass by stochastic computing, IEEE Embedded Systems Letters (2025) 1--1\href {https://doi.org/10.1109/LES.2025.3538552} {\path{doi:10.1109/LES.2025.3538552}}.

\bibitem{HFree}
J.~Martens, Deep learning via hessian-free optimization, in: Proceedings of the 27th International Conference on International Conference on Machine Learning, ICML'10, Omnipress, Madison, WI, USA, 2010, p. 735–742.

\bibitem{kroneckerfact}
J.~Martens, R.~Grosse, Optimizing neural networks with kronecker-factored approximate curvature, ICML'15, JMLR.org, 2015, p. 2408–2417.

\bibitem{10.1016/j.neucom.2017.01.010}
M.~Ebadi, A.~Hosseini, M.~Hosseini, A projection type steepest descent neural network for solving a class of nonsmooth optimization problems, Neurocomput. 235~(C) (2017) 164–181.
\newblock \href {https://doi.org/10.1016/j.neucom.2017.01.010} {\path{doi:10.1016/j.neucom.2017.01.010}}.

\bibitem{kingma2014adam}
D.~P. Kingma, J.~Ba, Adam: A method for stochastic optimization, in: Proceedings of the 3rd International Conference on Learning Representations (ICLR), 2015.
\newblock \href {http://arxiv.org/abs/1412.6980} {\path{arXiv:1412.6980}}.

\bibitem{duchi2011adaptive}
J.~Duchi, E.~Hazan, Y.~Singer, Adaptive subgradient methods for online learning and stochastic optimization, J. Mach. Learn. Res. 12 (2011) 2121–2159.

\bibitem{tieleman2012lecture}
T.~Tieleman, G.~Hinton, \href{https://www.cs.toronto.edu/~tijmen/csc321/slides/lecture_slides_lec6.pdf}{Lecture 6.5 - rmsprop: Divide the gradient by a running average of its recent magnitude}, COURSERA: Neural Networks for Machine Learning (2012).
\newline\urlprefix\url{https://www.cs.toronto.edu/~tijmen/csc321/slides/lecture_slides_lec6.pdf}

\bibitem{nesterov1983method}
Y.~Nesterov, A method for solving the convex programming problem with convergence rate $o(1/k^2)$, Doklady Akademii Nauk SSSR 269~(3) (1983) 543--547.

\bibitem{zeiler2012adadelta}
M.~D. Zeiler, Adadelta: An adaptive learning rate method, arXiv preprint arXiv:1212.5701 (2012).
\newblock \href {http://arxiv.org/abs/1212.5701} {\path{arXiv:1212.5701}}.

\bibitem{neuron_saturation}
X.~Wang, Z.~Tang, H.~Tamura, M.~Ishii, W.~Sun, An improved backpropagation algorithm to avoid the local minima problem, Neurocomputing 56 (2004) 455--460.
\newblock \href {https://doi.org/10.1016/j.neucom.2003.08.006} {\path{doi:10.1016/j.neucom.2003.08.006}}.

\bibitem{local_minima}
M.~Gori, A.~Tesi, On the problem of local minima in backpropagation, IEEE Transactions on Pattern Analysis and Machine Intelligence 14~(1) (1992) 76--86.
\newblock \href {https://doi.org/10.1109/34.107014} {\path{doi:10.1109/34.107014}}.

\bibitem{gradcentralization}
H.~Yong, J.~Huang, X.~Hua, L.~Zhang, Gradient centralization: A new optimization technique for deep neural networks, arXiv preprint arXiv:2004.01461 (2020).
\newblock \href {http://arxiv.org/abs/2004.01461} {\path{arXiv:2004.01461}}.

\bibitem{weightnormalization}
T.~Salimans, D.~P. Kingma, Weight normalization: a simple reparameterization to accelerate training of deep neural networks, NIPS'16, Curran Associates Inc., Red Hook, NY, USA, 2016, p. 901–909.

\bibitem{ioffe2015batch}
S.~Ioffe, C.~Szegedy, Batch normalization: Accelerating deep network training by reducing internal covariate shift, in: Proceedings of the 32nd International Conference on Machine Learning (ICML), 2015, pp. 448--456.
\newblock \href {http://arxiv.org/abs/1502.03167} {\path{arXiv:1502.03167}}.

\bibitem{bnSmooth}
S.~Santurkar, D.~Tsipras, A.~Ilyas, A.~Madry, How does batch normalization help optimization?, in: Proceedings of the 32nd International Conference on Neural Information Processing Systems, NIPS’18, Curran Associates Inc., Red Hook, NY, USA, 2018, p. 2488–2498.

\bibitem{instancenormalization}
D.~Ulyanov, A.~Vedaldi, V.~Lempitsky, Improved texture networks: Maximizing quality and diversity in feed-forward stylization and texture synthesis, in: 2017 IEEE Conference on Computer Vision and Pattern Recognition (CVPR), 2017, pp. 4105--4113.
\newblock \href {https://doi.org/10.1109/CVPR.2017.437} {\path{doi:10.1109/CVPR.2017.437}}.

\bibitem{adaIN}
X.~Huang, S.~Belongie, Arbitrary style transfer in real-time with adaptive instance normalization (2017).
\newblock \href {http://arxiv.org/abs/1703.06868} {\path{arXiv:1703.06868}}.

\bibitem{layernormalization}
J.~L. Ba, J.~R. Kiros, G.~E. Hinton, Layer normalization (2016).
\newblock \href {http://arxiv.org/abs/1607.06450} {\path{arXiv:1607.06450}}.

\bibitem{groupnormalization}
Y.~Wu, K.~He, Group normalization (2018).
\newblock \href {http://arxiv.org/abs/1803.08494} {\path{arXiv:1803.08494}}.

\bibitem{weightstandardization}
S.~Qiao, H.~Wang, C.~Liu, W.~Shen, A.~Yuille, Micro-batch training with batch-channel normalization and weight standardization (2020).
\newblock \href {http://arxiv.org/abs/1903.10520} {\path{arXiv:1903.10520}}.

\bibitem{weightcentralization}
L.~Huang, X.~Liu, Y.~Liu, B.~Lang, D.~Tao, Centered weight normalization in accelerating training of deep neural networks, in: 2017 IEEE International Conference on Computer Vision (ICCV), 2017, pp. 2822--2830.
\newblock \href {https://doi.org/10.1109/ICCV.2017.305} {\path{doi:10.1109/ICCV.2017.305}}.

\bibitem{sutskever2013importance}
I.~Sutskever, J.~Martens, G.~Dahl, G.~Hinton, On the importance of initialization and momentum in deep learning, ICML'13, JMLR.org, 2013, p. III–1139–III–1147.

\bibitem{pascanu2013difficulty}
R.~Pascanu, T.~Mikolov, Y.~Bengio, On the difficulty of training recurrent neural networks, in: International conference on machine learning, 2013, pp. 1310--1318.

\bibitem{nvidia_mixed_precision}
NVIDIA, Mixed precision training, Online: \url{https://docs.nvidia.com/deeplearning/sdk/mixed-precision-training/index.html} (2018).

\bibitem{neelakantan2015adding}
A.~Neelakantan, L.~Vilnis, Q.~V. Le, I.~Sutskever, L.~Kaiser, K.~Kurach, J.~Martens, Adding gradient noise improves learning for very deep networks (2015).
\newblock \href {http://arxiv.org/abs/1511.06807} {\path{arXiv:1511.06807}}.

\bibitem{digits}
E.~Alpaydin, C.~Kaynak, \href{https://doi.org/10.24432/C50P49}{{Optical Recognition of Handwritten Digits}}, UCI Machine Learning Repository (1998).
\newblock \href {https://doi.org/10.24432/C50P49} {\path{doi:10.24432/C50P49}}.
\newline\urlprefix\url{https://doi.org/10.24432/C50P49}

\bibitem{mnist}
Y.~LeCun, C.~Cortes, C.~J. Burges, \href{http://yann.lecun.com/exdb/mnist/}{The mnist database of handwritten digits} (1998).
\newline\urlprefix\url{http://yann.lecun.com/exdb/mnist/}

\bibitem{fashion-mnist}
H.~Xiao, K.~Rasul, R.~Vollgraf, Fashion-mnist: a novel image dataset for benchmarking machine learning algorithms (2017).
\newblock \href {http://arxiv.org/abs/1708.07747} {\path{arXiv:1708.07747}}.

\bibitem{bengio2013estimating}
Y.~Bengio, N.~Léonard, A.~Courville, Estimating or propagating gradients through stochastic neurons for conditional computation (2013).
\newblock \href {http://arxiv.org/abs/1308.3432} {\path{arXiv:1308.3432}}.

\bibitem{courbariaux2015binaryconnect}
M.~Courbariaux, Y.~Bengio, J.-P. David, Binaryconnect: Training deep neural networks with binary weights during propagations, in: Advances in neural information processing systems, 2015.

\bibitem{rastegari2016xnor}
M.~Rastegari, V.~Ordonez, J.~Redmon, A.~Farhadi, Xnor-net: Imagenet classification using binary convolutional neural networks, in: European Conference on Computer Vision, 2016.

\end{thebibliography}





\end{document}